\newcounter{thmcount}
\newtheorem{conjecture}[thmcount]{Conjecture}
\newtheorem{corollary}[thmcount]{Corollary}
\newtheorem{definition}[thmcount]{Definition}
\newtheorem{lemma}[thmcount]{Lemma}
\newtheorem{proposition}[thmcount]{Proposition}
\newtheorem{theorem}[thmcount]{Theorem}
\newenvironment{proof}{\begin{trivlist} \item[]
{\bf Proof.}}{\nolinebreak
\hfill \rule{2mm}{2mm} \end{trivlist}}
\numberwithin{equation}{section}
\begin{document}
\begin{titlepage}
	\centering
	{\Large\scshape On the Identification and Mitigation of Weaknesses in the
	Knowledge Gradient Policy for Multi-Armed Bandits \par}
	\vspace{1cm}
 	{\scshape James Edwards}\\
 	{\itshape STOR-i Centre for Doctoral Training, Lancaster University,
 	Lancaster,LA1 4YF, UK}
 	\texttt{j.edwards4@lancaster.ac.uk}\\
 	\vspace{0.2cm}
 	{\scshape Paul Fearnhead}\\
 	{\itshape Department of Mathematics and Statistics, Lancaster University,
 	Lancaster, LA1 4YF, UK}
 	\texttt{p.fearnhead@lancaster.ac.uk}\\
	\vspace{0.2cm}
	{\scshape Kevin Glazebrook}\\
	{\itshape Department of Management Science, Lancaster University, LA1 4YX,
	 UK}
	\texttt{k.glazebrook@lancaster.ac.uk}\\

\begin{abstract}
The Knowledge Gradient (KG) policy was originally proposed for offline ranking
and selection problems but has recently been adapted for use in online
decision-making in general and multi-armed bandit problems (MABs) in particular.
We study its use in a class of exponential family MABs and identify weaknesses,
including a propensity to take actions which are dominated with respect to both
exploitation and exploration. We propose variants of KG which avoid such errors.
These new policies include an index heuristic which deploys a KG approach to
develop an approximation to the Gittins index. A numerical study shows this
policy to perform well over a range of MABs including those for which index
policies are not optimal. While KG does not take dominated actions when bandits
are Gaussian, it fails to be index consistent and appears not to enjoy a
performance advantage over competitor policies when arms are correlated to
compensate for its greater computational demands.
\end{abstract}
\vfill
\end{titlepage}

\section{Introduction}
\label{sec:intro}
Bayes sequential decision problems (BSDPs) constitute a large class of
optimisation problems in which decisions (i) are made in time sequence and
(ii) impact the system of interest in ways which may be not known or only
partially known. Moreover, it is possible to learn about unknown system
features by taking actions and observing outcomes. This learning is modelled
using a Bayesian framework. One important subdivision of BSDPs is between
offline and online problems. In offline problems some decision is required
at the end of a time horizon and the purpose of actions through the horizon
is to accumulate information to support effective decision-making at its
end. In online problems each action can bring an immediate payoff in
addition to yielding information which may be useful for subsequent
decisions. This paper is concerned with a particular class of online
problems although it should be noted that some of the solution methods have
their origins in offline contexts.

The sequential nature of the problems coupled with imperfect system
knowledge means that decisions cannot be evaluated alone. Effective
decision-making needs to account for possible future actions and associated
outcomes. While standard solution methods such as stochastic dynamic
programming can in principle be used, in practice they are
computationally impractical and heuristic approaches are generally required.
One such approach is the knowledge gradient (KG) heuristic.
\citet{gupta1996bayesian} originated KG for application to offline ranking and
selection problems. After a period of time in which it appears to have been
studied little, \citet{frazier2008knowledge} expanded on KG's
theoretical properties. It was adapted for use in online decision-making by
\citet{ryzhov2012knowledge} who tested it on multi-armed bandits (MABs) with
Gaussian rewards. They found that it performed well against an index policy
which utilised an analytical approximation to the Gittins index; see
\citet{gittins2011multi}. \citet{ryzhov2010robustness} have investigated the
use of KG to solve MABs with exponentially distributed rewards while
\citet{powell2012optimal} give versions for Bernoulli, Poisson and uniform
rewards, though without testing performance. They propose the method as an
approach to online learning problems quite generally, with particular emphasis
on its ability to handle correlated arms. Initial empirical results were
promising but only encompassed a limited range of models. This paper utilises an
important sub-class of MABs to explore properties of the KG heuristic for
online use. Our investigation reveals weaknesses in the KG approach. We
\textit{inter alia }propose modifications to mitigate these weaknesses.

In Section \ref{sec:expMAB} we describe a class of exponential family MABs
that we will focus on, together with the KG policy for them. Our main analytical
results revealing weaknesses in KG are given in Section \ref{sec:dom}. Methods
aimed at correcting these KG errors are discussed in Section \ref{sec:newpols}
and are evaluated in a computational study which is reported in Section
\ref{sec:comp1}. In this study a range of proposals are assessed for Bernoulli
and Exponential versions of our MAB models. Gaussian MABs have characteristics
which give the operation of KG distinctive features. The issues for such models are
discussed in Section \ref{sec:gauss}, together with an associated computational
study in Section \ref{sec:comp2}. Section \ref{sec:conclusion} identifies some
key conclusions to be drawn.

\section{A class of exponential family multi-armed bandits}
\label{sec:expMAB}
\subsection{Multi-Armed Bandit Problems for Exponential Families}
\label{sec:MAB}
We consider multi-armed bandits (MABs) with geometric discounting operating
over a time horizon $T\in\mathbb{Z}^{+}\cup\{\infty\} $ which may
be finite or not. Rewards are drawn from exponential families with independent
conjugate priors for the unknown parameters. More specifically the set up is as follows:

\begin{enumerate}
\item At each decision time $t\in\{ 0,1,\ldots ,T-1\}$ an
action $a\in\{1,\ldots,k\} $ is taken. Associated with each
action, $a$, is an (unknown) parameter, which we denote as $\theta_a$. Action
$a$ (\textit{pulling arm }$a$) yields a reward which is drawn from the density
(relative to some $\sigma$-finite measure on $\mathbb{R}$)
\begin{equation}
f(y\mid \theta _{a}) =e^{\theta_{a}y-\psi ( \theta
_{a})}, y\in \Omega,\theta _{a}\in \Theta, 
\end{equation}%
where $\Omega \subseteq\mathbb{R}$ is the support of $f,$ $\psi$ is a cumulant
generating function and parameter space $\Theta \subseteq\mathbb{R}$ is such
that $\psi\left(\theta \right)<\infty,\;\forall\theta \in \Theta$. Reward
distributions are either discrete or absolutely continuous, with $%
\Omega $ a discrete or continuous interval $\left[ \min \Omega ,\max \Omega %
\right] $ where $-\infty \leq \min \Omega <\max \Omega \leq \infty $. We shall take a
Bayesian approach to learning about the parameters $\theta_a$.

\item We assume independent conjugate priors for the unknown $\theta_{a}$
with Lebesgue densities given by 
\begin{equation}
g( \theta _{a}\mid \Sigma _{a},n_{a}) \propto e^{\Sigma
_{a}\theta _{a}-n_{a}\psi(\theta _{a})},\theta _{a}\in \Theta
,1\leq a\leq k,
\end{equation}%
where $\Sigma_a$ and $n_a$ are known hyper-parameters. This then defines a
predictive density
\begin{equation}
p(y\mid \Sigma _{a},n_{a})=\int_{\Theta }f( y\mid \theta _{a}) g(
\theta _{a}\mid \Sigma _{a},n_{a}) d\theta _{a} 
\end{equation}%
which has mean $\frac{\Sigma _{a}}{n_{a}}$. Bayesian updating following an
observed reward $y$ on arm $a$ produces a posterior
$p(\theta_a|y)=g(\theta_a|\Sigma_a+y,n_a+1)$. Thus at each time we can define an
arm's \emph{informational state} as the current value of hyper-parameters
$\Sigma_a,n_a$, such that the posterior for $\theta_a$ given the observations to
date is $g(\theta_a|\Sigma_a,n_a)$. The posterior for each arm is
independent so the informational states of arms not pulled at $t$ are unchanged.

\item The total return when reward $y_{t}$ is received at time $t$ is given
by $\sum\nolimits_{t=0}^{T-1}\gamma ^{t}y_{t},$ where discount rate $\gamma 
$ satisfies either $0<\gamma \leq 1$ when $T<\infty $ or $0<\gamma <1$ when $%
T=\infty $. The objective is to design a policy, a rule for choosing
actions, to maximise the \textit{Bayes' return, }namely the total return
averaged over both realisations of the system and prior information.
\end{enumerate}

The current informational state for all arms, denoted $(\mathbf{\Sigma
},\mathbf{n)}=\left\{ (\Sigma _{a},n_{a}),1\leq a\leq k\right\} $ summarises all the information in the
observations up to the current time. 

When $0<\gamma <1,T=\infty $ the Bayes' return is maximised by the
\textit{Gittins Index }(GI) policy, see \citet{gittins2011multi}. This operates
by choosing, in the current state, any action $a$, satisfying
\begin{equation}
\nu ^{GI}(\Sigma _{a},n_{a},\gamma )=\max_{1\leq b\leq k}\nu ^{GI}(\Sigma
_{b},n_{b},\gamma )\,, 
\end{equation}%
where $\nu ^{GI}$ is the \textit{Gittins index. }We describe Gittins indices
in Section \ref{sec:newpols} along with versions adapted for use in problems with
$0<\gamma \leq 1,T<\infty $. Given the challenge of computing Gittins indices and the
general intractability of deploying dynamic programming to solve
online problems, the prime interest is in the development of heuristic
policies which are easy to compute and which come close to being return
maximising.

\subsection{The Knowledge Gradient Heuristic}
\label{sec:KG}
The \textit{Knowledge Gradient policy} KG is a heuristic which bases
action choices both on immediate returns $\frac{\Sigma _{a}}{n_{a}}$ and
also on the changes in informational state which flow from a single observed
reward. It is generally fast to compute. To understand how KG works for
MABs suppose that the decision time is $t$ and that the system is in
information state $(\mathbf{\Sigma,n}) $ then. The current
decision is taken to be the last opportunity to learn and so from time $t+1$
through to the end of the horizon whichever arm has the highest mean reward
following the observed reward at $t$ will be pulled at all subsequent times.
With this informational constraint, the best arm to pull at $t$ (and the
action mandated by KG in state $\left( \mathbf{\Sigma ,n}\right) $) is
given by 
\begin{equation}
A^{KG}( \mathbf{\Sigma ,n,}t) =\arg \max_{a}\left\{ \frac{\Sigma
_{a}}{n_{a}}+H( \gamma ,s) \max_{1\leq b\leq k}E\left( \frac{%
\Sigma _{b}+I_{a}Y}{n_{b}+I_{a}}\mid \mathbf{\Sigma ,n,}a\right) \right\} , 
\end{equation}%
where $Y$ is the observed reward at $t$ and $I_{a}$ is an indicator taking
the value $1$ if action $a$ is taken at $t,$ $0$ otherwise. The conditioning
indicates that the reward $Y$ depends upon the current state $\left( \mathbf{%
\Sigma ,n}\right) $ and the choice of action $a$. The constant $H(\gamma,s)$
is a suitable multiplier of the mean return of the best arm at $t+1$ to achieve
an accumulation of rewards for the remainder of the horizon (denoted here by
$s=T-t$). It is given by
\begin{equation}
H( \gamma ,s)=
\begin{cases}
\frac{\gamma(1-\gamma ^{s-1})}{1-\gamma }&\mbox{if }0<\gamma<1,T<\infty\\
\frac{\gamma}{1-\gamma} & \mbox{if } 0<\gamma<1,T=\infty\\
s-1  & \mbox{if }  \gamma =1,T<\infty.
\end{cases}
\end{equation}

KG can be characterised as the policy resulting from the application of a single
policy improvement step to a policy which always pulls an arm
with the highest prior mean return throughout. Note that for $\left( \gamma
,T\right) \in ( 0,1) \times \mathbb{Z}^{+},$ $H( \gamma ,s) $ is increasing in both $\gamma $
($T$ fixed) and in $T$ ($\gamma $ fixed). For any sequence of $\left( \gamma
,T\right) $ values approaching the limit $\left( 1,\infty \right) $ in a manner which is
co-ordinatewise increasing, the value of $H( \gamma ,s) $ diverges
to infinity. This fact is utilised heavily in Section \ref{sec:dom}.

We now develop an equivalent characterisation of KG based on
\citet{ryzhov2012knowledge} which will be more convenient for what follows. We
firstly develop an expression for the change in the maximal mean reward
available from any arm when action $a$ is taken in state $\left( \mathbf{\Sigma
,n}\right) $. We write
\begin{equation}
\nu _{a}^{KG}\left( \mathbf{\Sigma ,n}\right) =E\left\{ \max_{1\leq b\leq
k}\mu _{b}^{+1}-\max_{1\leq b\leq k}\mu _{b}\mid \mathbf{\Sigma ,n,}%
a\right\} , 
\end{equation}%
where $\mu _{b}$ is the current arm $b$ mean return $\frac{\Sigma _{b}}{n_{b}%
}$ and $\mu _{b}^{+1}$ is the mean return available from arm $b$ at the next
time conditional on the observed reward resulting from action $a$. Please
note that $\mu _{b}^{+1}$ is a random variable. It is straightforward to
show that 
\begin{equation}
A^{KG}( \mathbf{\Sigma ,n,}t) =\arg \max_{1\leq a\leq k}\left\{
\mu _{a}+H( \gamma ,s) \nu _{a}^{KG}( \mathbf{\Sigma ,n}%
) \right\}. 
\end{equation}

Hence KG gives a score to each arm and chooses the arm of highest score.
It is not an index policy because the score depends upon the informational
state of arms other than the one being scored. That said, there are
similarities between KG scores and Gittins indices. The Gittins index $\nu
^{GI}(\Sigma _{a},n_{a})$ exceeds the mean return $\frac{\Sigma _{a}}{n_{a}}$
by an amount termed the \textit{uncertainty }or \textit{learning bonus}.
This bonus can be seen as a measure of the value of exploration in choosing
arm $a$. The quantity $H( \gamma ,s) \nu _{a}^{KG}\left( 
\mathbf{\Sigma ,n}\right) $ in the KG score is an alternative estimate of
the learning bonus. Assessing the accuracy of this estimate will give an
indication of the strengths and weaknesses of the policy.

\subsection{Dominated arms}

In our discussion of the deficiencies of the KG policy in the next section
we shall focus, among other things, on its propensity to pull arms which are
suboptimal to another arm with respect to \textit{both} exploitation and
exploration. Hence there is an alternative which is better both from an
immediate return and from an informational perspective. We shall call such
arms \textit{dominated}. We begin our discussion with a result concerning
properties of Gittins indices established by \citet{yu2011structural}.

\begin{theorem}
The Gittins index $\nu ^{GI}\left( c\Sigma ,cn,\gamma \right)$ is decreasing in 
$c\in\mathbb{R}^{+}$ for any fixed $\Sigma ,n,\gamma$ and is increasing
in $\Sigma $ for any fixed $c,n,\gamma $.
\end{theorem}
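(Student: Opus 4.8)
The plan is to work with the retirement (calibration) formulation of the Gittins index. For a single arm in state $(\Sigma,n)$ set
\begin{equation}
R(\Sigma,n,\gamma,\lambda)=\sup_{\tau\geq 1}E\left[\sum_{t=0}^{\tau-1}\gamma^{t}(Y_{t}-\lambda)\mid\Sigma,n\right],
\end{equation}
the optimal value of pulling the arm, with Bayesian updating of its state, against a retirement reward rate $\lambda$, the supremum being over stopping rules $\tau\geq 1$. Standard arguments give that $R$ is continuous and strictly decreasing in $\lambda$ on the relevant range, that it obeys the Bellman recursion $R(\Sigma,n,\gamma,\lambda)=\Sigma/n-\lambda+\gamma\,E\!\left[R(\Sigma+Y,n+1,\gamma,\lambda)^{+}\mid\Sigma,n\right]$, and that $\nu^{GI}(\Sigma,n,\gamma)$ is its unique zero in $\lambda$. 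It therefore suffices to show, for each fixed $\lambda$ and $\gamma$, that $R(\cdot,n,\gamma,\lambda)$ is nondecreasing in $\Sigma$ and that $c\mapsto R(c\Sigma,cn,\gamma,\lambda)$ is nonincreasing; monotonicity of the zero in the stated directions is then immediate. I would establish the required properties of $R$ on the finite-horizon truncations $R_{m}\uparrow R$ produced by value iteration ($R_{0}=\Sigma/n-\lambda$, $R_{m+1}=\Sigma/n-\lambda+\gamma E[R_{m}(\Sigma+Y,n+1,\gamma,\lambda)^{+}\mid\Sigma,n]$) and pass to the limit.

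Monotonicity in $\Sigma$ is the easy half. The conjugate prior $g(\theta\mid\Sigma,n)$ has likelihood ratio increasing in $\theta$ as $\Sigma$ increases, hence is stochastically increasing in $\Sigma$ for fixed $n$; since the exponential-family reward $f(y\mid\theta)$ likewise has monotone likelihood ratio in $y$, the predictive $p(\cdot\mid\Sigma,n)$ is stochastically increasing in $\Sigma$, and Bayesian updating merely replaces $\Sigma$ by $\Sigma+Y$. Inducting on $m$ with hypothesis ``$R_{m}(\cdot,n,\gamma,\lambda)$ nondecreasing in $\Sigma$ for every $n$'', the inductive step closes because $x\mapsto x^{+}$ is nondecreasing, so $y\mapsto R_{m}(\Sigma+y,n+1,\gamma,\lambda)^{+}$ is nondecreasing and pointwise nondecreasing in $\Sigma$, while the driving $Y$ is stochastically larger; hence $R_{m+1}$, and in the limit $R$, is nondecreasing in $\Sigma$. (The same fact can be read off a monotone coupling of the two reward streams, but the value-iteration argument avoids the usual adaptedness nuisance.)

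Monotonicity in $c$ is the crux, and where I expect the real work to lie: the map $(\Sigma,n)\mapsto(c\Sigma,cn)$ leaves the predictive mean $\Sigma/n$ fixed, so the statement is purely that the value of information declines as the prior sharpens ($c$ grows). The engine is that, along this scaling, the predictive law $p(\cdot\mid c\Sigma,cn)$ is a mean-preserving contraction of itself as $c$ increases, so $E_{Y\sim p(\cdot\mid c\Sigma,cn)}[\phi(Y)]$ is nonincreasing in $c$ for every convex $\phi$. Exploiting this through the Bellman step requires a strengthened induction hypothesis on $R_{m}$: convexity as well as monotonicity in $\Sigma$ for each fixed $n$, together with the appropriate monotonicity of $R_{m}$ in $n$. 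The difficulty is that one update from a scaled state $(c\Sigma,cn)$ returns not a scaled state but $(c\Sigma+Y,cn+1)$; closing the induction then demands combining convexity of $R_{m}$ in its first slot — so that $y\mapsto R_{m}(c\Sigma+y,cn+1,\gamma,\lambda)^{+}$ is convex and the mean-preserving-spread comparison applies — with a precise comparison in the $n$-coordinate to absorb the mismatch between $cn+1$ and the pure scaling, all while re-deriving convexity and $n$-monotonicity of $R_{m+1}$ so the bundle is self-propagating. Getting this package of monotonicity and convexity statements to close simultaneously under the recursion is the technical heart; once it is in place, $m\to\infty$ transfers the properties to $R$, and reading off its zero in $\lambda$ yields both assertions of the theorem.
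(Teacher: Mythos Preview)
The paper does not prove this theorem at all: it is quoted from Yu (2011) and used as a black box, so there is no in-paper argument to compare against. Your retirement/calibration set-up and the value-iteration scheme $R_m\uparrow R$ are the natural vehicle, and the $\Sigma$-monotonicity half goes through exactly as you describe, via MLR of the conjugate prior and stochastic monotonicity of the predictive.

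For the $c$-monotonicity, however, you have correctly located the obstruction but not removed it, and as written the proposal is a diagnosis rather than a proof. Two concrete gaps. First, the convexity of $R_m$ in its first argument that you want to carry through the recursion is not automatic: in
\[
R_{m+1}(\Sigma,n,\gamma,\lambda)=\frac{\Sigma}{n}-\lambda+\gamma\,E_{Y\sim p(\cdot\mid\Sigma,n)}\!\left[R_m(\Sigma+Y,n+1,\gamma,\lambda)^{+}\right]
\]
the law of $Y$ itself depends on $\Sigma$, so convexity of the integrand in $y$ does not deliver convexity of the expectation as a function of $\Sigma$; an additional structural property of the exponential-family predictive is needed, and you have not supplied one. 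Second, the mismatch you flag---that one step from $(c\Sigma,cn)$ lands at $(c\Sigma+Y,cn+1)$, off the ray through $(\Sigma,n)$---means the inductive comparison in $c$ must simultaneously invoke monotonicity of $R_m$ in $n$ at fixed posterior mean, convexity of $R_m$ in that mean, and a convex-order comparison of the one-step posterior means as the starting $n$ varies. Showing that this trio is self-propagating under the Bellman step is the entire content of the result; your last paragraph names the package and then assumes it is ``in place''. That is precisely what Yu's argument supplies, using likelihood-ratio and dispersive-order properties specific to natural exponential families with conjugate priors (the paper itself later invokes one of these, that $Y\mid c\Sigma,cn$ is decreasing in convex order in $c$, again citing Yu). Without importing or reproving those lemmas, the induction does not close.
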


We proceed to a simple corollary whose proof is omitted. The statement of
the result requires the following definition.

\begin{definition}
An arm in state $\left( \Sigma ,n\right) $ dominates one in state
$\left(\Sigma',n'\right) $ if and only if $\frac{\Sigma
}{n}>\frac{\Sigma'}{n'}$ and $n<n'$.
\end{definition}

\begin{corollary}
The GI policy never chooses dominated arms.
\end{corollary}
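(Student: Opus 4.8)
The plan is to read the result straight off the theorem of \citet{yu2011structural}, by comparing the Gittins index of a dominated arm with that of an arm which dominates it and routing the comparison through a convenient intermediate informational state. Suppose an arm in state $(\Sigma,n)$ dominates an arm in state $(\Sigma',n')$, so that $\Sigma/n>\Sigma'/n'$ and $n<n'$. Since the GI policy always pulls an arm of maximal Gittins index, it is enough to show $\nu^{GI}(\Sigma,n,\gamma)>\nu^{GI}(\Sigma',n',\gamma)$: the dominated arm's index is then strictly below the current maximum, so it is never selected (in particular it cannot appear in a tie at the top).

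First I would introduce the intermediate state $(\tilde\Sigma,n')$ with $\tilde\Sigma=(n'/n)\Sigma$, i.e.\ the state obtained from $(\Sigma,n)$ under the scaling $(\Sigma,n)\mapsto(c\Sigma,cn)$ with $c=n'/n>1$. The first part of the theorem — monotone decrease of $\nu^{GI}(c\Sigma,cn,\gamma)$ in $c$ — then gives $\nu^{GI}(\tilde\Sigma,n',\gamma)<\nu^{GI}(\Sigma,n,\gamma)$, because $c>1$. Next, observe that $\tilde\Sigma=n'(\Sigma/n)>n'(\Sigma'/n')=\Sigma'$, so the states $(\tilde\Sigma,n')$ and $(\Sigma',n')$ differ only in their first coordinate; applying the second part of the theorem — monotone increase of the index in $\Sigma$ for fixed $n$ (and $\gamma$) — yields $\nu^{GI}(\tilde\Sigma,n',\gamma)>\nu^{GI}(\Sigma',n',\gamma)$. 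Chaining the two inequalities gives $\nu^{GI}(\Sigma,n,\gamma)>\nu^{GI}(\Sigma',n',\gamma)$, as required. An equally short route runs instead through the state $((n/n')\Sigma',n)$, which shares the dominated arm's mean and the dominating arm's sample size.

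The only delicate point — and the one I would flag as the main obstacle — is strictness: the claim that GI \emph{never} chooses a dominated arm requires the two monotonicities in the theorem to be strict on the strict inputs $c=n'/n>1$ and $\tilde\Sigma>\Sigma'$ arising here, so that the final inequality is strict rather than merely weak. This is where I would check the precise formulation (and proof) in \citet{yu2011structural}; granting that, the corollary follows in two lines, which is presumably why its proof is omitted.
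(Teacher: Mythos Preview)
Your argument is correct and is exactly the two-step comparison the paper has in mind: the proof in the paper is explicitly omitted as ``simple,'' and the intended route is precisely to combine the two monotonicities of Theorem~1 (Yu) via an intermediate state sharing one arm's mean and the other's sample size, just as you do. Your flag about strictness is the right caveat to raise; the paper does not address it, and as stated Theorem~1 only gives weak monotonicity, so the conclusion one obtains cleanly is $\nu^{GI}(\Sigma,n,\gamma)\geq\nu^{GI}(\Sigma',n',\gamma)$, which already ensures that the GI policy \emph{need} never choose a dominated arm (some arm of maximal index is always non-dominated).
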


Hence pulling dominated arms can never be optimal for infinite horizon MABs.
We shall refer to the pulling of a dominated arm as a \textit{dominated
action }in what follows. Exploration of the conditions under which KG
chooses dominated actions is a route to an understanding of its deficiencies
and prepares us to propose modifications to it which achieve improved
performance. This is the subject matter of the following two sections.

\section{The KG policy and dominated actions}
\label{sec:dom}
\subsection{Conditions for the choice of dominated actions under KG}

This section will elucidate sufficient conditions for the KG policy to
choose dominated arms. A key issue here is that the quantity $\nu _{a}^{KG}$
(and hence the KG learning bonus) can equal zero in cases where the 
\textit{true }learning bonus related to a pull of arm $a$ may be far from
zero. \citet{ryzhov2012knowledge} stated that $\nu _{a}^{KG}>0$. However,
crucially, that paper only considered Gaussian bandits. The next lemma is
fundamental to the succeeding arguments. It says that, for sufficiently
high $\gamma$, the KG policy will choose the arm with the largest $\nu^{KG}$.

\begin{lemma}   \label{lem:dom1}
$\forall \mathbf{\Sigma },\mathbf{n},t$ for which $max_{a}\nu
_{a}^{KG}\left( \mathbf{\Sigma },\mathbf{n}\right) >0$ $\exists \gamma
^{\ast },T^{\ast }$ such that $\gamma >\gamma ^{\ast },T>T^{\ast
}\Rightarrow A^{KG}\left( \mathbf{\Sigma },\mathbf{n},t\right) =\arg
\max_{a}\nu _{a}^{KG}\left( \mathbf{\Sigma },\mathbf{n}\right)$.
\end{lemma}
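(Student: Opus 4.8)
The plan is to work with the score representation of the KG policy, $A^{KG}(\mathbf{\Sigma},\mathbf{n},t)=\arg\max_{a}\bigl\{\mu_{a}+H(\gamma,s)\,\nu_{a}^{KG}(\mathbf{\Sigma},\mathbf{n})\bigr\}$, and to exploit the fact, recorded just after the definition of $H$, that $H(\gamma,s)$ is increasing in each of $\gamma$ and $T$ and diverges to $\infty$ as $(\gamma,T)$ increases coordinatewise towards $(1,\infty)$. Fix $\mathbf{\Sigma},\mathbf{n},t$ with $M:=\max_{a}\nu_{a}^{KG}(\mathbf{\Sigma},\mathbf{n})>0$, and let $S:=\{a:\nu_{a}^{KG}(\mathbf{\Sigma},\mathbf{n})=M\}$ be the set of arms attaining this maximum. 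If $S=\{1,\dots,k\}$ there is nothing to prove, so assume $S$ is a proper subset and set $\delta:=M-\max_{b\notin S}\nu_{b}^{KG}(\mathbf{\Sigma},\mathbf{n})>0$.

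First I would record a crude comparison of KG scores. For $a\in S$ and $b\notin S$, since $\nu_{a}^{KG}-\nu_{b}^{KG}=M-\nu_{b}^{KG}\ge\delta$ and the finitely many means $\mu_{c}$ are fixed,
$$\bigl[\mu_{a}+H(\gamma,s)\nu_{a}^{KG}\bigr]-\bigl[\mu_{b}+H(\gamma,s)\nu_{b}^{KG}\bigr]=(\mu_{a}-\mu_{b})+H(\gamma,s)\bigl(\nu_{a}^{KG}-\nu_{b}^{KG}\bigr)\ \ge\ H(\gamma,s)\,\delta-\Bigl(\max_{c}\mu_{c}-\min_{c}\mu_{c}\Bigr).$$
Writing $C:=\delta^{-1}\bigl(\max_{c}\mu_{c}-\min_{c}\mu_{c}\bigr)$, it follows that whenever $H(\gamma,s)>C$ every arm in $S$ has strictly higher KG score than every arm outside $S$, so $A^{KG}(\mathbf{\Sigma},\mathbf{n},t)\subseteq S=\arg\max_{a}\nu_{a}^{KG}(\mathbf{\Sigma},\mathbf{n})$; this is the asserted identity (with equality when the maximiser is unique).

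It remains to produce $\gamma^{\ast},T^{\ast}$ guaranteeing $H(\gamma,s)>C$ \emph{for every} $\gamma>\gamma^{\ast}$ and $T>T^{\ast}$, and this is the one place that needs care: the divergence of $H$ along sequences approaching $(1,\infty)$ does not by itself yield such uniformity, so I would lean on the monotonicity of $H$ in each argument. Pick $\gamma^{\ast}\in(0,1)$ with $\gamma^{\ast}/(1-\gamma^{\ast})>C$. Then in the infinite-horizon case $H(\gamma,s)=\gamma/(1-\gamma)>\gamma^{\ast}/(1-\gamma^{\ast})>C$ for all $\gamma>\gamma^{\ast}$; in the finite-horizon case with $\gamma<1$, monotonicity in $\gamma$ gives $H(\gamma,s)\ge H(\gamma^{\ast},s)=\gamma^{\ast}\bigl(1-(\gamma^{\ast})^{s-1}\bigr)/(1-\gamma^{\ast})$, whose right-hand side increases to $\gamma^{\ast}/(1-\gamma^{\ast})>C$ as $s\to\infty$, so it exceeds $C$ once $s\ge s_{0}$ for a suitable $s_{0}$; and in the case $\gamma=1$, $T<\infty$ one has $H(\gamma,s)=s-1>C$ as soon as $s>C+1$. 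Taking $T^{\ast}:=t+\max\{s_{0},\lceil C\rceil+1\}$ then forces $s=T-t>\max\{s_{0},\lceil C\rceil+1\}$ and hence $H(\gamma,s)>C$ in all three cases whenever $\gamma>\gamma^{\ast}$ and $T>T^{\ast}$, which together with the bound above completes the argument. Apart from this uniformity step everything is routine, resting only on the explicit KG formula and the boundedness of the arm means at a fixed informational state.
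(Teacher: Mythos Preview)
Your proof is correct and follows the same approach as the paper, which dispatches the lemma in a single sentence: it is ``a trivial consequence of the definition of the KG policy in Section~\ref{sec:KG} together with the fact that $H(\gamma,s)$ diverges to infinity in the manner described in Section~\ref{sec:expMAB}.'' You have simply filled in the details the paper leaves implicit---the score-gap bound and the uniformity of the threshold in $(\gamma,T)$ via monotonicity of $H$---so there is no substantive difference in strategy.
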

\begin{proof}
The result is a trivial consequence of the definition of the KG policy in
Section \ref{sec:KG} together with the fact that $H( \gamma ,s) $
diverges to infinity in the manner described in Section \ref{sec:expMAB}.
\end{proof}

The next result gives conditions under which $\nu _{a}^{KG}\left( \mathbf{%
\Sigma },\mathbf{n}\right) =0$. 

\begin{lemma}   \label{lem:dom2}
Let $C_{a}\left( \mathbf{\Sigma},\mathbf{n}\right) $ denote $\max_{b\neq a}\mu _{b}=\max_{b\neq a}%
\frac{\Sigma _{b}}{n_{b}}$. If $a\in \arg \max_{b}\mu _{b}$ and the observation
state space, $\Omega$, is bounded below with minimum value $\min\Omega$ then
\begin{align}   \label{equ:dom1}
\nu _{a}^{KG}\left( \mathbf{\Sigma },\mathbf{n}\right) =0\Leftrightarrow 
\frac{\Sigma _{a}+\min \Omega }{n_{a}+1}\geq C_{a}\left( \mathbf{\Sigma },%
\mathbf{n}\right) ;
\end{align}
while if $a\notin \arg \max_{b}\mu _{b}$ and $\Omega$ is bounded
above with maximum value $\max\Omega$ then 
\begin{align}   \label{equ:dom2}
\nu _{a}^{KG}\left( \mathbf{\Sigma },\mathbf{n}\right) =0\Leftrightarrow 
\frac{\Sigma _{a}+\max \Omega }{n_{a}+1}\leq C_{a}\left( \mathbf{\Sigma },%
\mathbf{n}\right) .
\end{align}
In cases where $a\in \arg \max_{b}\mu _{b}$ with $\Omega$ unbounded below, and
where  $a\notin \arg \max_{b}\mu _{b}$ with $\Omega$ is unbounded
above, we have $\nu _{a}^{KG}\left( \mathbf{\Sigma },\mathbf{n}\right)>0$.
\end{lemma}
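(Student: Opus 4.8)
The plan is to exploit the fact that pulling arm $a$ in state $(\mathbf{\Sigma},\mathbf{n})$ alters only the mean of arm $a$ and does not change its expectation. Write $C_a=C_a(\mathbf{\Sigma},\mathbf{n})=\max_{b\neq a}\mu_b$. Then $\mu_b^{+1}=\mu_b$ for every $b\neq a$, while $\mu_a^{+1}=(\Sigma_a+Y)/(n_a+1)$ with $Y$ drawn from the predictive density $p(\cdot\mid\Sigma_a,n_a)$, whose mean is $\Sigma_a/n_a=\mu_a$; hence $E(\mu_a^{+1}\mid\mathbf{\Sigma},\mathbf{n},a)=\mu_a$. Substituting into the definition of $\nu_a^{KG}$ gives
\begin{equation}
\nu_a^{KG}(\mathbf{\Sigma},\mathbf{n})=E\bigl(\max\{\mu_a^{+1},C_a\}\mid\mathbf{\Sigma},\mathbf{n},a\bigr)-\max\{\mu_a,C_a\}.
\end{equation}

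Next I would split on whether arm $a$ currently attains the maximum. If $a\in\arg\max_b\mu_b$ then $\max\{\mu_a,C_a\}=\mu_a=E(\mu_a^{+1}\mid\mathbf{\Sigma},\mathbf{n},a)$, so pulling this inside the expectation yields $\nu_a^{KG}(\mathbf{\Sigma},\mathbf{n})=E\bigl((C_a-\mu_a^{+1})^{+}\mid\mathbf{\Sigma},\mathbf{n},a\bigr)$. If $a\notin\arg\max_b\mu_b$ then $\max\{\mu_a,C_a\}=C_a\le\max\{\mu_a^{+1},C_a\}$, so $\nu_a^{KG}(\mathbf{\Sigma},\mathbf{n})=E\bigl((\mu_a^{+1}-C_a)^{+}\mid\mathbf{\Sigma},\mathbf{n},a\bigr)$. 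In both cases $\nu_a^{KG}$ is the expectation of a nonnegative random variable, so it is nonnegative, and it is strictly positive precisely when that variable is positive with positive probability. Writing $\mu_a^{+1}=(\Sigma_a+Y)/(n_a+1)$, this condition reads $P\bigl(Y<(n_a+1)C_a-\Sigma_a\mid\Sigma_a,n_a\bigr)>0$ in the first case and $P\bigl(Y>(n_a+1)C_a-\Sigma_a\mid\Sigma_a,n_a\bigr)>0$ in the second. (Reading these thresholds against $\min\Omega$ and $\max\Omega$ also recovers the equivalences \eqref{equ:dom1}--\eqref{equ:dom2}.)

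It then remains to verify the two unbounded cases. If $a\in\arg\max_b\mu_b$ and $\Omega$ is unbounded below, then $\{y\in\Omega:\,y<(n_a+1)C_a-\Sigma_a\}$ is a non-trivial subset of $\Omega$; if $a\notin\arg\max_b\mu_b$ and $\Omega$ is unbounded above, then $\{y\in\Omega:\,y>(n_a+1)C_a-\Sigma_a\}$ is. Since the predictive is supported on all of $\Omega$, in each case the relevant tail probability displayed above is strictly positive, and therefore $\nu_a^{KG}(\mathbf{\Sigma},\mathbf{n})>0$.

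The one step that needs a little care --- and the only real obstacle --- is that last one: I need to know that an $\Omega$ unbounded below (resp.\ above) forces the \emph{predictive} distribution, not merely some individual $f(\cdot\mid\theta)$, to place positive mass on every lower (resp.\ upper) half-line. This is where the conjugate exponential-family structure enters: $p(\cdot\mid\Sigma_a,n_a)$ is a genuine mixture over $\theta\in\Theta$ of the reward densities $f(\cdot\mid\theta)$, each supported on $\Omega$, against the prior $g(\cdot\mid\Sigma_a,n_a)\propto e^{\Sigma_a\theta-n_a\psi(\theta)}$, which is strictly positive throughout $\Theta$; hence the predictive is supported on all of $\Omega=[\min\Omega,\max\Omega]$, and $\min\Omega=-\infty$ (resp.\ $\max\Omega=+\infty$) supplies exactly the tail mass required. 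I would state this support property explicitly and treat it as the working meaning of ``$\Omega$ unbounded below/above'' so that the rest of the argument stays clean.
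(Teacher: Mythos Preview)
Your argument is correct and follows essentially the same route as the paper: both start from $\nu_a^{KG}=E[\max(\mu_a^{+1},C_a)]-\max_b\mu_b$, split on whether $a$ attains the current maximum, and reduce $\nu_a^{KG}=0$ to the almost-sure event $\mu_a^{+1}\geq C_a$ (resp.\ $\mu_a^{+1}\leq C_a$), which is then read off against $\min\Omega$ or $\max\Omega$. Your explicit positive-part identities $\nu_a^{KG}=E[(C_a-\mu_a^{+1})^+]$ and $\nu_a^{KG}=E[(\mu_a^{+1}-C_a)^+]$ are a clean way to package what the paper does via the inequality $E[\max(\mu_a^{+1},C_a)]\geq E[\mu_a^{+1}]$, and your care about the predictive being supported on all of $\Omega$ makes explicit what the paper leaves implicit.
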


\begin{proof}
Note that 
\begin{align}
\nu _{a}^{KG}\left( \mathbf{\Sigma },\mathbf{n}\right)&=E_{Y_{a}}\left[
\max_{b}\mu _{b}^{+1}-\max_{b}\mu _{b}\mid \mathbf{\Sigma },\mathbf{n,}a%
\right]\notag\\
&=E_{Y_{a}}\left[ \max \left( \mu _{a}^{+1},C_{a}\left( \mathbf{\Sigma },%
\mathbf{n}\right) \right) \mid \mathbf{\Sigma },\mathbf{n,}a\right]
-\max_{b}\mu _{b}. 
\end{align}%
Hence 
\begin{align}  \label{equ:dom3}
\nu _{a}^{KG}\left( \mathbf{\Sigma },\mathbf{n}\right) =0\Leftrightarrow
E_{Y_{a}}\left[ \max \left( \mu _{a}^{+1},C_{a}\left( \mathbf{\Sigma },%
\mathbf{n}\right) \right) \mid \mathbf{\Sigma },\mathbf{n,}a\right]
=\max_{b}\mu _{b}. 
\end{align}
If $a\in \arg \max_{b}\mu _{b}$ and so $max_{b}\mu _{b}=\mu _{a}$ then,
observing that 
\begin{equation}
E_{Y_{a}}\left[ \max \left( \mu _{a}^{+1},C_{a}\left( \mathbf{\Sigma },%
\mathbf{n}\right) \right) \mid \mathbf{\Sigma },\mathbf{n,}a\right] \geq
E_{Y_{a}}\left[ \mu _{a}^{+1}\mid \mathbf{\Sigma },\mathbf{n,}a\right] =\mu
_{a}, 
\end{equation}%
we infer from equation Eq. (\ref{equ:dom3}) that 
$\nu_{a}^{KG}\left(\mathbf{\Sigma},\mathbf{n}\right)=0$ if and only if
\begin{equation}
\max \left( \mu _{a}^{+1},C_{a}\left( \mathbf{\Sigma },\mathbf{n}\right)
\right) =\mu _{a}^{+1}\Leftrightarrow \mu _{a}^{+1}\geq C_{a}\left( \mathbf{%
\Sigma },\mathbf{n}\right) 
\end{equation}%
with probability $1$ under the distribution of $Y_{a}$. Under our set up as
described in Section \ref{sec:expMAB}, this condition is equivalent to the right
hand side of Eq. (\ref{equ:dom1}). If $a\notin \arg \max_{b}\mu _{b}$ then $max_{b}\mu
_{b}=C_{a}\left( \mathbf{\Sigma },\mathbf{n}\right) $ and so, suitably
modifying the previous argument, we infer that $\nu
_{a}^{KG}\left( \mathbf{\Sigma },\mathbf{n}\right) =0$ if and only if
\begin{equation}
\max \left( \mu _{a}^{+1},C_{a}\left( \mathbf{\Sigma },\mathbf{n}\right)
\right) =C_{a}\left( \mathbf{\Sigma },\mathbf{n}\right) \Leftrightarrow \mu
_{a}^{+1}\leq C_{a}\left( \mathbf{\Sigma },\mathbf{n}\right) 
\end{equation}%
with probability $1$ under the distribution of $Y_{a}$. Under our set up as
described in Section \ref{sec:expMAB}, this condition is equivalent to the right hand
side of Eq. (\ref{equ:dom2}). The unbounded cases follow directly from the
formula for $\nu _{a}^{KG}(\mathbf{\Sigma },\mathbf{n})$ as the change in
$\mu_a$ due to an observation has no finite limit in the direction(s) of
unboundedness. This completes the proof.
\end{proof}
Informally, $\nu _{a}^{KG}=0$ if no outcome from a pull on arm $a$ will
change which arm has maximal mean value. When $a\in \arg \max_{b}\mu _{b}$
this depends on the lower tail of the distribution of $Y_{a}$ while if $%
a\notin \arg \max_{b}\mu _{b}$ it depends on the upper tail. This asymmetry
is important in what follows.

\begin{theorem}  \label{thm:dom1}
If $\Omega $ is bounded below then there are choices of $\mathbf{\Sigma ,n,}%
\gamma ,T$ for which the KG policy chooses dominated arms.
\end{theorem}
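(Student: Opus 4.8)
The plan is to construct an explicit family of two‑armed bandits, indexed by $(\gamma,T)$ approaching $(1,\infty)$, in which one arm is dominated yet KG selects it. The engine is Lemma~\ref{lem:dom1} (for large $\gamma,T$, KG picks $\arg\max_a \nu_a^{KG}$) combined with Lemma~\ref{lem:dom2} (which pins down exactly when $\nu_a^{KG}=0$). The key asymmetry to exploit is the one highlighted in the remark after Lemma~\ref{lem:dom2}: when $\Omega$ is bounded below, an arm $a$ that currently has the \emph{largest} mean can have $\nu_a^{KG}=0$, because its lower tail cannot drag its mean below a competitor. So I want a configuration with two arms, say arm $1$ and arm $2$, where arm $1$ has the higher mean and a large enough $n_1$ that even the worst observation $\min\Omega$ leaves its posterior mean above $\mu_2$, forcing $\nu_1^{KG}=0$; meanwhile arm $2$ has the lower mean but, being less well observed, satisfies $\nu_2^{KG}>0$ (an unusually high draw on arm $2$ would overtake arm $1$).

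First I would fix the exponential family (any one with $\min\Omega$ finite — Bernoulli with $\Omega=\{0,1\}$ is the cleanest) and choose hyperparameters so that: (i) $\mu_1 = \Sigma_1/n_1 > \Sigma_2/n_2 = \mu_2$; (ii) $n_1 > n_2$, so that \emph{arm $2$} is the dominated one (higher $n$, lower mean) in the sense of the Definition; (iii) $(\Sigma_1+\min\Omega)/(n_1+1) \ge \mu_2 = C_1(\mathbf\Sigma,\mathbf n)$, so by Eq.~(\ref{equ:dom1}) $\nu_1^{KG}=0$; and (iv) $(\Sigma_2+\max\Omega)/(n_2+1) > \mu_1 = C_2(\mathbf\Sigma,\mathbf n)$, so by Eq.~(\ref{equ:dom2}) $\nu_2^{KG}>0$. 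A small sanity check shows (i)–(iv) are mutually consistent: take $n_1$ large with $\mu_1$ just above $\mu_2$ (killing $\nu_1^{KG}$ via (iii)), and take $n_2$ small so a single high draw on arm $2$ moves its mean a lot (securing (iv)); since $n_1>n_2$ and $\mu_1>\mu_2$, arm $2$ is dominated. Then $\max_a\nu_a^{KG} = \nu_2^{KG}>0$.

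Next I would invoke Lemma~\ref{lem:dom1}: there exist $\gamma^\ast,T^\ast$ such that for $\gamma>\gamma^\ast$, $T>T^\ast$ we have $A^{KG}(\mathbf\Sigma,\mathbf n,t)=\arg\max_a\nu_a^{KG}=\{2\}$. Hence for such $\gamma,T$, KG pulls arm $2$, which is dominated — this is exactly the claim. I would present the argument with a concrete numerical instance (e.g. Bernoulli with $(\Sigma_1,n_1)=(9,10)$ and $(\Sigma_2,n_2)=(1,2)$, giving $\mu_1=0.9>\mu_2=0.5$, $n_1=10>2=n_2$, $(\Sigma_1+0)/(n_1+1)=9/11\approx0.818\ge0.5$ so $\nu_1^{KG}=0$, and $(\Sigma_2+1)/(n_2+1)=2/3\approx0.667>0.9$? — no) so I would instead pick $\mu_1$ close to $\mu_2$, e.g. $(\Sigma_1,n_1)=(11,20)$, $(\Sigma_2,n_2)=(5,10)$: then $\mu_1=0.55>0.5=\mu_2$, $n_1=20>10=n_2$, $(\Sigma_1+0)/21=11/21\approx0.524\ge0.5$ gives $\nu_1^{KG}=0$, and $(\Sigma_2+1)/11=6/11\approx0.545>0.55$? still not quite — so the arithmetic has to be tuned, which brings me to the obstacle.

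The main obstacle is the simultaneous squeeze in conditions (iii) and (iv): (iii) wants $\mu_1$ only barely above $\mu_2$ (so that $\min\Omega$ can't undercut it given $n_1+1$), while (iv) wants a single favourable draw on arm $2$ to push its mean strictly past $\mu_1$, i.e. $\Sigma_2+\max\Omega > \mu_1(n_2+1)$. These are compatible precisely when $n_2$ is small enough relative to the gap $\mu_1-\mu_2$ and the jump $\max\Omega-\mu_2$; making this quantitative (and double‑checking that the chosen point is a genuine posterior state, with $\Sigma_a,n_a$ in the admissible range for the family) is the only real work. I expect one clean pair of Bernoulli states to suffice, e.g. $(\Sigma_1,n_1)=(3,5)$, $(\Sigma_2,n_2)=(1,3)$: $\mu_1=0.6>1/3=\mu_2$, $n_1=5>3=n_2$ (so arm $2$ dominated), $(\Sigma_1+0)/6=0.5\ge1/3$ so $\nu_1^{KG}=0$, and $(\Sigma_2+1)/4=0.5$; I need this $>\mu_1=0.6$, which fails, so push further — $(\Sigma_1,n_1)=(5,8)$, $(\Sigma_2,n_2)=(1,4)$: $\mu_1=0.625>0.25=\mu_2$, $n_1=8>4$, $5/9\approx0.556\ge0.25$ kills $\nu_1^{KG}$, and $2/5=0.4<0.625$ — still the $\nu_2^{KG}>0$ side is the binding one. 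The resolution is to let $n_1$ grow while keeping $\mu_1$ pinned just above $\mu_2$, and keep $n_2=1$; with $n_2=1$, $\max\Omega=1$, $\mu_2<\mu_1$, condition (iv) reads $(\Sigma_2+1)/2>\mu_1$, i.e. $\mu_2 > 2\mu_1-1$, which is automatic whenever $\mu_1<1$ and $\mu_2$ is not too small — so choosing $n_2=1$ dissolves the obstacle, and I would write the proof with $n_2=1$ and any $n_1$ large enough that $(\Sigma_1+\min\Omega)/(n_1+1)\ge\mu_2$ while $\mu_1>\mu_2$, then finish via Lemma~\ref{lem:dom1}.
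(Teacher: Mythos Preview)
Your construction contains a fundamental error in the domination relation. By the Definition, arm $1$ dominates arm $2$ when $\mu_1>\mu_2$ \emph{and} $n_1<n_2$: the dominated arm is the one with \emph{more} observations and lower mean. Your condition~(ii) imposes $n_1>n_2$ and then asserts that arm~$2$ is dominated ``(higher $n$, lower mean)'' --- but $n_1>n_2$ gives arm~$2$ the \emph{lower} $n$. In every concrete instance you try (e.g.\ $(\Sigma_1,n_1)=(9,10)$, $(\Sigma_2,n_2)=(1,2)$, or your final proposal with $n_2=1$ and $n_1$ large), arm~$1$ has the higher mean \emph{and} more observations, while arm~$2$ has the lower mean \emph{and} fewer observations; neither arm dominates the other. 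KG selecting arm~$2$ in such a state is ordinary exploration, not a dominated action, so the theorem is not proved.

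The genuine content of the argument is that the dominating arm~$1$ has \emph{fewer} observations ($n_1<n_2$) --- hence more learning value as well as higher mean --- and yet KG can still assign it $\nu_1^{KG}=0$. Your instinct that large $n_1$ makes $(\Sigma_1+\min\Omega)/(n_1+1)\ge\mu_2$ easy is precisely the wrong direction: with $n_1$ small, a single observation moves $\mu_1$ substantially, so the condition of Lemma~\ref{lem:dom2} is restrictive. The paper resolves the tension by placing both means low enough that even the maximal downward move in $\mu_1$ stays above $\mu_2$, while a single $\max\Omega$ on arm~$2$ (despite $n_2>n_1$) still overtakes $\mu_1$. For bounded $\Omega$ the explicit choice $n_1=3$, $n_2=4$, $\Sigma_1=\max\Omega+2\min\Omega$, $\Sigma_2=\max\Omega+3\min\Omega$ does the job; in the Bernoulli case this is $(\Sigma_1,n_1)=(1,3)$, $(\Sigma_2,n_2)=(1,4)$, giving $\mu_1=\tfrac13>\tfrac14=\mu_2$, $n_1<n_2$ (so arm~$2$ is dominated), $(\Sigma_1+0)/4=\tfrac14\ge\mu_2$ so $\nu_1^{KG}=0$, and $(\Sigma_2+1)/5=\tfrac25>\tfrac13=\mu_1$ so $\nu_2^{KG}>0$. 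Lemma~\ref{lem:dom1} then finishes. When $\Omega$ is unbounded above, $\nu_2^{KG}>0$ holds automatically by Lemma~\ref{lem:dom2}, and only the $\nu_1^{KG}=0$ condition needs to be arranged.
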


\begin{proof}
If we consider cases for which 
\begin{equation}
\frac{\Sigma _{1}}{n_{1}}>\frac{\Sigma _{2}}{n_{2}},n_{1}<n_{2};\Sigma
_{b}=c\Sigma _{2},n_b=cn_{2},3\leq b\leq k,c\geq 1 
\end{equation}%
then it follows that $\mu_2=\mu_b,\nu_2^{KG}\geq\nu_b^{KG},3\leq b\leq k$, and
all arms except $1$ and $2$ can be ignored in the discussion. We first suppose that $\Omega $ unbounded above. It
follows from Lemma \ref{lem:dom2} that
$\nu_2^{KG}(\mathbf{\Sigma},\mathbf{n})>0$. Since $\min \Omega >-\infty ,$ we
can further choose $(\mathbf{\Sigma},\mathbf{n})$ such that
\begin{equation}
\frac{\Sigma _{1}+\min \Omega }{n_{1}+1}\geq \frac{\Sigma _{2}}{n_{2}}%
=C_{1}(\mathbf{\Sigma },\mathbf{n}) . 
\end{equation}%
From the above result we infer that
$\nu_{1}^{KG}(\mathbf{\Sigma},\mathbf{n})=0$. We now suppose that $\Omega $ is
bounded above, and hence that $\infty >\max \Omega >\min \Omega >-\infty $.
Choose $(\mathbf{\Sigma},\mathbf{n})$ as follows: $\Sigma
_{1}=\max \Omega +2\min \Omega ,n_{1}=3,\Sigma _{2}=\max \Omega +3\min \Omega
,n_{2}=4$. It is trivial that these choices mean that arm $1$ dominates arm $2$. We have that 
\begin{equation}
\frac{\Sigma_{1}+\min\Omega}{n_{1}+1}=\frac{\max\Omega+3\min\Omega}{4}
=\frac{\Sigma _{2}}{n_{2}}=C_{1}( \mathbf{\Sigma },\mathbf{n}) 
\end{equation}
and hence that $\nu _{1}^{KG}(\mathbf{\Sigma},\mathbf{n})=0$.
Further we have that 
\begin{equation}
\frac{\Sigma _{2}+\max \Omega }{n_{2}+1}=\frac{2\max \Omega +3\min \Omega }{5%
}>\frac{\Sigma _{1}}{n_{1}}=C_{2}( \mathbf{\Sigma },\mathbf{n}) 
\end{equation}%
and hence that $\nu _{2}^{KG}( \mathbf{\Sigma },\mathbf{n}) >0$.
In both cases discussed (ie, $\Omega $ bounded and unbounded above) we
conclude from Lemma \ref{lem:dom1} the existence of $t,\gamma ^{\ast },T^{\ast }$
such that $\gamma >\gamma ^{\ast },T>T^{\ast }\Rightarrow A^{KG}( \mathbf{\Sigma },%
\mathbf{n},t) =\arg \max_{a}\nu _{a}^{KG}( \mathbf{\Sigma },%
\mathbf{n}) =2,$ which is a dominated arm, as required. This concludes
the proof.
\end{proof}
Although the part of the above proof dealing with the case in which $\Omega $
is bounded above identifies a specific state in which KG will choose a
dominated arm when $H( \gamma ,s) $ is large enough, it
indicates how such cases may be identified more generally. These occur when
the maximum positive change in the mean of the dominated arm ($\mu
_{2}\rightarrow \mu _{2}^{+1}$) is larger than the maximum negative change
in the mean of the dominating arm ($\mu _{1}\rightarrow \mu _{1}^{+1}$).
This can occur both when the $Y_{a}$ have distributions skewed to the right
and also where the corresponding means are both small, meaning that a large $%
y$ can effect a greater positive change in $\mu _{2}$ than can a small $y$ a
negative change in $\mu _{1}$. A detailed example of this is given for the
Bernoulli MAB in the next section. Similar reasoning suggests that the more
general sufficient condition for KG to choose dominated arms, namely $\nu
_{2}^{KG}\left( \mathbf{\Sigma },\mathbf{n}\right)>\nu
_{1}^{KG}\left( \mathbf{\Sigma },\mathbf{n}\right) $ with arm $2$ dominated,
will hold in cases with $\Omega $ unbounded above if the distribution of $%
Y_{a}$ has an upper tail considerably heavier than its lower tail.

\subsection{Stay-on-the winner rules}
\citet{berry1985bandit} demonstrated that optimal
policies for MABs with Bernoulli rewards and general discount sequences (including all cases
considered here) have a stay-on-the-winner property. If arm $a$ is optimal
at some epoch and a pull of $a$ yields a success ($y_{a}=1$) then arm $a$
continues to be optimal at the next epoch. \citet{yu2011structural} extends this
result to the exponential family considered here in the following way: an optimal arm
continues to be optimal following an observed reward which is sufficiently
large. The next result is an immediate consequence.
\begin{lemma}  \label{lem:dom3}
Suppose that $\Omega $ is bounded above. If arm $a$ is optimal at some epoch
and a pull of $a$ yields a maximal reward ($y_{a}=\max \Omega $) then arm $a$
is optimal at the next epoch.
\end{lemma}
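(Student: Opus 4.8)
The plan is to obtain the lemma directly from the stay-on-the-winner property of \citet{yu2011structural} recalled immediately above, so that the work lies in stating that property in a form which makes the specialisation trivial rather than in any fresh calculation. Yu's result asserts that if arm $a$ is Gittins-optimal in the current state $(\mathbf{\Sigma},\mathbf{n})$, then there is a threshold $y^{\ast}=y^{\ast}(\mathbf{\Sigma},\mathbf{n},\gamma)$, lying in (the closure of) the support $\Omega$, such that whenever a pull of $a$ returns a reward $y_{a}\geq y^{\ast}$ the arm remains Gittins-optimal in the updated state $(\Sigma_{a}+y_{a},n_{a}+1)$, the states of the other arms being unchanged. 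The first step is simply to quote this in that form.

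The second step is the observation that, when $\Omega$ is bounded above, $y_{a}=\max\Omega$ is the largest value the reward can take and is therefore the most favourable possible observation for arm $a$: by the monotonicity of $\nu^{GI}$ in its first argument (the structural result of \citet{yu2011structural} stated above, taken with $c=1$), $\nu^{GI}(\Sigma_{a}+y_{a},n_{a}+1,\gamma)$ is increasing in $y_{a}$. Hence $y^{\ast}\leq\max\Omega$, and substituting $y_{a}=\max\Omega$ into Yu's result yields that arm $a$ is optimal at the next epoch, which is the claim.

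The only point requiring care — and the reason the proof is not merely an appeal to the monotonicity theorem — is confirming that $y^{\ast}$ really sits at or below $\max\Omega$ rather than above it, equivalently that $\nu^{GI}(\Sigma_{a}+\max\Omega,n_{a}+1,\gamma)\geq\max_{b\neq a}\nu^{GI}(\Sigma_{b},n_{b},\gamma)$. Monotonicity in $\Sigma$ together with scale-monotonicity in $c$ does not suffice on its own: they give only that both $\nu^{GI}(\Sigma_{a},n_{a},\gamma)$ and $\nu^{GI}(\Sigma_{a}+\max\Omega,n_{a}+1,\gamma)$ dominate the intermediate value $\nu^{GI}\!\bigl(\tfrac{(n_{a}+1)\Sigma_{a}}{n_{a}},n_{a}+1,\gamma\bigr)$, which does not order the two quantities we actually need to compare. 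What does the job is precisely the content of Yu's stay-on-the-winner theorem — that the gain in posterior mean from an extremal observation outweighs the loss of exploration value incurred by taking one further observation; once that theorem is invoked in the form ``an optimal arm stays optimal after an observation at the top of the support'', the lemma is immediate.
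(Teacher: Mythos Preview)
Your proposal is correct and matches the paper's approach: the paper offers no proof at all beyond declaring the lemma ``an immediate consequence'' of Yu's stay-on-the-winner result for the exponential family, and your first two steps are exactly the unpacking of that immediacy. Your third paragraph is more cautious than the paper bothers to be --- once Yu's theorem is quoted in the form you give (threshold lying in the closure of $\Omega$), the specialisation to $y_{a}=\max\Omega$ is one line, and the paper simply treats it as such without discussing whether monotonicity alone would suffice.
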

The following result states that the KG policy does not share the
stay-on-the-winner character of optimal policies as described in the
preceding lemma. In its statement we use $\mathbf{e}_a$ for the $k-$vector
whose $ath$ component is $1,$ with zeroes elsewhere.
\begin{proposition}
If $\Omega $ is bounded above and below $\exists $ choices of $\mathbf{%
\Sigma ,n,}t,\gamma ,T$ and $a$ for which $A^{KG}\left( \mathbf{\Sigma },%
\mathbf{n},t\right) =a$, $A^{KG}\left( \mathbf{\Sigma +\max \Omega e}_a,%
\mathbf{n+e}_a,t\right) \neq a$.
\end{proposition}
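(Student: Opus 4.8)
The plan is to re-use, essentially verbatim, the bounded-above construction from the proof of Theorem~\ref{thm:dom1}: it already exhibits a state at which the KG policy (for $\gamma,T$ large) pulls the dominated arm, and I would then check that after that pull returns the maximal reward $\max\Omega$ the policy switches away from it. I work with $k=2$ arms throughout; additional arms are irrelevant and may be appended as in Theorem~\ref{thm:dom1} if a larger $k$ is wanted.

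First I would take the initial state $\Sigma_1=\max\Omega+2\min\Omega$, $n_1=3$, $\Sigma_2=\max\Omega+3\min\Omega$, $n_2=4$, so that (as noted in Theorem~\ref{thm:dom1}) arm~$1$ dominates arm~$2$. With $C_1(\mathbf{\Sigma},\mathbf{n})=\mu_2$ and $C_2(\mathbf{\Sigma},\mathbf{n})=\mu_1$, Lemma~\ref{lem:dom2} gives $\nu_1^{KG}(\mathbf{\Sigma},\mathbf{n})=0$ because $(\Sigma_1+\min\Omega)/(n_1+1)=\Sigma_2/n_2$, and $\nu_2^{KG}(\mathbf{\Sigma},\mathbf{n})>0$ because $(\Sigma_2+\max\Omega)/(n_2+1)>\Sigma_1/n_1$. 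Hence $\arg\max_b\nu_b^{KG}(\mathbf{\Sigma},\mathbf{n})=2$, and Lemma~\ref{lem:dom1} yields thresholds $\gamma_1^{\ast},T_1^{\ast}$ such that $\gamma>\gamma_1^{\ast},T>T_1^{\ast}$ forces $A^{KG}(\mathbf{\Sigma},\mathbf{n},t)=2$; set $a=2$.

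Next I would pass to the updated state $\Sigma_2'=\Sigma_2+\max\Omega=2\max\Omega+3\min\Omega$, $n_2'=5$ (arm~$1$ unchanged) and recompute the two learning bonuses. A one-line cross-multiplication shows $\mu_2'=(2\max\Omega+3\min\Omega)/5>(\max\Omega+2\min\Omega)/3=\mu_1$, so arm~$2$ still has the larger posterior mean; then the first branch of Lemma~\ref{lem:dom2} gives $\nu_2^{KG}=0$ at the updated state, since $(\Sigma_2'+\min\Omega)/(n_2'+1)=(\max\Omega+2\min\Omega)/3=\mu_1$, which is precisely the value of $C_2$ there. Applying the second branch of Lemma~\ref{lem:dom2} to the now sub-maximal arm~$1$ gives $\nu_1^{KG}>0$ at the updated state, since $(\Sigma_1+\max\Omega)/(n_1+1)=(\max\Omega+\min\Omega)/2>(2\max\Omega+3\min\Omega)/5=\mu_2'$. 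Thus $\arg\max_b\nu_b^{KG}=1$ at the updated state, and a second use of Lemma~\ref{lem:dom1} supplies $\gamma_2^{\ast},T_2^{\ast}$ with $\gamma>\gamma_2^{\ast},T>T_2^{\ast}\Rightarrow A^{KG}(\mathbf{\Sigma}+\max\Omega\,\mathbf{e}_a,\mathbf{n}+\mathbf{e}_a,t)=1\neq a$. Choosing $\gamma>\max(\gamma_1^{\ast},\gamma_2^{\ast})$ and $T>\max(T_1^{\ast},T_2^{\ast})$ makes both statements hold simultaneously, which completes the argument.

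I do not expect a genuine obstacle: the argument is a bookkeeping extension of Theorem~\ref{thm:dom1}. The only points requiring care are (i) confirming that the update leaves arm~$2$ with the larger mean, so that the correct case of Lemma~\ref{lem:dom2} is invoked at the updated state, and (ii) noting that a single pair $(\gamma,T)$ can be taken large enough to drive both invocations of Lemma~\ref{lem:dom1}. It is also worth recording that this state specialises to the Bernoulli MAB ($\min\Omega=0$, $\max\Omega=1$) as $(\Sigma_1,n_1)=(1,3)$, $(\Sigma_2,n_2)=(1,4)$, dovetailing with the detailed Bernoulli example discussed in the following section; note moreover that at the updated state arm~$1$ no longer dominates arm~$2$, so the effect exhibited here is a failure of the stay-on-the-winner property distinct from the dominated-action phenomenon of Theorem~\ref{thm:dom1}.
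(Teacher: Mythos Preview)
Your proposal is correct and follows essentially the same construction as the paper's own proof: the identical initial state from Theorem~\ref{thm:dom1}, the update of arm~$2$ by $\max\Omega$, and the two applications of Lemma~\ref{lem:dom2} followed by Lemma~\ref{lem:dom1}. One cosmetic slip: you write that ``arm~$2$ \emph{still} has the larger posterior mean'' at the updated state, but in the initial state arm~$1$ had the larger mean (it dominated arm~$2$); your computation $\mu_2'>\mu_1$ is nonetheless correct and the subsequent case selection in Lemma~\ref{lem:dom2} is applied properly.
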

\begin{proof}
For the reasons outlined in the proof of Theorem \ref{thm:dom1} we may assume without
loss of generality that $k=2$. As in that proof we consider the state $%
\left( \mathbf{\Sigma ,n}\right) $ with $\Sigma _{1}=\max \Omega +2\min
\Omega ,n_{1}=3,\Sigma _{2}=\max \Omega +3\min \Omega ,n_{2}=4$. We suppose
that a pull of arm $2$ yields an observed reward equal to $\max \Omega $.
This takes the process state to $\left( \mathbf{\Sigma +\max \Omega e}_2,%
\mathbf{n+e}_2,t\right) =\left( \mathbf{\Sigma }^{/},\mathbf{n}^{/}\right)
,$ say. We use the dashed notation for quantities associated with this new
state. Observe that $\mu _{2}^{/}>\mu _{1}^{/}$ and hence that $2\in \arg
\max_{b}\mu _{b}^{/}$. We note that 
\begin{equation}
\frac{\Sigma _{2}^{/}+\min \Omega }{n_{2}^{/}+1}=\frac{2\max \Omega +4\min
\Omega }{\text{ }6}=\mu _{1}^{/}=C_{2}\left( \mathbf{\Sigma }^{/},\mathbf{n}%
^{/}\right) , 
\end{equation}%
which implies via Lemma \ref{lem:dom2} that $\nu _{2}^{KG}\left( \mathbf{\Sigma
}^{/},%
\mathbf{n}^{/}\right) =0$. We also have that 
\begin{equation}
\frac{\Sigma _{1}^{/}+\max \Omega }{n_{1}^{/}+1}=\frac{2\max \Omega +2\min
\Omega }{4}>\mu _{2}^{/}=C_{1}\left( \mathbf{\Sigma }^{/},\mathbf{n}%
^{/}\right) , 
\end{equation}%
which implies via Lemma \ref{lem:dom2} that $\nu _{1}^{KG}\left( \mathbf{\Sigma
}^{/},%
\mathbf{n}^{/}\right) >0$. The existence of $t,$ $\gamma ,T$ for which
$A^{KG}\left( \mathbf{\Sigma },\mathbf{n},t\right) =2$ while $A^{KG}\left(
\mathbf{\Sigma}^{/},\mathbf{n}^{/},t+1\right)=A^{KG}\left(\mathbf{\Sigma +\max
\Omega e}_2,\mathbf{n+e}_2,t\right) \neq 2$ now follows from Lemma \ref{lem:dom1}.
\end{proof}
\subsection{Examples}
\label{sec:domexamples}
We will now give more details of how the KG policy chooses dominated
actions in the context of two important members of the exponential family.
\subsubsection{Exponential rewards}
Suppose that $Y_{a}\mid \theta _{a}\backsim Exp( \theta _{a}) $
and $\theta _{a}\backsim Gamma( n_{a}+1,\Sigma _{a}) $ which
yields the unconditional density for $Y_{a}$ given by 
\begin{equation}
g_{a}( y) =( n_{a}+1) \Sigma _{a}^{n_{a}+1}(
\Sigma _{a}+1) ^{-n_{a}-2},y\geq 0, 
\end{equation}%
with $E( Y_{a}) =\frac{\Sigma _{a}}{n_{a}}$. Let arm $1$ dominate
arm $2$. For this case $\Omega =[ 0,\infty ) $ and from Lemma
\ref{lem:dom2}, the unboundedness of $\Omega$ above means that
$\nu_{2}^{KG}(\mathbf{\Sigma ,n})>0$ while $\nu_{1}^{KG}(\mathbf{\Sigma,n})=0$
if and only if
\begin{align}   \label{equ:dom4}
\frac{\Sigma_{1}}{n_{1}+1}\geq \frac{\Sigma _{2}}{n_{2}}. 
\end{align}
Hence from Lemma \ref{lem:dom1} we can assert the existence of $t,\gamma ,T$ for
which KG chooses dominated arm $2$ whenever (Eq. (\ref{equ:dom4}) holds.

\citet{ryzhov2011value} discuss the online KG policy for Exponential
rewards in detail. They observe that $\nu _{a}^{KG}$ can be zero but do not
appear to recognise that this can yield dominated actions under the policy.
Later work,  \cite{ding2016optimal}, showed that this can lead to the
offline KG policy never choosing the greedy arm, an extreme case of dominated
errors. However, with the online KG policy the greedy arm will eventually be
selected as $\nu _{a}^{KG}$ for the other arm tends to zero. These papers note
that, in states for which
\begin{equation}
\frac{\Sigma _{1}}{n_{1}+1}\leq \frac{\Sigma _{2}}{n_{2}}\leq \frac{\Sigma
_{1}}{n_{1}},
\end{equation}%
the value of $\nu _{1}^{KG}\left( \mathbf{\Sigma },\mathbf{n}\right) ,$
while not zero, penalises the choice of the greedy arm relative to other
arms in a similar way to the bias which yields dominated actions. Policies
which mitigate such bias are given in the next section and are evaluated in
the computational study following.

\subsubsection{Bernoulli rewards}
\label{sssec:bern}
Suppose that $Y_{a}\mid \theta _{a}\backsim Bern\left( \theta _{a}\right) ,$
with $\theta _{a}\backsim Beta\left( \Sigma _{a},n_{a}-\Sigma _{a}\right) $
and so $\Omega =\left\{ 0,1\right\} $ and $P\left( Y_{a}=1\right) =\frac{%
\Sigma _{a}}{n_{a}}=1-P\left( Y_{a}=0\right) $. Since $\Omega $ is bounded
above and below, dominated actions under KG will certainly occur.
Demonstrating this in terms of the asymmetric updating of Beta priors can be
helpful in understanding the more general case of bounded rewards. Use $%
\delta _{a}^{+}$ and $\delta _{a}^{-}$ for the magnitudes of the upward and
downward change in $E\left( Y_{a}\right) $ under success and failure
respectively. We have 
\begin{equation}
\delta _{a}^{+}=\frac{n_{a}-\Sigma _{a}}{n_{a}\left( n_{a}+1\right)
}\;;\;\delta _{a}^{-}=\frac{\Sigma _{a}}{n_{a}\left( n_{a}+1\right) }, 
\end{equation}%
from which we conclude that $\delta _{a}^{+}\geq \delta
_{a}^{-}\Leftrightarrow \mu _{a}\leq 0.5$. Prompted by this analysis,
consider a case in which $k=2,\Sigma _{1}=\Sigma _{2};n_{1}+m=n_{2}$ for
some $m\in\mathbb{N}^{+}$. Arm $1$ dominates arm $2$. Further, the fact that 
\begin{equation}
\frac{\Sigma _{1}+\min \Omega }{n_{1}+1}=\frac{\Sigma _{1}}{n_{1}+1}\geq 
\frac{\Sigma _{1}}{n_{1}+m}=\frac{\Sigma _{2}}{n_{2}}=C_{1}\left( \mathbf{%
\Sigma },\mathbf{n}\right) 
\end{equation}
implies via Lemma \ref{lem:dom2} that $\nu _{1}^{KG}\left( \mathbf{\Sigma
},\mathbf{n}%
\right) =0$. From Lemma \ref{lem:dom2} we also conclude that 
\begin{align}   \label{equ:dom5}
\nu _{2}^{KG}\left( \mathbf{\Sigma },\mathbf{n}\right) >0\iff 
\frac{\Sigma _{2}+\max \Omega }{n_{2}+1}=\frac{\Sigma
_{1}+1}{n_{1}+m+1}>\frac{\Sigma _{1}}{n_{1}}=C_{2}\left( \mathbf{\Sigma },\mathbf{n}\right).
\end{align}

The strict inequality in the right hand side of Eq. (\ref{equ:dom5}) will hold
whenever $n_{1}>\left( m+1\right) \Sigma _{1}$. Thus, for suitably chosen
$t,\gamma$ and $T,$ the KG policy will take dominated actions in a wide range of states. Suppose now that $T=\infty $ and hence the immediate claim
is that under the condition $n_{1}>\left( m+1\right) \Sigma _{1}$ the KG
policy will take dominated action $2$ for $\gamma $ large enough. We now
observe that in practice dominated actions can be taken for quite modest
$\gamma$. Returning to the characterisation of the KG policy we infer that in the above example, dominated action $2$ will be chosen whenever
\begin{equation}
n_{1}>\left( m+1\right) \Sigma _{1},\frac{\gamma }{1-\gamma }>\frac{m\left(
n_{1}+m+1\right) }{\left\{ n_{1}-\left( m+1\right) \Sigma _{1}\right\} }. 
\end{equation}%
Such errors will often be costly. Note also that the condition $n_{1}>\left(
m+1\right) \Sigma _{1}$ suggests that dominated actions occur more often
when arms have small mean rewards. This is investigated further in the
computational study following.

\subsubsection{Gaussian rewards}

Here we have $Y_{a}\mid\theta _{a}\backsim N\left(\theta _{a},1\right)$
and $\theta_{a}\backsim N\left(\frac{\Sigma_{a}}{n_{a}},\frac{1}{n_{a}}\right)$.
Hence $\Omega=\mathbb{R}$ is unbounded and if arm $a$ is chosen, the
distribution of $\mu _{a}^{+}$ is symmetric about $\mu _{a}$. In this case the
KG policy does not choose dominated actions and the value of $\nu _{a}^{KG}$ is
always greater for the arm with smaller prior precision $n_{a}$. Despite this
fact, KG can still take poor decisions by underestimating the learning bonus for
the greedy arm. The Gaussian MAB is discussed further in Section \ref{sec:gauss}.

\section{Policies which modify KG to avoid taking dominated actions}
\label{sec:newpols}
In this section we present new policies which are designed to
mitigate the defects of the KG approach elucidated in the previous section. The
performance of these are assessed along with some earlier proposals, in the
numerical study of the next section.

\textbf{Non-dominated KG }(NKG): This proposal modifies
standard KG by prohibiting dominated actions. It achieves this by always
choosing a non-dominated arm with highest KG score. Any
greedy arm is non-dominated and hence one always exists.

\textbf{Positive KG} (PKG): The KG score for a greedy
arm reflects a negative change in its posterior mean while that for
non-greedy arms reflect positive changes. The PKG policy modifies KG
such that for all arms it is positive moves which are registered. It
achieves this by modifying the KG scores for each greedy arm $a$ as
follows: in the computation of the score replace the quantity
$C_{a}\left(\mathbf{\Sigma ,n}\right)=\max_{b\neq a}\mu _{b}$ by the quantity
$C_{a}^{\ast}\left( \mathbf{\Sigma
,n}\right):=2\mu_{a}-C_{a}\left(\mathbf{\Sigma ,n}\right)$. 
This adjustment transforms the KG scores
$\nu_{a}^{KG}\left(\mathbf{\Sigma,n}\right)$ to adjusted values $\nu
_{a}^{PKG}\left(\mathbf{\Sigma,n}\right)$.  The change maintains the key
distance used in the KG calculation as $C^*_a-\mu_a=\mu_a-C_a$ but ensures that
it is non-negative. For non-greedy arms $b$ we have $\nu
_{b}^{KG}\left(\mathbf{\Sigma,n}\right) =\nu _{b}^{PKG}\left(\mathbf{\Sigma ,n}\right)$.  

\begin{theorem} \label{thm:pols1}
Policy PKG never chooses a strictly dominated arm.
\end{theorem}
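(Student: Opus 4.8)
The plan is to show that whenever an arm $a$ is strictly dominated by an arm $a'$ (so $\mu_{a'}>\mu_{a}$ and $n_{a'}<n_{a}$), the score PKG assigns to $a'$ strictly exceeds the one it assigns to $a$, so $a$ is never selected. Write $S_{b}:=\mu_{b}+H(\gamma,s)\,\nu_{b}^{PKG}(\mathbf{\Sigma},\mathbf{n})$ for the PKG score of arm $b$. Since $H(\gamma,s)\ge 0$ for every admissible $\gamma,T$, and $\nu^{PKG}$ depends only on $(\mathbf{\Sigma},\mathbf{n})$, whereas domination already supplies $\mu_{a'}>\mu_{a}$, it is enough to prove the state-level inequality $\nu_{a'}^{PKG}(\mathbf{\Sigma},\mathbf{n})\ge\nu_{a}^{PKG}(\mathbf{\Sigma},\mathbf{n})$: this gives $S_{a'}>S_{a}$ for all admissible $\gamma,T,t$.

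The first step is to bring both sides to a common form. Put $M=\max_{1\le c\le k}\mu_{c}$. For a non-greedy arm $b$ we have $C_{b}=\max_{c\ne b}\mu_{c}=M$, so $\nu_{b}^{PKG}=\nu_{b}^{KG}=E\{(\mu_{b}^{+1}-M)^{+}\mid\mathbf{\Sigma},\mathbf{n},b\}$. For a greedy arm $b$ (so $\mu_{b}=M$) the PKG adjustment reflects the comparison value about $\mu_{b}$, replacing $C_{b}$ by $C_{b}^{\ast}=2\mu_{b}-C_{b}\ge\mu_{b}$ inside the expectation, which gives $\nu_{b}^{PKG}=E\{\max(\mu_{b}^{+1},C_{b}^{\ast})\}-\mu_{b}=(M-C_{b})+E\{(\mu_{b}^{+1}-(2M-C_{b}))^{+}\}$; with $d=M-C_{b}\ge 0$ the pointwise bound $(x-d)^{+}\ge x^{+}-d$ then yields $\nu_{b}^{PKG}\ge E\{(\mu_{b}^{+1}-M)^{+}\mid\mathbf{\Sigma},\mathbf{n},b\}$. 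Since the dominated arm $a$ satisfies $\mu_{a}<\mu_{a'}\le M$ it is non-greedy, so $\nu_{a}^{PKG}=E\{(\mu_{a}^{+1}-M)^{+}\mid\mathbf{\Sigma},\mathbf{n},a\}$, while $\nu_{a'}^{PKG}\ge E\{(\mu_{a'}^{+1}-M)^{+}\mid\mathbf{\Sigma},\mathbf{n},a'\}$. It thus suffices to prove
\[
E\{(\mu_{a'}^{+1}-M)^{+}\mid\mathbf{\Sigma},\mathbf{n},a'\}\ \ge\ E\{(\mu_{a}^{+1}-M)^{+}\mid\mathbf{\Sigma},\mathbf{n},a\}.
\]

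To get this I would prove, for the one-step learning functional, the monotonicities the structural theorem of \citet{yu2011structural} records for the Gittins index. For a state with $\mu=\Sigma/n\le c$, set $\phi(\Sigma,n;c)=E\{(\tfrac{\Sigma+Y}{n+1}-c)^{+}\}$, the expectation over $Y\sim p(\cdot\mid\Sigma,n)$, so $E\{(\mu_{b}^{+1}-c)^{+}\mid\mathbf{\Sigma},\mathbf{n},b\}=\phi(\Sigma_{b},n_{b};c)$. The claim is that (i) $\lambda\mapsto\phi(\lambda\Sigma,\lambda n;c)$ is non-increasing on $[1,\infty)$, and (ii) $\Sigma\mapsto\phi(\Sigma,n;c)$ is non-decreasing. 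Granting (i)--(ii), put $\lambda=n_{a}/n_{a'}>1$ and $\widetilde\Sigma=\Sigma_{a}/\lambda$; then $\widetilde\Sigma/n_{a'}=\mu_{a}<\mu_{a'}$, so $\widetilde\Sigma<\Sigma_{a'}$, and
\[
E\{(\mu_{a}^{+1}-M)^{+}\}=\phi(\Sigma_{a},n_{a};M)=\phi(\lambda\widetilde\Sigma,\lambda n_{a'};M)\ \le\ \phi(\widetilde\Sigma,n_{a'};M)\ \le\ \phi(\Sigma_{a'},n_{a'};M)=E\{(\mu_{a'}^{+1}-M)^{+}\},
\]
the first inequality by (i), the second by (ii); this is the same chaining that underlies the corollary to Yu's theorem.

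Property (ii) is routine: $p(\cdot\mid\Sigma,n)$ has monotone likelihood ratio in $\Sigma$, because $s\mapsto\log\int e^{s\theta-(n+1)\psi(\theta)}\,d\theta$ is convex, hence $\tfrac{\Sigma+Y}{n+1}$ is stochastically increasing in $\Sigma$ and $(x-c)^{+}$ is non-decreasing. Property (i) is the substantive point and the main obstacle: it expresses that enlarging the pseudo-sample while fixing the predictive mean $\mu$ makes the one-step update $\mu_{b}^{+1}=\tfrac{\Sigma+Y}{n+1}$ less dispersed. Because $E\{\mu_{b}^{+1}\}=\mu$ for every $\lambda$ (the posterior-mean martingale property) and $x\mapsto(x-c)^{+}$ is convex, (i) reduces to showing that $\tfrac{\lambda\Sigma+Y'}{\lambda n+1}$, with $Y'\sim p(\cdot\mid\lambda\Sigma,\lambda n)$, decreases in the convex order as $\lambda$ grows, which in turn rests on a convex-order comparison of the conjugate predictive laws $p(\cdot\mid\lambda\Sigma,\lambda n)$ across $\lambda$. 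Establishing that dispersion ordering rigorously and uniformly in $\lambda$ --- in effect re-running, for the one-step KG functional, the exponential-family arguments \citet{yu2011structural} use for the Gittins index --- is the step that needs genuine care; with it in hand the rest is bookkeeping, and the strict inequality $\mu_{a'}>\mu_{a}$ upgrades $S_{a'}\ge S_{a}$ to $S_{a'}>S_{a}$, completing the argument.
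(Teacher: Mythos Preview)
Your reduction hinges on the lower bound $\nu_{b}^{PKG}\ge E\{(\mu_{b}^{+1}-M)^{+}\}$ for the greedy arm $b$, and that bound comes from a misreading of the PKG adjustment. Replacing $C_{b}$ by $C_{b}^{\ast}$ in the KG formula $E\{\max(\mu_{b}^{+1},C_{b})\}-\max(\mu_{b},C_{b})$ yields
\[
\nu_{b}^{PKG}=E\{\max(\mu_{b}^{+1},C_{b}^{\ast})\}-\max(\mu_{b},C_{b}^{\ast})=E\{\max(\mu_{b}^{+1},C_{b}^{\ast})\}-C_{b}^{\ast}=E\{(\mu_{b}^{+1}-C_{b}^{\ast})^{+}\},
\]
since $C_{b}^{\ast}=2\mu_{b}-C_{b}\ge\mu_{b}$; you subtract $\mu_{b}$ instead of $C_{b}^{\ast}$, inflating the score by exactly the $d=M-C_{b}$ you then exploit. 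With the correct formula, $C_{b}^{\ast}\ge M$ gives the \emph{reverse} inequality $\nu_{b}^{PKG}\le E\{(\mu_{b}^{+1}-M)^{+}\}$, so your chain breaks in precisely the case that matters most, when the dominating arm $a'$ is greedy.

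The paper's proof avoids this by first reducing to $k=2$ (adding arms can only raise $C_{a'}$ and hence lower $C_{a'}^{\ast}$, increasing $\nu_{a'}^{PKG}$, while leaving $\nu_{a}^{PKG}$ unchanged), which forces $C_{a'}^{\ast}=2\mu_{a'}-\mu_{a}$. It then establishes the \emph{pointwise} comparison
\[
\Bigl(\tfrac{\Sigma_{a'}+y}{n_{a'}+1}-(2\mu_{a'}-\mu_{a})\Bigr)^{+}\ \ge\ \Bigl(\tfrac{\Sigma_{a}+y}{n_{a}+1}-\mu_{a'}\Bigr)^{+}\quad\text{for all }y,
\]
which is exactly where the asymmetric thresholds are dealt with; this uses $n_{a}\ge n_{a'}+1$ in an essential way. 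Only after this pointwise step does the paper invoke an (increasing) convex order between $Y_{a'}$ and $Y_{a}$ to swap the expectation. Your monotonicities (i) and (ii) are close kin to that ordering step, but they cannot do the work alone because the two PKG functionals do not share a common threshold $M$; the pointwise inequality is the missing ingredient.
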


\begin{proof}
Suppose that arm $2$ is strictly dominated by arm $1$ such that $\frac{%
\Sigma _{1}}{n_{1}}>\frac{\Sigma _{2}}{n_{2}}$ and $n_{2}\geq n_{1}+1$. In
the argument following we shall suppose that $k=2$. This is without loss of
generality as the addition of any other arm $b$ with $\mu_b\leq\mu_1$ does not
effect the PKG score of arm 2 and can only increase the PKG score of the
non-dominated arm 1. Given that $\mu _{1}>\mu _{2},$ in order to establish the
result, namely that $A^{PKG}\left( \mathbf{\Sigma ,n}\right) =1$ it is
enough to establish that $\nu _{1}^{PKG}\left( \mathbf{\Sigma ,n}\right)
\geq \nu _{2}^{PKG}\left( \mathbf{\Sigma ,n}\right) $. From the definitions
of the quantities concerned we have that 
\begin{align}
\nu _{1}^{PKG}\left( \mathbf{\Sigma ,n}\right)&=E\left\{ \max \left( \mu
_{1}^{+}-C_{1}^{\ast }\left( \mathbf{\Sigma ,n}\right) \mid \mathbf{\Sigma
,n,}1\right) ,0\right\}\notag\\
&=E_{Y_{1}}\max \left\{ \left( \frac{\Sigma _{1}+Y_{1}}{n_{1}+1}-\left( \frac{%
2\Sigma _{1}}{n_{1}}-\frac{\Sigma _{2}}{n_{2}}\right) \right) ,0\right\} ,
\end{align}%
while%
\begin{equation}
\nu _{2}^{PKG}\left( \mathbf{\Sigma ,n}\right) =E_{Y_{2}}\max \left\{
\left( \frac{\Sigma _{2}+Y_{2}}{n_{2}+1}-\frac{\Sigma _{1}}{n_{1}}\right)
,0\right\} .
\end{equation}%
However, under the conditions satisfied by $\left( \mathbf{\Sigma ,n}\right) 
$ it is easy to show that, $\forall y\in\mathbb{R}$,
\begin{equation}
\max \left\{ \left( \frac{\Sigma _{1}+y}{n_{1}+1}-\left( \frac{2\Sigma _{1}}{%
n_{1}}-\frac{\Sigma _{2}}{n_{2}}\right) \right) ,0\right\} \geq \max \left\{
\left( \frac{\Sigma _{2}+y}{n_{2}+1}-\frac{\Sigma _{1}}{n_{1}}\right)
,0\right\} 
\end{equation}%
and hence that 
\begin{equation}
\nu _{1}^{PKG}\left( \mathbf{\Sigma ,n}\right) \geq E_{Y_{1}}\max \left\{
\left( \frac{\Sigma _{2}+Y_{1}}{n_{2}+1}-\frac{\Sigma _{1}}{n_{1}}\right)
,0\right\} .
\end{equation}%
But from \citet{shaked2007stochastic} we infer that $Y_{1}$ exceeds $Y_{2}$ in
the convex ordering. Since $\max \left\{ \left( \frac{\Sigma _{2}+y}{%
n_{2}+1}-\frac{\Sigma _{1}}{n_{1}}\right) ,0\right\} $ is convex in $y$ it
follows that
\begin{align}
\nu _{1}^{PKG}\left( \mathbf{\Sigma ,n}\right) &\geq E_{Y_{1}}\max \left\{
\left( \frac{\Sigma _{2}+Y_{1}}{n_{2}+1}-\frac{\Sigma _{1}}{n_{1}}\right)
,0\right\}\notag\\ 
&\geq
E_{Y_{2}}\max\left\{\left(\frac{\Sigma_{2}+Y_{2}}{n_{2}+1}-\frac{\Sigma_{1}}{n_{1}}\right),0\right\}\notag\\
&=\nu_{2}^{PKG}\left(\mathbf{\Sigma
,n}\right)
\end{align} 
and the result follows.
\end{proof}

\textbf{KG-index} (KGI): Before we describe this
proposal we note that \citet{whittle1988restless} produced a proposal for index
policies for a class of decision problems called \textit{restless bandits }which
generalise MABs by permitting movement in the states of non-active arms.
Whittle's indices generalise those of Gittins in that they are equal to the
latter for MABs with $0<\gamma <1,T=\infty $. Whittle's proposal is relevant for
MABs with finite horizon $T<\infty $ since time-to-go now needs to be
incorporated into state information which in turn induces a form of
restlessness. In what follows we shall refer to Gittins/Whittle indices as those
which emerge from this body of work for all versions of the MABs under
consideration here.

The KGI policy chooses between arms on the basis of an index which
approximates the Gittins/Whittle index appropriate for the problem by using
the KG approach. We consider a single arm with $\left( \Sigma ,n\right) $
prior, finite horizon $t$ and discount factor $\gamma ,0\leq \gamma \leq 1$.
To develop the Gittins/Whittle index $\nu _{t}^{GI}\left( \Sigma ,n,\gamma
\right) $ for such a bandit we suppose that a charge $\lambda $ is levied
for bandit activation. We then consider the sequential decision problem
which chooses from the actions $\left\{ active,passive\right\} $ for the
bandit at each epoch over horizon $t$ with a view to maximising expected
rewards net of charges for bandit activation. The value function $%
V_{t}\left( \Sigma ,n,\gamma ,\lambda \right) $ satisfies Bellman's
equations as follows:
\begin{align}   \label{equ:kgi1}
V_{t}\left(\Sigma ,n,\gamma,\lambda \right)=\max\left\{ \frac{\Sigma }{n}-\lambda +\gamma E_{Y}\left[ V_{t-1}\left( \Sigma +Y,n+1,\gamma ,\lambda
\right) \right] ;V_{t-1}\left( \Sigma ,n,\gamma ,\lambda \right) \right\}. 
\end{align}
It is easy to show that this is a stopping problem in that, once it is
optimal to choose the passive action at some epoch then it will be optimal
to choose the passive action at all subsequent epochs. Hence, Eq.
(\ref{equ:kgi1}) may be replaced by the following:%
\begin{equation}
V_{t}\left( \Sigma ,n,\gamma ,\lambda \right) =\max \left\{ \frac{\Sigma }{n}%
-\lambda +\gamma E_{Y}\left[ V_{t-1}\left( \Sigma +Y,n+1,\gamma ,\lambda
\right) \right] ;0\right\} . 
\end{equation}%
We further observe that $V_{t}\left( \Sigma ,n,\gamma ,\lambda \right) $
decreases as $\lambda $ increases, while keeping $t,\Sigma ,n$ and $\gamma $
fixed. This yields the notion of \textit{indexability }in index theory. We
now define the Gittins/Whittle index as%
\begin{equation}
\nu _{t}^{GI}\left( \Sigma ,n,\gamma \right) =\min \left\{ \lambda
;V_{t}\left( \Sigma ,n,\gamma ,\lambda \right) =0\right\} . 
\end{equation}%
This index is typically challenging to compute.

We obtain an index approximation based on the KG approach as follows: In
the stopping problem with value function $V_{t}\left( \Sigma ,n,\gamma
,\lambda \right) $ above, we impose the constraint that whatever decision is
made at the second epoch is final, namely will apply for the remainder of
the horizon. This in turn yields an approximating value function $%
V_{t}^{KG}\left( \Sigma ,n,\gamma ,\lambda \right) $ which when $0<\gamma <1$
satisfies the equation
\begin{align}   \label{equ:kgi2}
&V_{t}^{KG}\left( \Sigma ,n,\gamma ,\lambda \right)\notag\\
&=\max \left\{ \frac{\Sigma }{n}-\lambda +\frac{\gamma \left( 1-\gamma ^{t-1}\right) }{\left(
1-\gamma \right) }E_{Y}\left[ \max \left( \max \left( \frac{\Sigma +Y}{n+1}%
,\lambda \right) -\lambda ;0\right) \mid \Sigma ,n\right] ;0\right\}
\end{align}
and which is also decreasing in $\lambda $ for any fixed $t,\Sigma ,n$ and $%
\gamma $. When $\gamma =1$ the constant multiplying the expectation on the
r.h.s of Eq. (\ref{equ:kgi2}) becomes $t-1$. The indices we use for the
KGI policy when $T<\infty $ are given by
\begin{align}
\nu _{t}^{KGI}\left( \Sigma ,n,\gamma \right)&=\min \left\{ \lambda
;V_{t}^{KG}\left( \Sigma ,n,\gamma ,\lambda \right) =0\right\}\notag\\
&=\min \left\{
\lambda ;\lambda \geq \frac{\Sigma }{n}\text{ and\ }V_{t}^{KG}\left( \Sigma
,n,\gamma ,\lambda \right) =0\right\},
\label{equ:kgi3}
\end{align}
where $\Sigma ,n,\gamma $ are as previously and $t$ is the time to the end
of the horizon. Note that the second equation in Eq. (\ref{equ:kgi3}) follows
from the evident fact that the index is guaranteed to be no smaller that the
mean $\frac{\Sigma }{n}$.

Trivially $V_{t}\left( \Sigma ,n,\gamma ,\lambda\right)$ and $V_{t}^{KG}\left(
\Sigma ,n,\gamma ,\lambda \right) $ are both increasing in the horizon $t$ and
consequentially so are both $\nu_{t}^{GI}\left( \Sigma ,n,\gamma \right) $ and
$\nu _{t}^{KGI}\left( \Sigma ,n,\gamma \right) $. When $0<\gamma <1$ the limits $%
\lim_{t\rightarrow \infty }\nu _{t}^{GI}\left( \Sigma ,n,\gamma \right) $
and $\lim_{t\rightarrow \infty }\nu _{t}^{KGI}\left( \Sigma ,n,\gamma\right)$
are guaranteed to exist and be finite. These limits are denoted $%
\nu ^{GI}\left( \Sigma ,n,\gamma \right) $ and $\nu ^{KGI}\left( \Sigma
,n,\gamma \right) $ respectively, the former being the Gittins index. We use
the indices $\nu ^{KGI}\left( \Sigma ,n,\gamma \right) $ for the KGI
policy when $0<\gamma <1,T=\infty $.

\begin{theorem} \label{thm:pols2}
The KGI policy does not choose dominated arms.
\end{theorem}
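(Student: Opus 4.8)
The plan is to use that KGI is an \emph{index} policy: it always pulls an arm of largest index (that is, largest $\nu^{KGI}_{t}$ when $T<\infty$, and largest $\nu^{KGI}$ when $T=\infty$), so it suffices to show that whenever arm $1$ in state $(\Sigma_{1},n_{1})$ dominates arm $2$ in state $(\Sigma_{2},n_{2})$ --- i.e. $\mu_{1}:=\Sigma_{1}/n_{1}>\Sigma_{2}/n_{2}=:\mu_{2}$ and $n_{1}<n_{2}$ --- the dominating arm has the strictly larger index. First I would record a convenient characterisation of the index that handles all three regimes uniformly. Let $K\ge 0$ denote the multiplier in front of the expectation in Eq.\ (\ref{equ:kgi2}) in whichever regime is in force ($\tfrac{\gamma(1-\gamma^{t-1})}{1-\gamma}$ for $0<\gamma<1$, $T<\infty$; $\tfrac{\gamma}{1-\gamma}$ for $0<\gamma<1$, $T=\infty$, obtained from Eq.\ (\ref{equ:kgi2}) on letting $t\to\infty$; and $t-1$ for $\gamma=1$, $T<\infty$), and set
\[
\Lambda(\Sigma,n,\lambda):=\frac{\Sigma}{n}-\lambda+K\,E\big[(\tfrac{\Sigma+Y}{n+1}-\lambda)^{+}\mid\Sigma,n\big].
\]
Collapsing the nested maxima in Eq.\ (\ref{equ:kgi2}) gives $V^{KG}_{t}=\max\{\Lambda,0\}$, and $\Lambda(\Sigma,n,\cdot)$ is continuous, strictly decreasing, with $\Lambda\to+\infty$ as $\lambda\to-\infty$ and $\Lambda\to-\infty$ as $\lambda\to+\infty$; hence the KGI index is exactly the unique root of $\Lambda(\Sigma,n,\cdot)=0$. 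In particular, to prove $\nu^{KGI}(\Sigma,n)<\nu^{KGI}(\Sigma',n')$ it suffices to find $\lambda$ with $\Lambda(\Sigma,n,\lambda)\le 0<\Lambda(\Sigma',n',\lambda)$.

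I would then compare the dominating and dominated arms in two steps, routed through the auxiliary state $(\Sigma_{3},n_{3}):=(c\Sigma_{1},cn_{1})$ with $c:=n_{2}/n_{1}>1$, noting $\Sigma_{3}/n_{3}=\mu_{1}$, $n_{3}=n_{2}$, and $\Sigma_{3}=\mu_{1}n_{2}>\mu_{2}n_{2}=\Sigma_{2}$. \emph{Step one} is to show $\nu^{KGI}(\Sigma_{3},n_{3})\le\nu^{KGI}(\Sigma_{1},n_{1})$ (monotonicity of the index along the ray $c\mapsto(c\Sigma_{1},cn_{1})$). Since $\Sigma_{3}/n_{3}=\Sigma_{1}/n_{1}$, by the characterisation above it is enough to show $E\big[(\tfrac{\Sigma_{3}+Y}{n_{3}+1}-\lambda)^{+}\mid\Sigma_{3},n_{3}\big]\le E\big[(\tfrac{\Sigma_{1}+Y}{n_{1}+1}-\lambda)^{+}\mid\Sigma_{1},n_{1}\big]$ for every $\lambda$. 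The post-pull mean from $(\Sigma,n)$ equals $\mu+\tfrac{Y-\mu}{n+1}$ with $\mu=\Sigma/n$ and $Y$ the predictive reward, and so has mean $\mu_{1}$ from either of $(\Sigma_{1},n_{1})$, $(\Sigma_{3},n_{3})$. Because the predictive reward is a mean-preserving contraction in the pseudo-count ($n_{3}=cn_{1}>n_{1}$ at the common predictive mean $\mu_{1}$; see \citet{shaked2007stochastic}) and, in addition, the mean-zero deviation $Y-\mu_{1}$ is divided by the larger quantity $n_{3}+1>n_{1}+1$, the post-pull mean from $(\Sigma_{3},n_{3})$ lies below that from $(\Sigma_{1},n_{1})$ in the convex order; as $x\mapsto(x-\lambda)^{+}$ is convex, the displayed inequality follows, so $\Lambda(\Sigma_{3},n_{3},\lambda)\le\Lambda(\Sigma_{1},n_{1},\lambda)$ for all $\lambda$ and the indices are ordered as claimed.

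\emph{Step two} is to show $\nu^{KGI}(\Sigma_{3},n_{3})>\nu^{KGI}(\Sigma_{2},n_{2})$. Put $\lambda_{2}:=\nu^{KGI}(\Sigma_{2},n_{2})$, so $\Lambda(\Sigma_{2},n_{2},\lambda_{2})=0$; subtracting this from $\Lambda$ evaluated at $(\Sigma_{3},n_{3})=(\Sigma_{3},n_{2})$ and $\lambda_{2}$ gives
\[
\Lambda(\Sigma_{3},n_{2},\lambda_{2})=(\mu_{1}-\mu_{2})+K\big(E\big[(\tfrac{\Sigma_{3}+Y}{n_{2}+1}-\lambda_{2})^{+}\mid\Sigma_{3},n_{2}\big]-E\big[(\tfrac{\Sigma_{2}+Y}{n_{2}+1}-\lambda_{2})^{+}\mid\Sigma_{2},n_{2}\big]\big).
\]
Here $\Sigma_{3}>\Sigma_{2}$ at the common pseudo-count $n_{2}$, so the predictive reward from $(\Sigma_{3},n_{2})$ stochastically dominates that from $(\Sigma_{2},n_{2})$ (\citet{shaked2007stochastic}); hence $\tfrac{\Sigma_{3}+Y}{n_{2}+1}$ stochastically dominates $\tfrac{\Sigma_{2}+Y}{n_{2}+1}$, the bracketed difference is non-negative, and since $K\ge 0$ we get $\Lambda(\Sigma_{3},n_{2},\lambda_{2})\ge\mu_{1}-\mu_{2}>0$; as $\Lambda(\Sigma_{3},n_{2},\cdot)$ is strictly decreasing with root $\nu^{KGI}(\Sigma_{3},n_{3})$, this forces $\nu^{KGI}(\Sigma_{3},n_{3})>\lambda_{2}$. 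Combining the two steps, $\nu^{KGI}(\Sigma_{1},n_{1})\ge\nu^{KGI}(\Sigma_{3},n_{3})>\nu^{KGI}(\Sigma_{2},n_{2})$, so a dominated arm always has strictly smaller KGI index than the arm dominating it and is therefore never selected; the same argument, with the regime-appropriate $K$, covers $T<\infty$ and $T=\infty$ alike. I expect the main obstacle to be purely a matter of pinning down (or citing cleanly) the two stochastic-ordering facts for exponential-family conjugate predictives used above --- monotonicity in the usual stochastic order with respect to the hyper-parameter $\Sigma$ at fixed $n$, and contraction in the convex order as the pseudo-count grows at fixed predictive mean --- everything else being routine; these specialise trivially in the Bernoulli case, where the one-step predictive is $\mathrm{Bern}(\Sigma/n)$ irrespective of $n$, and reduce to a variance comparison for Gaussian rewards.
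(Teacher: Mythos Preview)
Your proposal is correct and follows essentially the same route as the paper: both arguments pass through the intermediate state $(c\Sigma_{1},cn_{1})$ with $c=n_{2}/n_{1}$, establish that the KGI index is increasing in $\Sigma$ at fixed $n$ via the usual stochastic order on the predictive, and that it is decreasing along the ray $c\mapsto(c\Sigma,cn)$ via the convex order on the predictive (the paper cites \citet{yu2011structural} for the latter). The only cosmetic differences are that the paper works with the value function $V_{t}^{KG}$ and restricts its ray-monotonicity lemma to $\lambda\ge\Sigma/n$ (then invokes Eq.~(\ref{equ:kgi3})), whereas you work with the root of $\Lambda$ and obtain the ray monotonicity for all $\lambda$ directly by ordering the post-pull means in the convex order; your Step~two also yields a strict inequality, while the paper is content with $\ge$.
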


We establish this result via a series of results.

\begin{lemma}
$V_{t}^{KG}\left( \Sigma ,n,\gamma ,\lambda \right) $ and $\nu
_{t}^{KGI}\left( \Sigma ,n,\gamma \right) $ are both increasing in $\Sigma 
$\bigskip\ for any fixed values of $t,n,\gamma ,\lambda $.
\end{lemma}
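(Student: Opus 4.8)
The plan is to prove the two monotonicity claims together by induction on the horizon $t$, exploiting the recursive definition of $V_t^{KG}$ in Eq.~(\ref{equ:kgi2}). First I would set up the base case: at $t=1$ (or whatever the natural starting horizon is) the recursion reduces to $V_1^{KG}(\Sigma,n,\gamma,\lambda)=\max\{\frac{\Sigma}{n}-\lambda;0\}$, which is manifestly increasing in $\Sigma$ since $\frac{\Sigma}{n}$ is. For the inductive step I would assume $V_{t-1}^{KG}(\cdot,n',\gamma,\lambda)$ is increasing in its first argument for every $n'$, and examine the two terms inside the $\max$ on the right-hand side of Eq.~(\ref{equ:kgi2}). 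The constant term $0$ needs nothing; the active term $\frac{\Sigma}{n}-\lambda+\frac{\gamma(1-\gamma^{t-1})}{1-\gamma}E_Y[\max(\max(\frac{\Sigma+Y}{n+1},\lambda)-\lambda;0)\mid\Sigma,n]$ has to be shown increasing in $\Sigma$. The leading $\frac{\Sigma}{n}$ is increasing; the expectation term is $\frac{\gamma(1-\gamma^{t-1})}{1-\gamma}$ times $E_Y[(\frac{\Sigma+Y}{n+1}-\lambda)^+\mid\Sigma,n]$, and $(\frac{\Sigma+Y}{n+1}-\lambda)^+$ is nondecreasing in $\Sigma$ pointwise in $Y$.

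The subtlety — and the main obstacle — is that the \emph{distribution} of $Y$ also depends on $\Sigma$ through the predictive density $p(y\mid\Sigma,n)$, so it is not enough that the integrand be increasing in $\Sigma$ for fixed $y$; I need the expectation to be monotone once the sampling distribution shifts too. The clean way to handle this is a stochastic-ordering argument: for $\Sigma'>\Sigma$, the predictive $p(\cdot\mid\Sigma',n)$ stochastically dominates $p(\cdot\mid\Sigma,n)$ in the usual sense (this is a standard property of one-parameter exponential family predictives with conjugate priors, and is consistent with the convex-ordering facts already invoked from \citet{shaked2007stochastic} in the proof of Theorem~\ref{thm:pols1}). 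Since $(\frac{\Sigma+Y}{n+1}-\lambda)^+$ is nondecreasing in $Y$ for each fixed $\Sigma$, and also nondecreasing in $\Sigma$ for each fixed $Y$, combining pointwise monotonicity in $\Sigma$ with first-order stochastic dominance of the $Y$-distribution gives that the whole expectation is nondecreasing in $\Sigma$. Hence the active term is increasing in $\Sigma$, so $V_t^{KG}$, being a max of two terms each nondecreasing in $\Sigma$, is nondecreasing in $\Sigma$; strict increase on the region where the active action is optimal follows from the strict increase of $\frac{\Sigma}{n}$.

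Finally I would deduce the statement about the index $\nu_t^{KGI}$. By definition $\nu_t^{KGI}(\Sigma,n,\gamma)=\min\{\lambda:V_t^{KG}(\Sigma,n,\gamma,\lambda)=0\}$, and $V_t^{KG}$ is decreasing in $\lambda$ (noted already in the text after Eq.~(\ref{equ:kgi2})). If $\Sigma'>\Sigma$ then for any $\lambda$ with $V_t^{KG}(\Sigma,n,\gamma,\lambda)>0$ we have $V_t^{KG}(\Sigma',n,\gamma,\lambda)\ge V_t^{KG}(\Sigma,n,\gamma,\lambda)>0$, so the threshold $\lambda$ at which $V_t^{KG}(\Sigma',\cdot)$ hits zero is at least as large; that is exactly $\nu_t^{KGI}(\Sigma',n,\gamma)\ge\nu_t^{KGI}(\Sigma,n,\gamma)$. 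I expect the write-up to be short once the stochastic-dominance lemma for the predictive is stated; the only place requiring care is making that dominance claim precise for the discrete (Bernoulli, Poisson) as well as continuous (Exponential, Gaussian) members of the family, which can be done uniformly by the monotone-likelihood-ratio property of the predictive in $\Sigma$.
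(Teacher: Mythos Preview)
Your core argument is correct and matches the paper's: the key step is that the integrand $\bigl(\tfrac{\Sigma+Y}{n+1}-\lambda\bigr)^+$ is nondecreasing in both $\Sigma$ and $Y$, and the predictive $Y\mid\Sigma,n$ is stochastically increasing in $\Sigma$, so the expectation is increasing in $\Sigma$; the index claim then follows from the threshold characterisation together with monotonicity in $\lambda$. The paper's proof is exactly this, stated in two sentences.

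The one thing to clean up is the induction on $t$. You frame the proof as an induction exploiting a ``recursive definition'' of $V_t^{KG}$, but Eq.~(\ref{equ:kgi2}) is \emph{not} a recursion: $t$ enters only through the scalar multiplier $\gamma(1-\gamma^{t-1})/(1-\gamma)$, and there is no $V_{t-1}^{KG}$ on the right-hand side. Indeed, you state an inductive hypothesis about $V_{t-1}^{KG}$ and then never invoke it, because your analysis of the active term works directly from the closed-form expression. So the induction scaffolding is harmless but vacuous; drop it and argue directly for arbitrary $t$, and your write-up will be both shorter and cleaner --- essentially identical to the paper's.
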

\begin{proof}
Since the quantity $\left( \max \left( \frac{\Sigma +y}{n+1},\lambda \right)
-\lambda ;0\right) $ is increasing in $y$ and $Y\mid \Sigma ,n$ is
stochastically increasing in $\Sigma ,$ it follows easily that the
expectation on the right hand side of Eq. (\ref{equ:kgi2}) is increasing in $%
\Sigma $. The result then follows straightforwardly.
\end{proof}
We now proceed to consider the equivalent bandit, but with prior $\left(
c\Sigma ,cn\right) ,$ where $c>0$.
\begin{lemma}
$V_{t}^{KG}\left( c\Sigma ,cn,\gamma ,\lambda \right) $ is decreasing in $c$
for any fixed values of $t,\Sigma ,n,\gamma $ and for any $\lambda \geq 
\frac{\Sigma }{n}$.
\end{lemma}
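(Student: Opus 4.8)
The plan is to collapse Eq.~(\ref{equ:kgi2}) to a single scalar function of $c$ and then exploit the martingale structure of Bayesian updating. First note that $\max\left(\max\left(\tfrac{c\Sigma+y}{cn+1},\lambda\right)-\lambda;0\right)=\left(\tfrac{c\Sigma+y}{cn+1}-\lambda\right)^{+}$, so that
\[
V_{t}^{KG}(c\Sigma,cn,\gamma,\lambda)=\max\left\{\tfrac{\Sigma}{n}-\lambda+K\,\phi(c),\,0\right\},\qquad \phi(c):=E_{Y\mid c\Sigma,cn}\left[\left(M_{c}-\lambda\right)^{+}\right],
\]
where $M_{c}:=\tfrac{c\Sigma+Y}{cn+1}$ is the one-step posterior mean and $K=K(\gamma,t)\ge 0$ is the $c$-independent multiplier appearing in Eq.~(\ref{equ:kgi2}) (equal to $\tfrac{\gamma(1-\gamma^{t-1})}{1-\gamma}$, or $t-1$ when $\gamma=1$). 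Since $K\ge 0$ and the remaining terms $\tfrac{\Sigma}{n},\lambda,0$ do not involve $c$, it suffices to show that $\phi$ is decreasing in $c$. (The restriction $\lambda\ge\tfrac{\Sigma}{n}$ in the statement is harmless here; it is exactly what makes the threshold $L(c):=\lambda+c(\lambda n-\Sigma)$ in the equivalent form $\phi(c)=\tfrac{1}{cn+1}E_{Y\mid c\Sigma,cn}[(Y-L(c))^{+}]$ nondecreasing, but the convex-order route below works for all $\lambda$.)

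Second, since $x\mapsto(x-\lambda)^{+}$ is convex and the predictive mean is $\tfrac{\Sigma}{n}$, giving $E[M_{c}]=\tfrac{\Sigma}{n}$ for every $c$, it is enough to prove that $M_{c}$ becomes smaller in the convex order as $c$ increases. Writing $M_{c}=\tfrac{\Sigma}{n}+\tfrac{1}{cn+1}\bigl(Y-\tfrac{\Sigma}{n}\bigr)$ and noting that $Y-\tfrac{\Sigma}{n}$ is centred, this in turn reduces to the claim that the predictive law $p(\cdot\mid c\Sigma,cn)$ itself decreases in the convex order as $c$ grows: the factor $\tfrac{1}{cn+1}$ is positive and decreasing, adding the constant $\tfrac{\Sigma}{n}$ preserves the convex order, and rescaling a centred random variable by a smaller nonnegative constant can only decrease it in the convex order (one line from $h(tx)\le th(x)+(1-t)h(0)$ for convex $h$, combined with $E[h(aZ)]\ge h(0)$ by Jensen's inequality when $E[Z]=0$). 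Transitivity of the convex order then closes this step.

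Third — and this is where the real work lies — I would establish that $p(\cdot\mid c\Sigma,cn)$ concentrates in the convex order as $c$ increases. The natural approach is a coupling based on conjugacy: for $c'>c$ the prior of strength $c'$ is the prior of strength $c$ updated by a block of auxiliary (pseudo-)observations whose sufficient statistic is pinned at the prior mean, since $g(\cdot\mid c'\Sigma,c'n)\propto g(\cdot\mid c\Sigma,cn)\times(\text{likelihood of such pseudo-data})$; hence $Y_{c'}\sim p(\cdot\mid c'\Sigma,c'n)$ is distributed as a fresh observation conditioned on the event ``auxiliary sufficient statistic $=$ its mean'', while $Y_{c}$ is its unconditional law. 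Re-parametrising by the mean $\mu=\psi'(\theta)$ and writing each predictive as a mixture $\int Q_{\mu}\,d\pi_{c}(\mu)$ of the sampling laws $Q_{\mu}$, for any convex $h$ one has $E[h(Y_{c})]=E_{\pi_{c}}[\Phi]$ with $\Phi(\mu)=E_{Q_{\mu}}[h]$ convex in $\mu$ (a structural property of a one-parameter exponential family written in its mean parametrisation, checkable family-by-family: $\Phi$ is linear for Bernoulli, $E[h(\mu E)]$ for Exponential, etc.), so the claim collapses to $\pi_{c'}$ being smaller than $\pi_{c}$ in the convex order, i.e. the conjugate prior on the mean contracts as its strength grows. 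The main obstacle is precisely this last point: making rigorous that ``conditioning the auxiliary statistic to equal its mean is a mean-preserving contraction''. For the four families the paper uses (Bernoulli, Exponential, Poisson, Gaussian) this can be read off the closed-form priors/predictives directly, and in general it is an instance of standard convex-ordering results for mixtures in \citet{shaked2007stochastic}; I would give the direct verification and cite the general fact. Once $\phi$ is known to be decreasing, the lemma is immediate from the displayed expression for $V_{t}^{KG}$.
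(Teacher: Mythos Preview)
Your proof is correct and reaches the conclusion by a genuinely different decomposition from the paper's. The paper does not argue that the posterior mean $M_c$ itself decreases in convex order; instead it splits the comparison into two pieces: (i) for each fixed $y$ the integrand $\max\!\bigl(\tfrac{c\Sigma+y}{cn+1},\lambda\bigr)$ is pointwise nonincreasing in $c$ once $\lambda\ge\Sigma/n$ (this is exactly where that hypothesis is used, since for $y<\Sigma/n$ the ratio is increasing in $c$ but bounded above by $\Sigma/n\le\lambda$, so the max stays at $\lambda$), and (ii) the predictive law $Y\mid c\Sigma,cn$ decreases in convex order as $c$ grows, for which the paper simply cites \citet{yu2011structural}. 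Chaining (ii) then (i) gives the inequality. Your route instead shows $M_{c'}\le_{\mathrm{cx}} M_c$ directly via the centred-dilation step plus the same predictive concentration fact, which neatly bypasses the pointwise step and explains your observation that $\lambda\ge\Sigma/n$ is not actually needed for $\phi$ to be decreasing; the paper's restriction is an artefact of its decomposition (though harmless, since only $\lambda\ge\Sigma/n$ matters for the index in Eq.~(\ref{equ:kgi3})).

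On your ``third step'': you are doing more work than necessary. The statement that the conjugate predictive $p(\cdot\mid c\Sigma,cn)$ decreases in convex order as $c$ increases is precisely the result the paper takes from \citet{yu2011structural} (and is what underlies Theorem~1), so you can cite it rather than rebuild it. Your mixture sketch is fine for the four named families once you verify $\Phi(\mu)=E_{Q_\mu}[h]$ is convex case-by-case, but the general ``structural property'' claim would need more care, and the coupling heuristic (conditioning an auxiliary sufficient statistic to equal its mean) is suggestive rather than a proof. If you keep your decomposition, the clean write-up is: cite the predictive convex-order fact, then use your dilation/transitivity argument on $M_c=\tfrac{\Sigma}{n}+\tfrac{1}{cn+1}(Y-\tfrac{\Sigma}{n})$.
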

\begin{proof}
First note that for $y\geq \frac{\Sigma }{n},$ the quantity $\max \left( 
\frac{c\Sigma +y}{cn+1},\lambda \right) ,$ regarded as a function of $c,$ is
decreasing when $\lambda \geq \frac{\Sigma }{n}$. For $y<\frac{\Sigma }{n}%
,\max \left( \frac{c\Sigma +y}{cn+1},\lambda \right) =\lambda $ and hence is
trivially decreasing in $c$. Note also that the quantity $\max \left( \frac{%
c\Sigma +y}{cn+1},\lambda \right) ,$ regarded as a function of $y,$ is
increasing and convex. We also observe from \citet{yu2011structural} that $Y\mid
c\Sigma ,cn $ is decreasing in the convex order as $c$ increases. It then follows
that, for $c_{1}>c_{2}$ and for $\lambda \geq \frac{\Sigma }{n},$ 
\begin{align}
E_{Y}\left( \max \left( \frac{c_{1}\Sigma +Y}{c_{1}n+1},\lambda \right) \mid
c_{1}\Sigma ,c_{1}n\right)&\leq E_{Y}\left( \max \left( \frac{c_{1}\Sigma +Y%
}{c_{1}n+1},\lambda \right) \mid c_{2}\Sigma ,c_{2}n\right)\notag\\
&\leq E_{Y}\left( \max \left( \frac{c_{2}\Sigma +Y}{c_{2}n+1},\lambda \right)
\mid c_{2}\Sigma ,c_{2}n\right) 
\end{align}%
from which the result trivially follows via a suitable form of Eq. (\ref{equ:kgi2}).
\end{proof}
The following is an immediate consequence of the preceding lemma and
Eq. (\ref{equ:kgi3}).
\begin{corollary}
$\nu _{t}^{KGI}(c\Sigma,cn,\gamma)$ is decreasing in $c$ for any fixed
values of $t,\Sigma ,n,\gamma $.
\end{corollary}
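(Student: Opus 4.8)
The plan is to read off the index from Eq. (\ref{equ:kgi3}) as the threshold value of $\lambda$ at which $V_t^{KG}$ first reaches $0$, and then to transport the $c$-monotonicity of $V_t^{KG}$ supplied by the preceding lemma directly into a $c$-monotonicity of that threshold. The observation that makes this clean is that the posterior mean $\frac{c\Sigma}{cn}=\frac{\Sigma}{n}$ is invariant under the scaling $(\Sigma,n)\mapsto(c\Sigma,cn)$. Consequently the lower constraint $\lambda\geq\frac{\Sigma}{n}$ appearing in the second line of Eq. (\ref{equ:kgi3}) is the same for every $c$, and it coincides exactly with the range of $\lambda$ on which the preceding lemma guarantees that $V_t^{KG}(c\Sigma,cn,\gamma,\lambda)$ is decreasing in $c$.

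First I would fix $c_1>c_2>0$ and write $\lambda^{\ast}=\nu_t^{KGI}(c_2\Sigma,c_2 n,\gamma)$, so that by Eq. (\ref{equ:kgi3}) we have $\lambda^{\ast}\geq\frac{\Sigma}{n}$ and $V_t^{KG}(c_2\Sigma,c_2 n,\gamma,\lambda^{\ast})=0$. Since $\lambda^{\ast}\geq\frac{\Sigma}{n}$, the preceding lemma applies at this $\lambda^{\ast}$ and yields $V_t^{KG}(c_1\Sigma,c_1 n,\gamma,\lambda^{\ast})\leq V_t^{KG}(c_2\Sigma,c_2 n,\gamma,\lambda^{\ast})=0$. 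But $V_t^{KG}$ is defined in Eq. (\ref{equ:kgi2}) as a maximum with $0$ and so is everywhere non-negative; hence $V_t^{KG}(c_1\Sigma,c_1 n,\gamma,\lambda^{\ast})=0$. This places $\lambda^{\ast}$ in the feasible set $\{\lambda:\lambda\geq\frac{\Sigma}{n},\ V_t^{KG}(c_1\Sigma,c_1 n,\gamma,\lambda)=0\}$, whose minimum is by definition $\nu_t^{KGI}(c_1\Sigma,c_1 n,\gamma)$. Therefore $\nu_t^{KGI}(c_1\Sigma,c_1 n,\gamma)\leq\lambda^{\ast}=\nu_t^{KGI}(c_2\Sigma,c_2 n,\gamma)$, which is precisely the claimed monotonicity in $c$.

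The proof is short because the preceding lemma carries the analytic content; the two points I would be careful to spell out are the constraint-matching noted above (so that the lemma's hypothesis $\lambda\geq\frac{\Sigma}{n}$ is exactly the feasibility constraint of the index at every scaling $c$) and the use of non-negativity of $V_t^{KG}$ to upgrade the lemma's inequality $\leq 0$ to the equality $=0$ that membership in the $c_1$-feasible set requires. I do not anticipate any genuine obstacle; were one to want the analogous statement for the infinite-horizon index $\nu^{KGI}$ when $0<\gamma<1$, it would follow by letting $t\to\infty$, since the limit of a family that is decreasing in $c$ for each $t$ remains decreasing in $c$.
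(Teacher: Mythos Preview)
Your argument is correct and is precisely the ``immediate consequence of the preceding lemma and Eq.~(\ref{equ:kgi3})'' that the paper invokes without elaboration. The two points you take care to spell out---that the mean $\frac{c\Sigma}{cn}=\frac{\Sigma}{n}$ is $c$-invariant so the lemma's hypothesis matches the feasibility constraint at every scaling, and that non-negativity of $V_t^{KG}$ converts $\leq 0$ into $=0$---are exactly the details one needs to make the deduction rigorous, so your write-up simply makes explicit what the paper leaves to the reader.
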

It now follows trivially from the properties of the index $\nu _{t}^{KGI}$
established above that if $(\Sigma_{1},n_{1})$ dominates $(\Sigma_{2},n_{2})$
then
$\nu_{t}^{KGI}(\Sigma_{1},n_{1},\gamma)\geq\nu_{t}^{KGI}(\Sigma_{2},n_{2},\gamma)$
for any $t,\gamma $. It must also follow that
$\nu^{KGI}(\Sigma_{1},n_{1},\gamma)\geq\nu ^{KGI}(\Sigma_{2},n_{2},\gamma)$ when
$0<\gamma <1$. This completes the proof of the above theorem.

Closed form expressions for the indices $\nu _{t}^{KGI}$ are not usually
available, but are in simple cases. For the Bernoulli rewards case of
Subsection \ref{sssec:bern} we have that 
\begin{equation}
\nu _{t}^{KGI}\left( \Sigma ,n,\gamma \right) =\frac{\Sigma }{n}+\frac{%
\gamma \left( 1-\gamma ^{t-1}\right) }{\left( 1-\gamma \right) }\frac{\Sigma
\left( \Sigma +1\right) }{\left( n+1\right) \left\{ n+\frac{\gamma \left(
1-\gamma ^{t-1}\right) }{\left( 1-\gamma \right) }\Sigma \right\} }. 
\end{equation}%
In general numerical methods such as bisection are required to obtain the
indices. If the state space is finite it is recommended that all index
values are calculated in advance.

Fast calculation is an essential feature of KG but it should be noted that this
is not universal and that index methods are more tractable in general. An
example of this is the MAB with multiple plays (\citet{whittle1980multi}).
Here $m$ arms are chosen at each time rather than just one. Rewards are received
from each of the arms as normal. For an index policy the computation required is
unchanged - the index must be calculated for each arm as normal with arms chosen
in order of descending indices. The computation for KG is considerably larger
than when $m=1$. The KG score must be calculated for each possible
\emph{combination} of $m$ arms, that is $\binom{n}{m}$ times. For each of these
we must find the set of arms with largest expected reward conditional on each
possible outcome. Even in the simplest case, with Bernoulli rewards, there are
$2^m$ possible outcomes. For continuous rewards the problem becomes much more
difficult even for $m=2$. It is clear that KG is impractical for this problem.

An existing method with similarities to KG is the Expected Improvement
algorithm of \cite{jones1998efficient}. This is an offline method of which KG
can be thought of as a more detailed alternative. It was compared with KG in
\cite{frazier2009knowledge} in the offline setting. The Expected Improvement
algorithm is simpler than KG and always assigns positive value to the greedy
arm unless its true value is known exactly. Its arm values are ``optimistic'' in a
manner analogous to the PKG policy described above and it is reasonable
to conjecture that it shares that rule's avoidance of dominated actions (see
Theorem \ref{thm:pols1}). As an offline method it is not tested here but it may
be possible to develop an online version.

\section{Computational Study}
\label{sec:comp1}
This section will present the results of experimental studies for the Bernoulli
and Exponential MAB. A further study will be made for the Gaussian MAB in
Section \ref{sec:comp2}.
\subsection{Methodology}
All experiments use the standard MAB setup as described in Section \ref{sec:MAB}.
For Bernoulli rewards with $k=2$ policy returns are calculated using
value iteration. All other experiments use simulation for this purpose. These
are \emph{truth-from-prior} experiments i.e. the priors assigned to each arm are
assumed to be accurate.

For each simulation run a $\theta_a$ is drawn randomly from the prior for each
arm $a\in\{1,2,\ldots,k\}$. A bandit problem is run for each policy to be tested
using the same set of parameter values for each policy. Performance is measured
by totalling, for each policy, the discounted true expected reward of the arms
chosen. For each problem 160000 simulation runs were made.

In addition to the policies outlined in Section \ref{sec:newpols}, also tested
are the Greedy policy (described in Section \ref{sec:MAB}) and a policy based on analytical
approximations to the GI (\citet{brezzi2002optimal}), referred to here as GIBL. These approximations
 are based on the GI for a Wiener process and therefore assume Normally
 distributed rewards. However, they can be appropriate for other reward
 distributions by Central Limit Theorem arguments and the authors found that
 the approximation was reasonable for Bernoulli rewards, at least for $n$ not
 too small. Other papers have refined these approximations but, although they
 may be more accurate asymptotically, for the discount rates tested here they
 showed inferior performance and so only results for GIBL are given.
\subsection{Bernoulli MAB}
The first experiment tests performance over a range of $\gamma$ for
$k\in\{2,10\}$ arms, each with uniform $Beta(1,1)$ priors.
The mean percentage lost reward for five policies are given in Figure
\ref{fig:bmab_gamma}. The results for the
greedy policy are not plotted as they are clearly worse than the other policies
(percentage loss going from 0.64 to 1.77 for $k=2$).
\begin{figure}
\centering
\includegraphics[width=\textwidth]{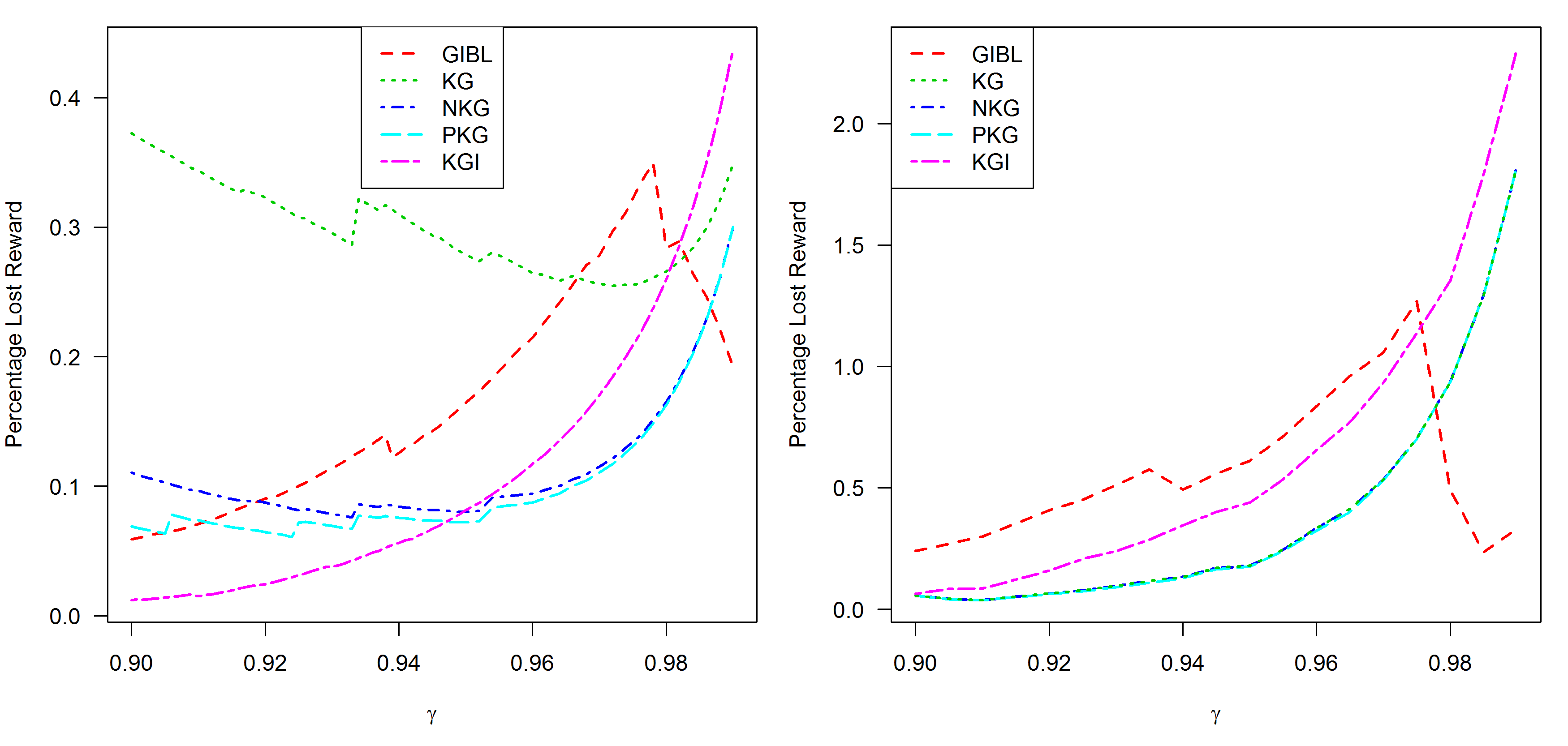}
\caption{Mean percentage of lost reward compared to the GI policy for five
policies for the Bernoulli MAB with uniform priors and $\gamma\in[0.9,0.99]$.
The left plot shows $k=2$ while on the right $k=10$.}
\label{fig:bmab_gamma}
\end{figure}
The overall behaviour of the policies is similar for $k=2$ and $k=10$. KGI is
strong for lower $\gamma$ but is weaker for higher $\gamma$ while GIBL is
strongest as $\gamma$ increases. The sharp change in performance for GIBL at
$\gamma\approx0.975$ occurs because the GIBL index is a piecewise function. Both
NKG and PKG improve on KG for $k=2$ but the three KG variants are almost identical for
$k=10$. The difference between KG and NKG gives the cost for the KG policy of
dominated actions. These make up a large proportion of the lost reward for KG
for lower $\gamma$ but, as $\gamma$ increases, over-greedy errors due to the
myopic nature of the KG policy become more significant and these are not
corrected by NKG. These errors are also the cause of the deteriorating
performance of KGI at higher $\gamma$. At $k=10$ the states given in Section
\ref{sec:domexamples} where KG was shown to take dominated actions occur
infrequently. This is because, for larger numbers of arms there will more often
be an arm with $\mu\geq0.5$ and such arms are chosen in preference to dominated
arms.

However, states where $\mu<0.5$ for all arms will occur more frequently when
arms have lower $\theta$. Here dominated actions can be expected to be more
common. We can test this by using priors where $\beta>\alpha$. Figure
\ref{fig:bmab_beta} shows the effect of varying the $\beta$ parameter for all
arms.
\begin{figure}
\centering
\includegraphics[width=\textwidth]{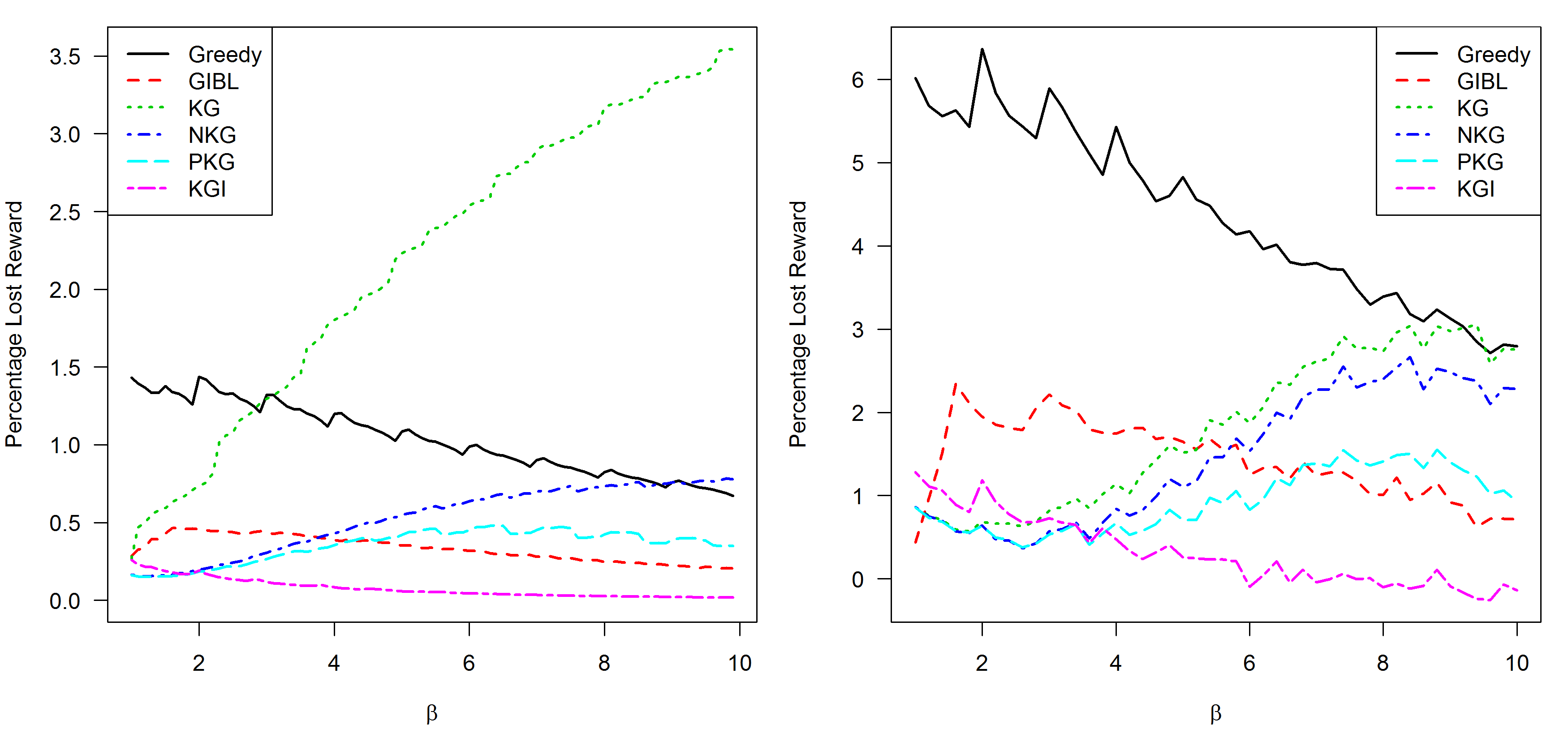}
\caption{Percentage lost reward relative to the GI policy for six policies for
the Bernoulli MAB with $\alpha=1,\beta\in[1,10]$ and $\gamma=0.98$. The
left plot shows $k=2$ while on the right $k=10$.}
\label{fig:bmab_beta}
\end{figure}
The discount rate $\gamma=0.98$ is quite a high value where the greedy policy
can be expected to perform poorly since exploration will be important. However 
as $\beta$ increases the performance of KG deteriorates to the extent that it is
outperformed by the greedy policy. This effect is still seen when $k=10$. The
superior performance of NKG shows that much of the loss of KG is due to
dominated actions. Policy PKG improves further on NKG suggesting that KG makes
further errors due to asymmetric updating even when it does not choose dominated
arms. A clearer example of this is given in Section \ref{sec:emab}. Both
policies based on GI approximations perform well and are robust to changes in
$\beta$. KGI is the stronger of the two as GIBL is weaker when
the rewards are less Normally distributed. 

The same pattern can also be seen to be present when arms have low success
probabilities but prior variance is high. Figure \ref{fig:bmab_alpha} gives
results for $\beta=1$ with low $\alpha$. The range shown focuses on lower
prior $\mu$ which correspond to $\beta\in[2,\ldots50]$ in the setup of
the previous experiment. The higher prior variance makes arms with higher
success probabilities more likely than in the previous experiment but as
$\alpha$ is reduced the performance of KG can still be seen to deteriorate
markedly. The other policies tested do not show this problem.
\begin{figure}
\centering
\includegraphics[width=\textwidth]{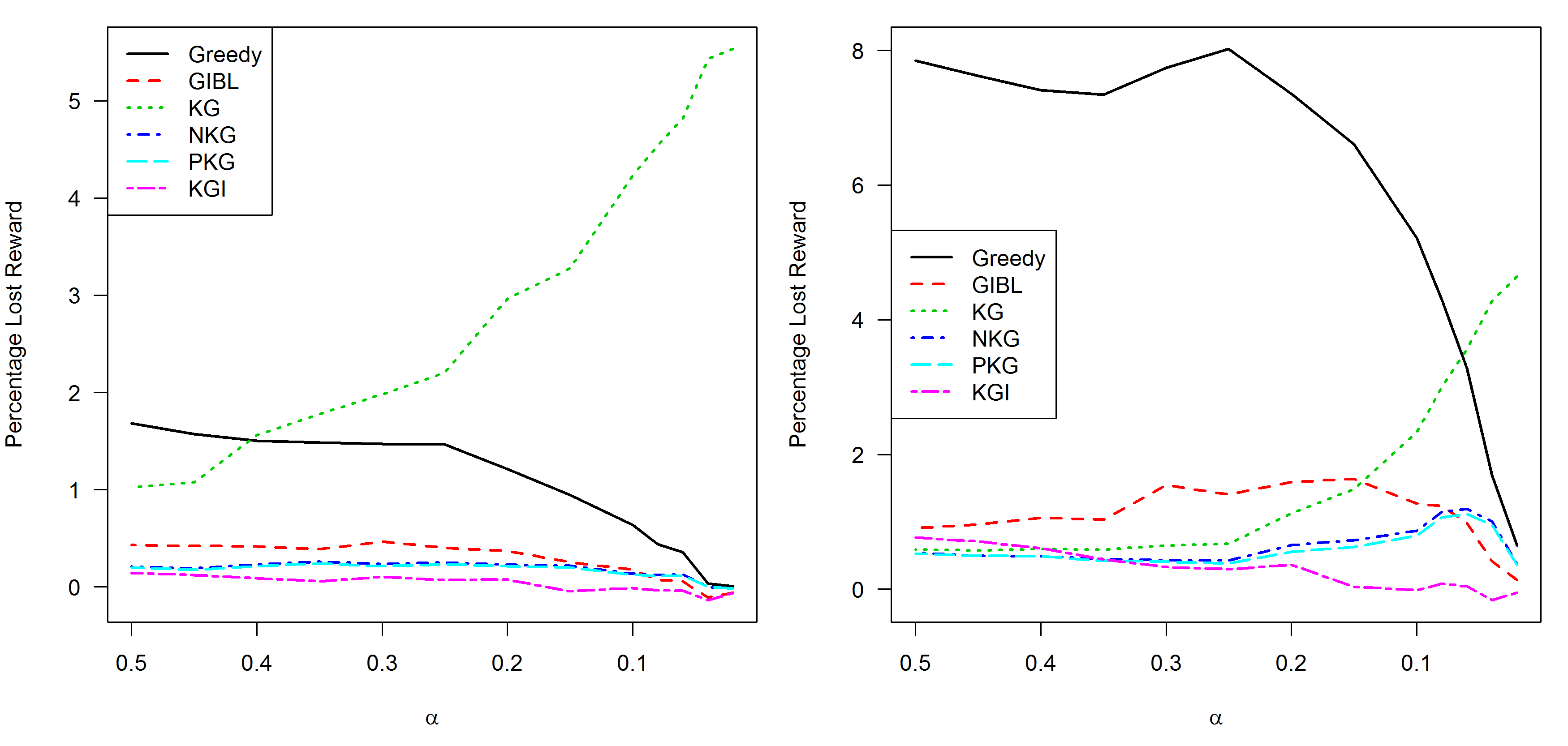}
\caption{Percentage lost reward relative to the GI policy for six policies for
the Bernoulli MAB with $\beta=1,\alpha\in[0.02,0.5]$ and $\gamma=0.98$. The
left plot shows $k=2$ while on the right $k=10$.}
\label{fig:bmab_alpha}
\end{figure}

Arms with low $\theta$ are common in many applications. For
example, in direct mail marketing or web based advertising where
$\theta$ is the probability that a user responds to an advert. The unmodified KG
is unlikely to be an effective method in such cases.

The equivalent plots with prior $\mu>1$ do not show any significant
changes in behaviour compared to uniform priors.

 Another policy that is popular in the bandit literature and which
 has good theoretical properties is Thompson Sampling
 (e.g. \citet{russo2014learning}). Results for this method are not given in
 detail here as its performance is far inferior on these problems to the other policies
 tested. For example, on the results displayed in Figure \ref{fig:bmab_gamma}
 losses were in the ranges from $1.3-4\%$ and $6-15\%$ for $k=2$ and $k=10$
 respectively with the best performance coming for $\gamma=0.99$.  It is a stochastic policy and so makes
 many decisions that are suboptimal (including dominated errors). Its strength
 is that it explores well in the limit over time, eventually finding the true
 best arm. However, with discounted rewards or when the horizon is finite
 it gives up too much short term reward to be competitive unless $\gamma$ is
 close to 1 or the finite horizon is long. In addition, note that it will spend
 longer exploring as $k$ increases as it seeks to explore every alternative.
 Performance on the other problems in this paper was similar and so are
 not given.
\subsection{Exponential MAB}
\label{sec:emab}
This section gives the results of simulations for policies run on the MAB with
Exponentially distributed rewards as outlined in Section \ref{sec:domexamples}.
These are shown in Figure \ref{fig:emab}. Here the lost reward is given relative
to the KG policy (the negative values indicate that the other policies
outperformed KG).
Different priors give a similar pattern of results.
\begin{figure}
\centering
\includegraphics[width=\textwidth]{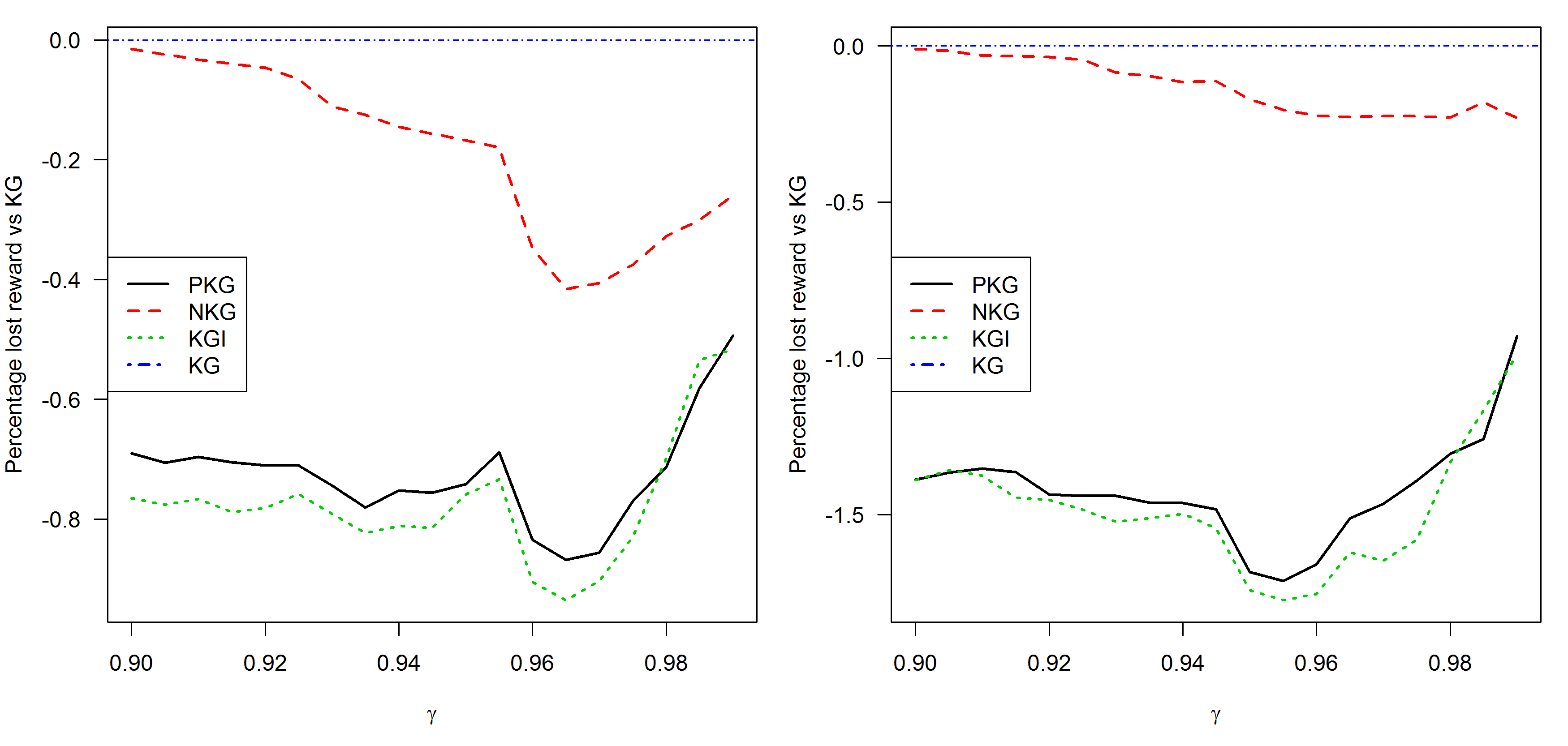}
\caption{Mean percentage of lost reward compared to the KG policy for three
policies for the Exponential MAB with Gamma(2,3) priors and
$\gamma\in[0.9,0.99]$.
The left plot shows $k=2$ while on the right $k=10$.}
\label{fig:emab}
\end{figure}

The results show a clear improvement over the KG policy by PKG and NKG
policies. Notably the PKG earns better reward than the NKG indicating that the
bias that causes dominated errors also causes suboptimal choices when arms
are not dominated. Policy KGI gives the best performance although similar to
PKG.

\section{The Gaussian MAB}
\label{sec:gauss}
Here we consider the Gaussian case $Y_{a}\mid \theta _{a}\backsim N(
\theta _{a},1) $ and $\theta _{a}\backsim N\left( \frac{\Sigma _{a}}{%
n_{a}},\frac{1}{n_{a}}\right) $. In the brief discussion in Section \ref{sec:dom} we
noted that KG does not take dominated actions in this case. While
\citet{ryzhov2012knowledge} give computational results which demonstrate that KG
outperforms a range of heuristic policies, the policy still makes errors. In
this section we describe how errors in the estimation of arms' learning bonuses
constitute a new source of suboptimal actions. We also elucidate easily
computed heuristics which outperform KG. A major advantage of KG cited by
\citet{ryzhov2012knowledge} is its ability to incorporate correlated beliefs
between arms. We will later show, in Section \ref{ssec:cnamb}, that it is
unclear whether KG enjoys a performance advantage in such cases.

We shall restrict the discussion to cases with $k=2$, $0<\gamma <1$ and $%
T=\infty $ and will develop a notion of \textit{relative learning bonus} 
\textit{(RLB) }which will apply across a wide range of policies for such
problems. We shall consider \textit{stationary policies }$\pi $ whose action
in state $\left( \mathbf{\Sigma ,n}\right) \equiv \left( \Sigma
_{1},n_{1},\Sigma _{2},n_{2}\right) $ depends only upon the precisions $n_{b}$
and the difference in means $\Delta \mu :=\frac{\Sigma
_{2}}{n_{2}}-\frac{\Sigma_{1}}{n_{1}}$. We shall write $\pi \left( \Delta \mu
,n_{1},n_{2}\right) $ in what follows. We further require that policies be
\textit{monotone} in the sense of the following definition of the RLB.

\begin{definition}[Relative Learning Bonus]
If $\pi $ is monotone in $\Delta \mu $ such that $\exists $ function $R^{\pi
}:\mathbb{N}^{2}\rightarrow\mathbb{R}$ with
$\pi\left(\Delta\mu,n_{1},n_{2}\right)=2\Leftrightarrow\Delta\mu\geq
R^{\pi}\left(n_{1},n_{2}\right)$ $\forall\left(n_{1},n_{2}\right)$ then $R^{\pi }$ is the RLB function.
\end{definition}

This monotonicity is a natural property of deterministic policies and
holds for all policies considered in this section since increasing $\Delta\mu$
while holding $n_1,n_2$ unchanged favours arm 2 in all cases. The RLB gives a
method of comparing the actions of index and non-index policies but it is also
useful when comparing index policies. A natural method of evaluating an index
policy would be to measure the difference in its indices from GI in the same
states. This can be inaccurate. An index formed by adding a constant to GI will
give an optimal policy so it is not the magnitude of the bias that is important
but how it varies. The RLB and the idea of index consistency (discussed later)
give methods to assess this distinction.

Under the above definition we can set $\Sigma _{1}=0$ without loss of
generality. We then have that $\Delta \mu =\mu _{2}$ and arm $2$ is chosen by
policy $\pi $ in state $\left( \mathbf{\Sigma ,n}\right) $ if and only if $\mu
_{2}\geq R^{\pi }\left( n_{1},n_{2}\right) $. Figure \ref{fig:RLB1} illustrates
this for the GI and KG policies, the former of which determines the optimal
\textit{RLB} values. The plots are of slices through the $R^{GI}$ and $R^{KG}$
surfaces with $n_{1}=1$ and with $\gamma $ set at $0.95$. As $n_{2}$ increases
the GI learning bonus for arm $2$ decreases, yielding values of
$R^{GI}\left(1,n_{2}\right) $ which are increasing and concave. Comparison with
the $R^{KG}\left(1,n_{2}\right)$ suggests that the latter is insufficiently
sensitive to the value of $n_{2}$. This is caused by a KG value close to zero
for arm $2$ when $n_{2}\geq 2$ and results in a mixture of over-exploration and
over-exploitation. In practice when the priors of the two arms are close,
over-exploration is the main problem. For $n_1>1$ the \textit{RLB} curves
have a similar shape but with smaller $R$ as the learning bonuses for
both policies decrease with increased information over time.
\begin{figure}
\centering
\includegraphics[width=\textwidth]{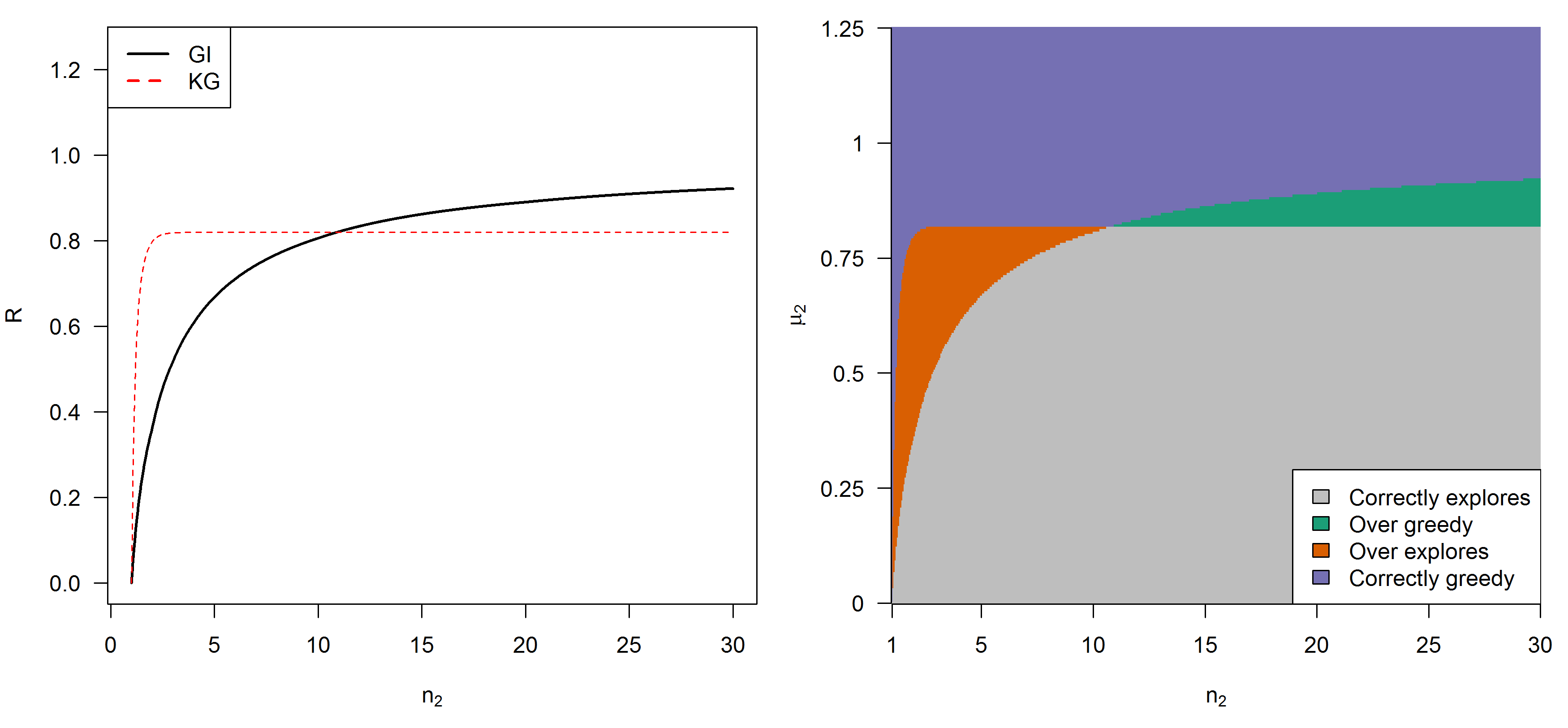}
\caption{The left plot shows RLB values for KG and GI policies for
$\gamma=0.95$, $n_1=1$, $\mu_1=0$. The right plot shows the
nature of KG actions over different arm 2 states.}
\label{fig:RLB1}
\end{figure}

Figure \ref{fig:RLB2} contains comparative plots of $R^{GI}\left( 1,n_{2}\right)
$ and $R^{\pi }\left( 1,n_{2}\right) $ for three other policies $\pi $ and with
$\gamma $ again set at $0.95$. The policies are KGI, described in Section
\ref{sec:newpols}, and two others which utilise analytical approximations to the
Gittins Index, namely GIBL (\citet{brezzi2002optimal}) and GICG
(\citet{chick2009economic}). Although the latter use similar approaches to
approximating GI their behaviour appear quite different, with GIBL
over-greedy and GICG over-exploring. This changes when
$\gamma $ is increased to $0.99$ where both policies over-explore. Although not
shown here, the approximation of GI by both GIBL and GICG improve as
$n_{1}$ increases and the corresponding \textit{RLB} curves are closer. A
suboptimal action is often less costly when over-greedy, especially for lower
$\gamma $ since immediate rewards are guaranteed while the extra information
from exploration might not yield any reward bonus until discounting has reduced
its value.
\begin{figure}
\centering
\includegraphics[width=\textwidth]{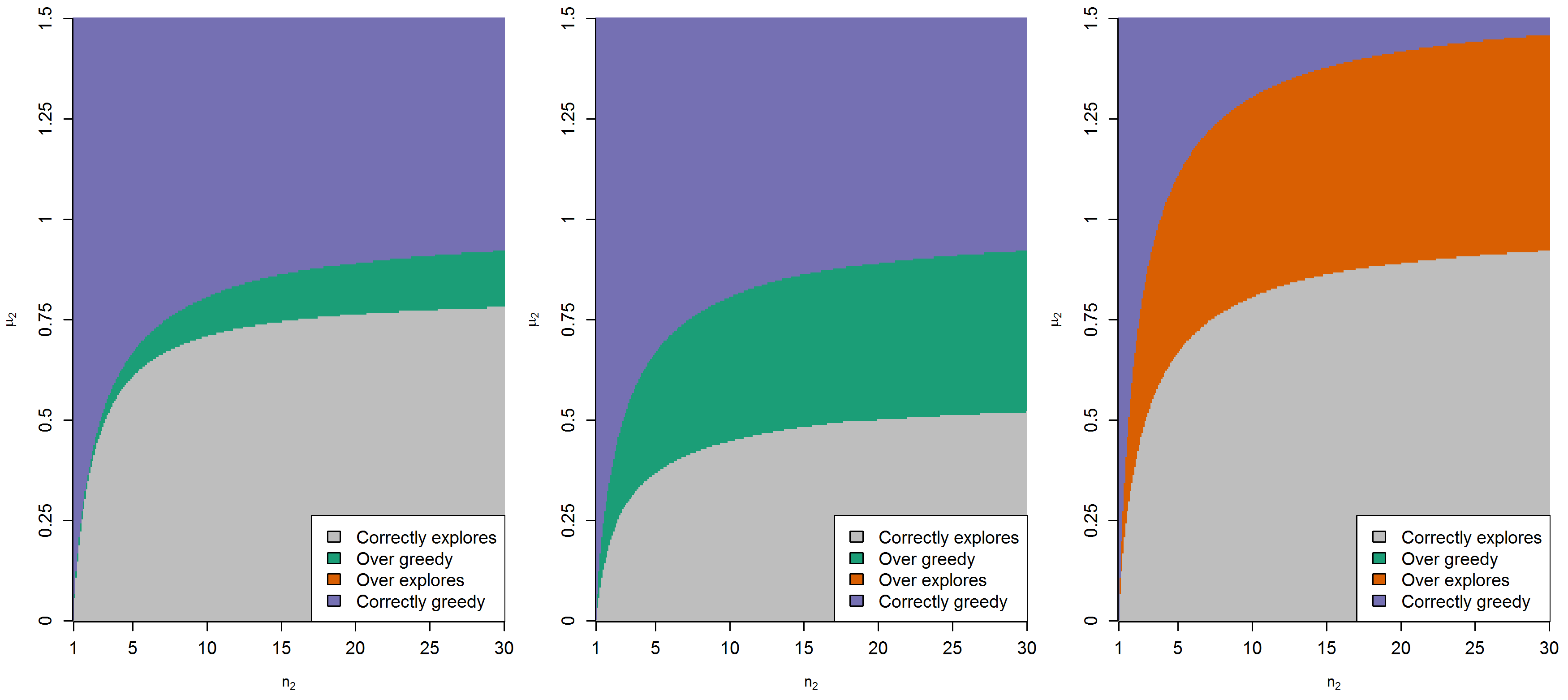}
\caption{Plots of actions for KGI, GIBL and GICG (from left to right) for
$\gamma=0.95$, $n_1=1$, $\mu_1=0$.}
\label{fig:RLB2}
\end{figure}
\citet{weber1992gittins} enunciates a desirable property for
policies which is enjoyed by the optimal GI policy. It can be thought of as a generalised
stay-on-a-winner property.

\begin{definition}
A policy is index consistent if, once an arm is chosen then it continues to
be chosen while its Gittins index remains above its value at the start of the
period of continuation.
\end{definition}

The region of over-exploration in the \textit{RLB }plot in Figure \ref{fig:RLB1}
yields states in which KG is not index consistent. It will simplify the proof and
discussion of the next result if we write the Gittins index for an arm in
state $\left( \Sigma ,n\right) $ as $\nu ^{GI}\left( \Sigma ,n\right) =\frac{%
\Sigma }{n}+l^{GI}\left( n\right) ,$ where $l^{GI}$ is the GI (ie,
optimal) learning bonus for the arm. Note that for notational economy we
have dropped the $\gamma $-dependence from the index notation. It now
follows from the above definition that $R^{GI}\left( n_{1},n_{2}\right)
=l^{GI}\left( n_{1}\right) -l^{GI}\left( n_{2}\right) $. More generally, if $%
\pi $ is an \textit{index policy }we use $l^{\pi }$ for the learning bonus
implied by $\pi ,$ with $R^{\pi }\left( n_{1},n_{2}\right) =l^{\pi }\left(
n_{1}\right) -l^{\pi }\left( n_{2}\right) $.

To prove Proposition \ref{prop:gauss2} we use that each policy over-explores,
as shown in Figures \ref{fig:RLB1} and \ref{fig:RLB2} for KG and GICG and for (e.g.) $\gamma=0.99$
for GIBL (not shown). The idea of the proof is that a policy that over-explores
overestimates the RLB of the arm with lower $n$. After the arm is pulled $n$
increases and its RLB is reduced. There are values of $y$ such that the arm's GI
will increase (as its reduction in RLB is smaller) but its $\mu$ will
not increase sufficiently to overcome the loss of RLB and so the policy
will switch arms. 
\begin{proposition}
\label{prop:gauss2}
Policies KG, GIBL and GICG are not index consistent.
\end{proposition}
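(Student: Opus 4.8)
The strategy the excerpt hands us is explicit: each of the three policies over-explores, i.e. its RLB function exceeds the optimal one, $R^{\pi}(n_1,n_2) > R^{GI}(n_1,n_2)$, in a suitable region (this is the content of Figures \ref{fig:RLB1}, \ref{fig:RLB2}, with the caveat that for GIBL one takes $\gamma = 0.99$ rather than $\gamma = 0.95$). I would first record this as the single quantitative input, and then show that over-exploration \emph{forces} a failure of index consistency. Fix a pair $(n_1, n_2)$ with $n_1 < n_2$ lying in the over-exploration region, and use the normalisation $\Sigma_1 = 0$ from the preceding discussion, so that the policy picks arm $2$ exactly when $\mu_2 \ge R^{\pi}(n_1,n_2)$. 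Choose the state so that $\mu_2$ sits just above $R^{\pi}(n_1,n_2)$ but strictly below $R^{GI}(n_1,n_2)$ — this is where over-exploration bites: arm $2$ is chosen by $\pi$ but its Gittins index is \emph{below} that of arm $1$, so arm $2$ is in fact suboptimal. Now pull arm $2$: it moves to informational state $(\Sigma_2 + Y, n_2 + 1)$ while arm $1$ is unchanged.

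**The continuation step.** The next step is to exhibit a realisation $Y = y$ of the reward such that (i) the Gittins index of arm $2$ \emph{increases} relative to its pre-pull value, yet (ii) the policy $\pi$ now switches to arm $1$. For (i), writing $\nu^{GI}(\Sigma,n) = \tfrac{\Sigma}{n} + l^{GI}(n)$, the change in the index of arm $2$ under the observation $y$ is $\big(\tfrac{\Sigma_2 + y}{n_2+1} - \tfrac{\Sigma_2}{n_2}\big) - \big(l^{GI}(n_2) - l^{GI}(n_2+1)\big)$; since $l^{GI}$ is decreasing, the bracketed bonus-loss term is positive but small, so any $y$ modestly above $\mu_2$ makes the total change positive. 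For (ii), the policy switches away from arm $2$ precisely when the new mean gap falls below the new threshold, i.e. when $\tfrac{\Sigma_2 + y}{n_2 + 1} < R^{\pi}(n_1, n_2 + 1)$; the key inequality that makes this compatible with (i) is that $R^{\pi}(n_1,n_2)$ is \emph{not} concave/decreasing in $n_2$ in the way $R^{GI}$ is — the excerpt's Figure \ref{fig:RLB1} discussion says precisely that $R^{KG}$ is "insufficiently sensitive to $n_2$", so $R^{\pi}(n_1, n_2+1)$ barely drops (or even rises) while $R^{GI}$ drops faster. Concretely I would show there is a window of $y$-values for which
\[
R^{\pi}(n_1, n_2 + 1) \;>\; \frac{\Sigma_2 + y}{n_2 + 1} \;>\; \mu_2 \;+\; \big(l^{GI}(n_2) - l^{GI}(n_2+1)\big),
\]
the left inequality giving the switch and the right inequality giving the index increase; this window is non-empty exactly because the policy over-explores at $(n_1,n_2)$ (so $R^{\pi}$ started above $R^{GI}$) and under-corrects at $(n_1,n_2+1)$. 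That contradicts index consistency, since arm $2$ was chosen and its Gittins index rose but it was nonetheless dropped.

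**Handling the three policies uniformly, and the obstacle.** I would phrase the core argument for a generic monotone policy $\pi$ that over-explores on a region and whose RLB is "flatter in $n_2$ than $R^{GI}$" there, and then verify the hypothesis separately for KG (Figure \ref{fig:RLB1}), for GICG (Figure \ref{fig:RLB2}, $\gamma = 0.95$), and for GIBL (same qualitative picture at $\gamma = 0.99$, as noted in the text). For the two index policies GIBL and GICG this is cleanest because $R^{\pi}(n_1,n_2) = l^{\pi}(n_1) - l^{\pi}(n_2)$, so "flatter in $n_2$" is literally the statement that $l^{\pi}$ decays more slowly than $l^{GI}$, which is what over-estimation of the learning bonus means; for KG one works directly with the $\nu_a^{KG}$ expression, using the fact established around Lemma \ref{lem:dom2} that $\nu_2^{KG}$ is essentially negligible once $n_2 \ge 2$ so the KG threshold $R^{KG}(1,n_2)$ is close to $-\,H(\gamma,s)\,\nu_1^{KG}$ and hence nearly independent of $n_2$. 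The main obstacle is making the two competing inequalities above simultaneously satisfiable with genuine numbers rather than a picture: one must pin down a concrete $(n_1, n_2, \gamma)$ and a concrete $y$ — almost certainly the simplest choice $n_1 = 1$, $n_2 = 2$ (or $n_2 = 3$), $\gamma = 0.95$ (resp.\ $0.99$ for GIBL) — and check the numeric gap between $R^{GI}$ and $R^{\pi}$ exceeds the Gittins bonus-loss $l^{GI}(n_2) - l^{GI}(n_2+1)$, which requires either a short appeal to the tabulated/approximated Gittins values already in play (via GIBL's closed form) or a direct monotonicity estimate; I would lean on the GIBL/CG closed-form approximations to produce the explicit constants and treat the KG case via the near-vanishing of $\nu_2^{KG}$.
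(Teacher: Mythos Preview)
Your setup contains an internal inconsistency that derails the argument. You correctly record over-exploration as $R^{\pi}(n_1,n_2) > R^{GI}(n_1,n_2)$, but then ask for a state with $\mu_2$ ``just above $R^{\pi}(n_1,n_2)$ but strictly below $R^{GI}(n_1,n_2)$'': that interval is empty. The over-exploration region is $R^{GI}(n_1,n_2) < \mu_2 < R^{\pi}(n_1,n_2)$, and there the policy $\pi$ chooses arm~$1$ (the low-$n$ arm, since $\mu_2 < R^{\pi}$), not arm~$2$. Consequently the arm whose Gittins index must be tracked after the pull is arm~$1$; your continuation step, your displayed window of $y$-values, and your discussion of ``$R^{\pi}$ flat in $n_2$'' all concern the wrong arm, and the claim that ``arm~$2$ is chosen by $\pi$ but its Gittins index is below that of arm~$1$'' describes over-exploitation, not over-exploration.

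The paper's proof is essentially the corrected version of your argument, but with one sharp simplification that removes your stated obstacle entirely. It takes $n_1 = 1$, $n_2 = 2$, so that after pulling arm~$1$ both arms have precision~$2$. By symmetry (of the Gaussian model for KG, or trivially of the learning bonus for the index policies GIBL and GICG), the policy then simply selects whichever arm has the higher mean, so it switches to arm~$2$ exactly when the new arm-$1$ mean $y/2$ lies below $\mu_2$. The required reward window is therefore just
\[
R^{GI}(1,2) \;<\; \tfrac{y}{2} \;<\; \mu_2,
\]
the left inequality forcing arm~$1$'s Gittins index to rise and the right forcing the switch; this window is non-empty because $\mu_2 > R^{GI}(1,2)$ was built into the initial state. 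There is no need to compare $R^{\pi}$ at two different precision pairs, no flatness hypothesis on $R^{\pi}$, and no numerical estimate of the bonus drop $l^{GI}(n_2)-l^{GI}(n_2+1)$ against the over-exploration gap: the choice $n_2 = n_1 + 1$ collapses all of that into a one-line symmetry observation.
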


\begin{proof}
For definiteness, consider policy KG. From the calculations
underlying Figure \ref{fig:RLB1} we can assert the existence of state $\left( \mathbf{\Sigma ,n}%
\right) $ such that $\Sigma _{1}=0,n_{1}=1,n_{2}=2$ and $2R^{KG}\left(
1,2\right) >\Sigma _{2}>2R^{GI}\left( 1,2\right) $, equivalently, $%
R^{KG}\left( 1,2\right) >\mu _{2}>R^{GI}\left( 1,2\right) ,$ when $\gamma
=0.95$. It follows that $A^{KG}\left( \mathbf{\Sigma ,n}\right) =1$ and KG
over-explores in this state. We suppose that the pull of arm $1$ under KG
in state $\left( \mathbf{\Sigma ,n}\right) $ yields a reward $y$ satisfying $%
\mu _{2}>\frac{y}{2}>R^{GI}\left( 1,2\right) =l^{GI}\left( 1\right)
-l^{GI}\left( 2\right)$. But $\nu ^{GI}\left( y,2\right) =\frac{y}{2}%
+l^{GI}\left( 2\right) >l^{GI}\left( 1\right) =\nu ^{GI}\left( 0,1\right) $
and so the Gittins index of arm $1$ has increased as a result of the reward $%
y$. However, the symmetry of the Normal distribution and the fact that $\mu
_{2}>y$ guarantees that KG will choose arm $2$ in the new state. Thus
while the Gittins index of arm $1$ increases, KG switches to arm $2$ and
hence is not index consistent. Regions of over-exploration for 
GIBL and GICG (in the former case when $\gamma =0.99$) means that a
similar argument can be applied to those policies. This concludes the proof.
\end{proof}

An absence of over-exploration does not guarantee index consistency for a
policy. However, we now give a sufficient condition for an index policy
never to over-explore and to be index consistent.

\begin{proposition} \label{prop:gauss1}
If index policy $\pi $ satisfies $0\leq R^{\pi }\left( n_{1},n_{2}\right)
\leq R^{GI}\left( n_{1},n_{2}\right) $ $\forall n_{1}<n_{2}$ then it never
over-explores and is index consistent.
\end{proposition}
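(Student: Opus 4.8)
The plan is to argue entirely within the index representation set up just before the proposition. Since $\pi$ is an index policy, write its index for an arm in state $(\Sigma,n)$ as $\frac{\Sigma}{n}+l^{\pi}(n)$, mirroring $\nu^{GI}(\Sigma,n)=\frac{\Sigma}{n}+l^{GI}(n)$, so that for $k=2$ policy $\pi$ chooses arm $2$ exactly when $\mu_2+l^{\pi}(n_2)\ge\mu_1+l^{\pi}(n_1)$, i.e. $\Delta\mu\ge R^{\pi}(n_1,n_2)=l^{\pi}(n_1)-l^{\pi}(n_2)$, and likewise $R^{GI}(n_1,n_2)=l^{GI}(n_1)-l^{GI}(n_2)$. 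The only fact about GI needed beyond this is that, by \citet{yu2011structural}, $l^{GI}$ is strictly decreasing, so $R^{GI}(n_1,n_2)>0$ whenever $n_1<n_2$. I would also note at the outset that $R^{\pi}(a,b)=l^{\pi}(a)-l^{\pi}(b)$ is antisymmetric, so the hypothesis is unaffected by which physical arm is labelled arm $1$, which legitimises the relabellings below.

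For ``$\pi$ never over-explores'': fix any state and relabel so that $n_1\le n_2$, making arm $1$ the (weakly) less informed, more exploratory arm. By definition $\pi$ over-explores in that state precisely when it pulls arm $1$ while the optimal GI policy pulls arm $2$; this forces $R^{GI}(n_1,n_2)\le\Delta\mu<R^{\pi}(n_1,n_2)$ and in particular $R^{\pi}(n_1,n_2)>R^{GI}(n_1,n_2)$. When $n_1=n_2$ both sides equal $0$, and when $n_1<n_2$ this contradicts the hypothesis $R^{\pi}(n_1,n_2)\le R^{GI}(n_1,n_2)$. Hence no such state exists.

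For index consistency: suppose $\pi$ chooses an arm at some epoch and relabel it arm $1$, so that there $\mu_1+l^{\pi}(n_1)>\mu_2+l^{\pi}(n_2)$ strictly (ties go to arm $2$ under the convention $\Delta\mu\ge R^{\pi}$). Throughout a continuation period the only state changes are successive pulls of arm $1$, so after $j\ge 1$ such pulls with realised cumulative reward $Y$ the state of arm $1$ is $(\Sigma_1+Y,\,n_1+j)$ with mean $\mu_1'=\frac{\Sigma_1+Y}{n_1+j}$ while arm $2$ is unchanged. The hypothesis defining the continuation period is that arm $1$'s Gittins index has stayed at or above its value at the start, i.e. $\mu_1'+l^{GI}(n_1+j)\ge\mu_1+l^{GI}(n_1)$, equivalently $\mu_1'-\mu_1\ge R^{GI}(n_1,n_1+j)$. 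Since $n_1<n_1+j$, the hypothesis gives $R^{GI}(n_1,n_1+j)\ge R^{\pi}(n_1,n_1+j)=l^{\pi}(n_1)-l^{\pi}(n_1+j)$, whence
\[
\mu_1'+l^{\pi}(n_1+j)\ \ge\ \mu_1+l^{\pi}(n_1)\ >\ \mu_2+l^{\pi}(n_2),
\]
so arm $1$'s $\pi$-index strictly exceeds arm $2$'s and $\pi$ again chooses arm $1$. A one-line induction over the epochs of the continuation period — at each step the reached state is exactly of the form above, which is not circular since the inductive hypothesis is precisely that $\pi$ has chosen arm $1$ so far — shows $\pi$ chooses arm $1$ throughout, which is Weber's index consistency.

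The algebra is trivial; the only points needing care are (i) fixing the working definitions of ``over-explores'' and of the ``period of continuation'' in Weber's property, and (ii) the relabelling bookkeeping just mentioned. It is also worth flagging in the write-up that only the upper bound $R^{\pi}\le R^{GI}$ is actually used; the lower bound $0\le R^{\pi}$ (equivalently, $l^{\pi}$ non-increasing) is the natural companion condition placing $\pi$ between the greedy and GI policies. I do not anticipate a genuine obstacle here — this is the short, clean consequence of the preceding setup.
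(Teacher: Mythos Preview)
Your index-consistency argument is essentially the paper's, with the welcome addition of making the multi-step induction explicit (comparing step $j$ directly to step $0$); the paper does a single step and leaves the iteration implicit.

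The over-exploration part differs in its working definition, and this matters for your closing remark about the lower bound. You take ``over-explores'' to mean ``$\pi$ pulls the less-informed arm while GI pulls the more-informed arm'', and under that reading your argument is correct and indeed uses only the upper bound $R^{\pi}\le R^{GI}$. The paper, however, reads ``over-explores'' as ``$\pi$ is non-greedy while GI is greedy'': its proof splits on the sign of $\mu_2$ (with $\mu_1=0$), identifies the greedy arm in each case, and shows $\pi$ agrees with GI whenever GI acts greedily. These two readings diverge in exactly one situation: after your relabelling $n_1<n_2$, when $\Delta\mu<0$ the greedy arm is arm $1$, which also happens to be the less-informed arm (arm $1$ dominates arm $2$). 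GI picks arm $1$, and under the paper's reading one must also rule out $\pi$ picking the dominated arm $2$. That is precisely where the paper invokes the lower bound: $R^{\pi}(n_1,n_2)\ge 0$ gives $\Delta\mu<0\le R^{\pi}(n_1,n_2)$, so $\pi$ too picks arm $1$. So your claim that only the upper bound is used is correct for your definition but not for the paper's; if you adopt the paper's reading you need the lower bound to close this dominated-arm case. Since you already flag that fixing the definition of ``over-explores'' is one of the points needing care, this is the place to reconcile the two.
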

\begin{proof}
Let state $\left( \mathbf{\Sigma ,n}\right) $ be such that $\Sigma _{1}=0$.
This is without loss of generality. For the over-exploration part of the
result, we consider two cases. In case $1$ we suppose that $\mu _{2}>0$ and
the GI policy chooses greedily when it chooses arm $2$. This happens when $%
\mu _{2}\geq R^{GI}\left( n_{1},n_{2}\right) $. If $n_{1}<n_{2}$ then the
condition in the proposition implies that $\mu _{2}\geq R^{\pi }\left(
n_{1},n_{2}\right) $ and policy $\pi $ must also choose arm $2$. If $%
n_{1}\geq n_{2}$ then the condition in the proposition implies that $R^{\pi
}\left( n_{1},n_{2}\right) \leq 0$ and hence that $\mu _{2}\geq R^{\pi
}\left( n_{1},n_{2}\right) $ trivially, which implies that policy $\pi $
continues to choose arm $2$. This concludes consideration of case $1$. In
case $2$ we suppose that $\mu _{2}\leq 0$ and so the GI policy chooses
greedily when it chooses arm $1$. If $n_{1}<n_{2}$ then we have $\mu
_{2}\leq 0\leq R^{\pi }\left( n_{1},n_{2}\right) $ while if $n_{1}\geq n_{2}$
then we must have that $\mu _{2}\leq R^{GI}\left( n_{1},n_{2}\right) \leq
R^{\pi }\left( n_{1},n_{2}\right) \leq 0$. Either way, policy $\pi $ also
chooses arm $1$ and case $2$ is concluded. Hence, under the condition in the
proposition, policy $\pi $ never explores when GI is greedy, and so never
over-explores. For the second part of the result suppose that in state $%
\left( \mathbf{\Sigma ,n}\right) ,$ index policy $\pi $ chooses arm $a$ and
that the resulting reward $y$ is such that $\nu ^{GI}\left( \Sigma
_{a}+y,n_{a}+1\right) >\nu ^{GI}\left( \Sigma _{a},n_{a}\right) ,$ namely
arm $a$'$s$ Gittins index increases. Under the condition in the proposition
we then have that%
\begin{align}
\frac{\Sigma _{a}+y}{n_{a}+1}&+l^{GI}\left( n_{a}+1\right)>\frac{\Sigma
_{a}}{n_{a}}+l^{GI}\left( n_{a}\right)\notag\\
&\Leftrightarrow \frac{\Sigma_{a}+y}{n_{a}+1}-\frac{\Sigma _{a}}{n_{a}}>R^{GI}\left( n_{a},n_{a}+1\right) \geq
R^{\pi }\left( n_{a},n_{a}+1\right)\notag\\
&\Rightarrow \frac{\Sigma _{a}+y}{n_{a}+1}+l^{\pi }\left( n_{a}+1\right) >%
\frac{\Sigma _{a}}{n_{a}}+l^{\pi }\left( n_{a}\right) 
\end{align}%
and we conclude that policy $\pi $ will continue to choose arm $a$. Hence $%
\pi $ is index consistent. This concludes the proof.
\end{proof}

\begin{conjecture}
On the basis of extensive computational investigation we conjecture that
policy KGI satisfies the sufficient condition of Proposition \ref{prop:gauss1}
and hence never over-explores and is index consistent. We have not yet succeeded
in developing a proof.
\end{conjecture}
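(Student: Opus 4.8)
The plan is to establish the two inequalities of Proposition~\ref{prop:gauss1} separately for the KGI policy, the first being routine and the second carrying all the content. For the lower bound, note that in the Gaussian case translation invariance forces $V_t^{KG}(\Sigma,n,\gamma,\lambda)$ to depend on $(\Sigma,\lambda)$ only through $\lambda-\Sigma/n$, so KGI is an index policy whose learning bonus $l^{KGI}(n)$ is independent of $\mu$ and $R^{KGI}(n_1,n_2)=l^{KGI}(n_1)-l^{KGI}(n_2)$. Writing the one-step posterior mean as $\mu^{+1}=\frac{\Sigma+Y}{n+1}=\frac{\Sigma}{n}+\sigma_nZ$ with $Z\sim N(0,1)$ and $\sigma_n^2=\frac{1}{n(n+1)}$, and putting $H=\frac{\gamma}{1-\gamma}$ for the $T=\infty$ value of $H(\gamma,s)$, the defining relation $\nu^{KGI}-\frac{\Sigma}{n}=H\,E[(\mu^{+1}-\nu^{KGI})^+]$ becomes, after the substitution $v=(\nu^{KGI}-\Sigma/n)/\sigma_n$, the scale-free equation $v=H\,g(v)$ with $g(v)=\phi(v)-v\bar\Phi(v)$, which has a unique positive root $v(H)$. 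Hence $l^{KGI}(n)=v(H)/\sqrt{n(n+1)}$, strictly decreasing in $n$, so $R^{KGI}(n_1,n_2)>0$ whenever $n_1<n_2$; the finite-horizon case is identical with $H$ replaced by $H(\gamma,s)$.

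For the upper bound, since $R^{GI}$ and $R^{KGI}$ are each the difference of a single function of the precision (namely $l^{GI}$, resp.\ $l^{KGI}$) evaluated at the two arguments, one has $R^{\pi}(n_1,n_3)=R^{\pi}(n_1,n_2)+R^{\pi}(n_2,n_3)$, so it is enough to prove $l^{GI}(n)-l^{GI}(n+1)\ge l^{KGI}(n)-l^{KGI}(n+1)$ for each $n$; equivalently, setting $e(n):=l^{GI}(n)-l^{KGI}(n)$, that $e$ is nonincreasing in $n$. That $e\ge 0$ is immediate: $V_t^{KG}$ arises from the retirement problem~(\ref{equ:kgi1}) by forcing the epoch-$2$ decision to be permanent, a strict restriction of the stopping strategies available in the unconstrained problem, so $V_t^{KG}\le V_t$ for every $\lambda$ and therefore $\nu^{KGI}\le\nu^{GI}$, i.e.\ $l^{KGI}\le l^{GI}$. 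Thus the whole content of the conjecture is the monotonicity in the precision of the approximation error $e$.

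I would attack that monotonicity by induction on the horizon $t$. In translated coordinates write the exact retirement value as $U_t(n,\delta)=\max\{-\delta+\gamma E_Z[U_{t-1}(n+1,\delta-\sigma_nZ)];0\}$ and its KG relaxation as $U_t^{KG}(n,\delta)=\max\{-\delta+H(\gamma,s)E_Z[(\sigma_nZ-\delta)^+];0\}$, with $l_t^{GI}(n)$ and $l_t^{KGI}(n)$ their smallest zeros in $\delta\ge 0$; one would show $e_t(n)=l_t^{GI}(n)-l_t^{KGI}(n)$ nonincreasing in $n$ for every $t$ and then pass to the limit $t\to\infty$, whose existence is guaranteed by the discussion preceding Theorem~\ref{thm:pols2}. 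The inductive step must transport a monotone-in-$n$ comparison through the Bellman operator, which simultaneously advances $n\mapsto n+1$ and contracts the innovation scale from $\sigma_n$ to $\sigma_{n+1}<\sigma_n$; the natural device is a coupling in which, after one observation, the precision-$n$ problem is dominated in the convex order---using the reward comparisons of \citet{yu2011structural} and \citet{shaked2007stochastic} already employed in the proofs of Theorems~\ref{thm:pols1} and~\ref{thm:pols2}---by a suitably rescaled version of the precision-$(n+1)$ problem, together with convexity of $U_t(n,\cdot)$ and of $(\cdot)^+$.

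The step I expect to be the genuine obstacle is precisely this transport. Unlike $l^{KGI}(n)$, which ``sees'' only the current innovation scale $\sigma_n$ and is therefore an exact multiple of it, the true bonus $l^{GI}(n)$ depends on the whole tail $\sigma_n,\sigma_{n+1},\sigma_{n+2},\dots$, and the ratios $\sigma_{n+k}/\sigma_n$ are themselves functions of $n$, so there is no closed form for $V_t$ to compute against and the monotone comparison need not survive the outer $\max$ with the retirement value. The obvious alternative---sandwiching $l^{GI}(n)$ between analytic Gittins-index approximations of the kind used for GIBL and GICG (\citet{brezzi2002optimal}, \citet{chick2009economic}) and comparing with the explicit $v(H)/\sqrt{n(n+1)}$---runs into the same difficulty, since those approximations are sharp only asymptotically in $n$ and do not yield the inequality uniformly; this is presumably why the result is stated here only as a conjecture.
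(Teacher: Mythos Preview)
The paper offers no proof of this statement: it is explicitly presented as a conjecture, with the authors writing ``We have not yet succeeded in developing a proof.'' So there is nothing to compare your proposal against, and your final sentence---that the obstacle you isolate ``is presumably why the result is stated here only as a conjecture''---is exactly right.

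That said, your proposal is not a proof either, and you are candid about this. What you have done is useful structuring: the lower bound $R^{KGI}(n_1,n_2)>0$ via the explicit form $l^{KGI}(n)=v(H)/\sqrt{n(n+1)}$ is correct and clean, and the reduction of the upper bound to monotonicity of the error $e(n)=l^{GI}(n)-l^{KGI}(n)$ is a genuine simplification (the telescoping argument and the inequality $e(n)\ge 0$ from $V_t^{KG}\le V_t$ are both sound). But the inductive transport step you sketch is, as you acknowledge, not closed: the convex-order coupling you propose would need to control how the Bellman operator interacts with the outer $\max\{\cdot,0\}$ across \emph{two different} precision sequences simultaneously, and there is no reason to expect the one-step convex-order comparison of \citet{yu2011structural} to propagate through that nonlinearity in the required direction. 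The alternative route through analytic Gittins approximations fails for the reason you give. So your proposal advances the problem by locating the difficulty precisely, but leaves the conjecture open---which is exactly the status the paper reports.
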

\subsection{Computational Study}
\label{sec:comp2}
This section gives the results of computational experiments on the MAB with
Normally distributed rewards (NMAB). The same methodology as in Section
\ref{sec:comp1} is used. As well as the basic MAB, also considered are the
finite horizon NMAB with undiscounted rewards (Section \ref{ssec:fhnamb}) and a
problem where arm beliefs are correlated (Section \ref{ssec:cnamb}). It extends
the experiments of \citet{ryzhov2012knowledge} by testing against more competitive policies (including GI) and by separating the
effects of finite horizons and correlated arms. In both of these latter problems the Gittins Index
Theorem (\citet{gittins2011multi}) no longer holds and there is no index
policy that is universally optimal. This raises the question of whether index policies
suffer on these problems in comparison to non-index policies such as KG.

A new parameter, $\tau$ for the observation precision is introduced for several
of these experiments so that $Y_{a}\mid \theta_{a}\backsim N(\theta_{a},\tau)$.
In Section \ref{sec:gauss} it was assumed $\tau=1$ but the results given there
hold for general $\tau$. The posterior for $\theta$ is now updated by
$p(\theta_a|y)=g(\theta|\Sigma_a+\tau y,n_a+\tau)$. We take $\tau$ to be known
and equal for all arms.

\subsubsection{Infinite Horizon, Discounted Rewards}
The first problem compares KG, KGI, GIBL, GICG against the optimal policy
on the standard NMAB over a range of $\tau$. The lost reward as a
percentage of the optimal reward is shown in Figure \ref{fig:nmab}.
\begin{figure}
\centering
\includegraphics[width=\textwidth]{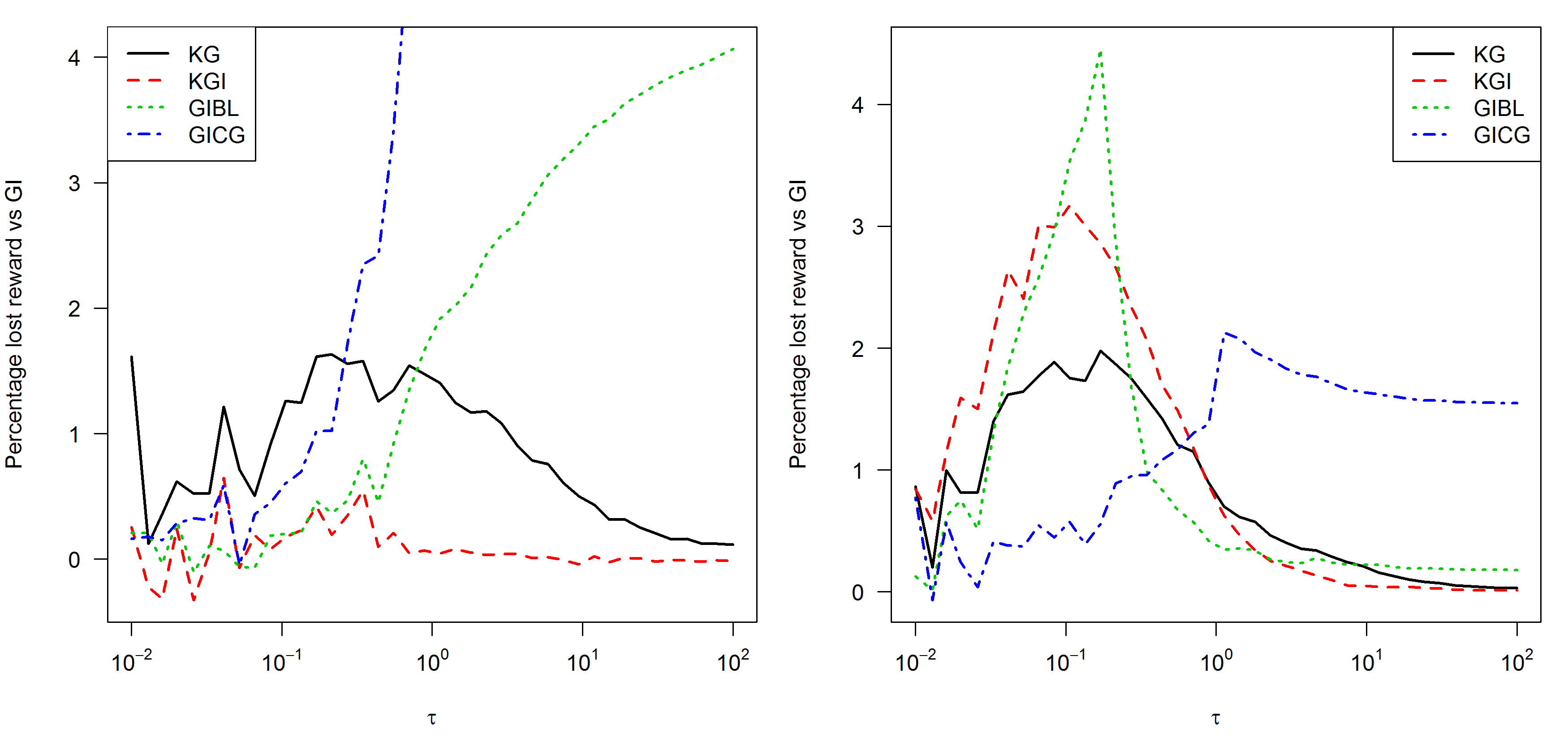}
\caption{Lost reward versus optimal for heuristic policies for the NMAB with
$\gamma=0.9$ (left) and $\gamma=0.99$ (right). There are $k=10$ arms each with a $N(0,1)$
prior.}
\label{fig:nmab}
\end{figure}
The plot does not show the loss for GICG for high $\tau$ and
$\gamma=0.9$ as it is very high relative to the other policies (rising to
$>38\%$). The Greedy policy has similarly poor performance.

\citet{ryzhov2012knowledge} used the GICG policy as a comparator for
the KG policy for the discounted Gaussian MAB. It was described as ``the current state
of the art in Gittins approximation''. These approximations are supposed to be
better than the older GIBL but it appears that the improvements are mainly for
large $n$ which may not result in improved performance. The RLB plots
earlier in this section suggest that the approximations for low $n$ are not
better and it is more important to be accurate in states reached at early times
due to discounting. As $\gamma$ becomes larger these early actions form a smaller
portion of total reward and are therefore less significant.

There is no one best policy for all problem settings. Policy KGI is uniformly
strong for $\gamma=0.9$ but is weaker for $\gamma=0.99$. Both KGI and KG
do well for high $\tau$. This is because more information is gained from a
single observation and so myopic learning policies become closer to optimal.
As $\tau$ becomes smaller it becomes important to consider more future
steps when evaluating the value of information. However when $\tau$ is very low
learning takes so long that a simple greedy approach is again effective.
Hence KG and KGI are weakest for moderate values of $\tau$ between 0.1 and 1,
depending on the number of arms.

\subsubsection{The Finite Horizon NMAB}
\label{ssec:fhnamb}
This section considers a variant on the NMAB where the horizon is a fixed length
and rewards are not discounted (FHNMAB). One strength of KG is that it adapts
easily for different horizon lengths and discount rates. GIBL and
GICG, however, are designed only for infinite horizons. \citet{ryzhov2012knowledge} got
round this problem by treating the discount rate as a tuning parameter. This
allowed them to run experiments on a single horizon length ($T=50$). However, it
is not ideal. Firstly, the tuning parameter will need to be different for
different horizons and there is no simple way to set this. Secondly, the policy
is not appropriate for the problem because a policy for a finite horizon should
be \emph{dynamic}, it should change over time by exploring less as the end time
approaches, whereas this policy is \emph{static} over time. We give a method
here by which any policy designed for an infinite discounted problem can be
adapted to a finite horizon one so that it changes dynamically with time. Note
that all KG variants given in this paper (including KGI) are already dynamic
when the horizon is finite so do not require any adjustment.
\begin{definition}[Finite Horizon Discount Factor Approximation]
A policy which depends on a discount factor $\gamma$ can be adapted to an
undiscounted problem with a finite horizon $T$ by taking
\begin{equation}
\gamma(t,T)=\frac{T-t-1}{T-t},\quad t=0,1,\ldots,T-1.
\end{equation}
\end{definition}
This chooses a $\gamma$ such that $\gamma / (1 - \gamma) = T - 1 - t$ so that
the ratio of available immediate reward to remaining reward is the same in the
infinite case with $\gamma$ discounting (LH side) as the undiscounted finite
case (RH side).

Figure \ref{fig:nmabf} shows percentage lost reward versus KG for KGI and the
adjusted GIBL (with KG shown as a straight line at zero). 

\begin{figure}
\centering
\includegraphics[width=\textwidth]{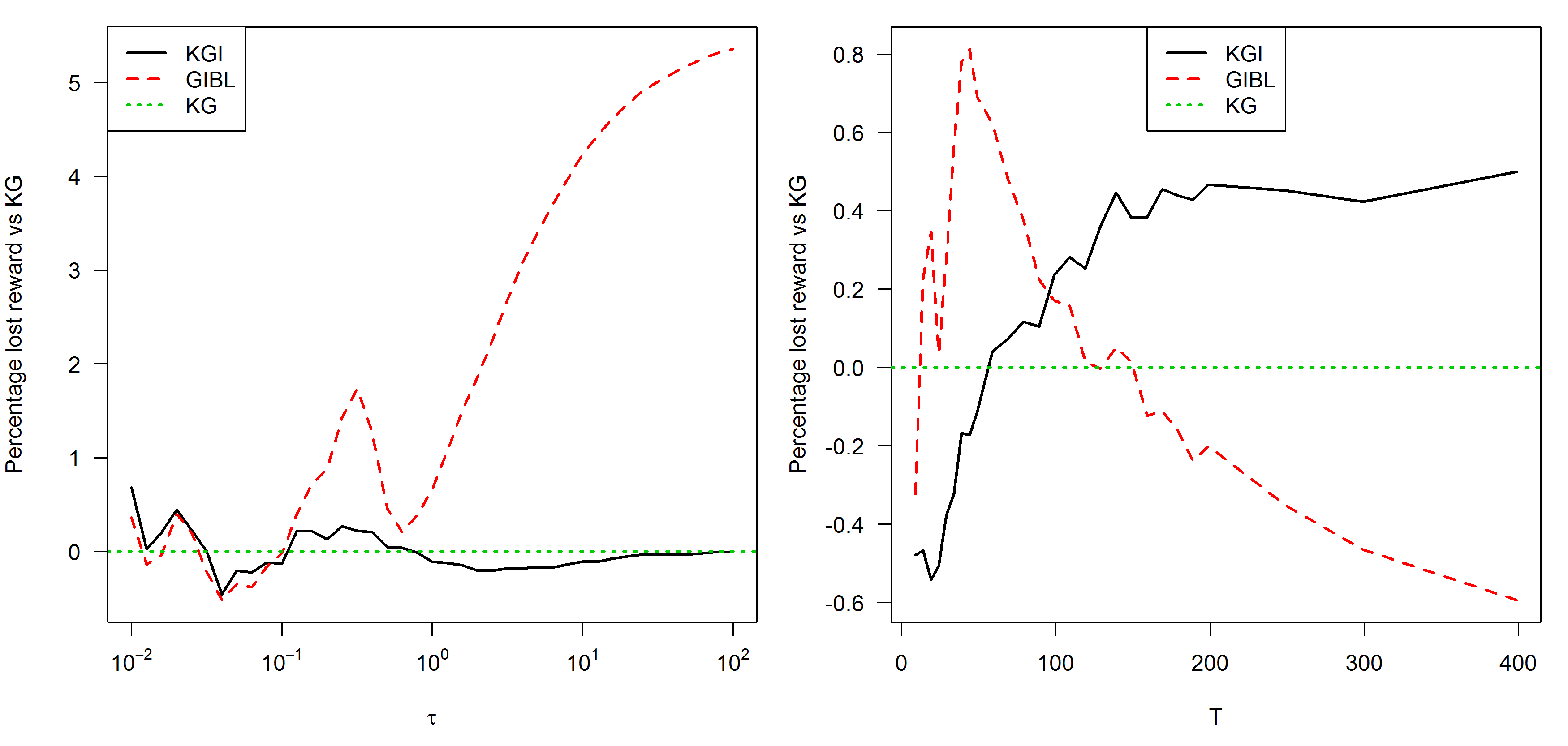}
\caption{Lost reward on the FHNMAB. Left shows performance over a
$\tau\in[10^{-2},10^2]$ with $T=50$, right over $T\leq400$ with $\tau=1$. All
$k=10$ arms had the same $N(0,1)$ prior.}
\label{fig:nmabf}
\end{figure}
Note that the scale of the vertical axis on the right plot is quite close
to zero so that no policies are very distinct from KG here. GIBL shows similar
behaviour to that seen in Figure \ref{fig:nmab} with infinite horizons,
performing similarly to KG at $\tau=1$ (worse for shorter horizons but better
for higher $T$) but doing very badly as $\tau$ increases above 1. KG and KGI show similar
results but KG is the preferred policy for $T\geq 60$.
\subsubsection{The Correlated NMAB}
\label{ssec:cnamb}
The NMAB variant where arm beliefs are correlated was studied in
\citet{ryzhov2012knowledge} where the ability of KG to handle correlated beliefs
was given as a major advantage of the policy. However, being able to incorporate
correlation into the KG policy does not mean performance will improve and the
experimental results that were given were mixed. A further short experimental
study is conducted here for several reasons. Firstly, as shown earlier in this
section, the GI approximation used (GICG) performs poorly in many circumstances
and the GIBL and KGI policies might offer a stronger comparison. Secondly,
\citet{ryzhov2012knowledge} used the finite horizon undiscounted version of the
problem. As described earlier the policies based on GI are not designed for this
problem so an artificial tuning parameter was introduced. Here we use infinite
horizon with discounted rewards as before. This makes it clearer to see the
effect of the introduction of correlation without the extra complication of the
different horizon.

The problem is the same as the NMAB described in Section \ref{sec:gauss}
except that beliefs are captured in a single multivariate Normal distribution
for all the arms rather than one univariate Normal for each arm. For each
simulation run the $\theta$ values for all arms are drawn from this true
multivariate prior. The belief correlation structure can take many different
forms but here we use the same the power-exponential rule used in
\citet{ryzhov2012knowledge}. Prior covariances of the variance-covariance matrix
$\boldsymbol{C}$ are
\begin{equation}
\boldsymbol{C}_{i,j}=e^{-\lambda(i-j)^2}.
\end{equation}
where the constant $\lambda$ determines the level of correlation (decreasing
with $\lambda$). In the experiments here all prior means are zero.

Two versions of KG are tested, the complete version that incorporates the
correlation (CKG) and a version that assumes the arms are independent (IKG).
Details of the CKG policy are given in \citet{ryzhov2012knowledge} using an
algorithm from \citet{frazier2009knowledge}. All policies tested (including
IKG) use the true correlated model when updating beliefs but, apart from CKG,
choose actions that make the false assumption that arms are independent.
Updating beliefs using the independence assumption results in much poorer
performance in all cases and therefore these results are not shown. CKG is
significantly slower than the other policies, scaling badly with $k$. This
limits the size of the experiment so 40000 runs are used.
The results over $\lambda\in[0.05,1]$ are shown in Figure \ref{fig:nmabc}.
\begin{figure}
\centering
\includegraphics[width=\textwidth]{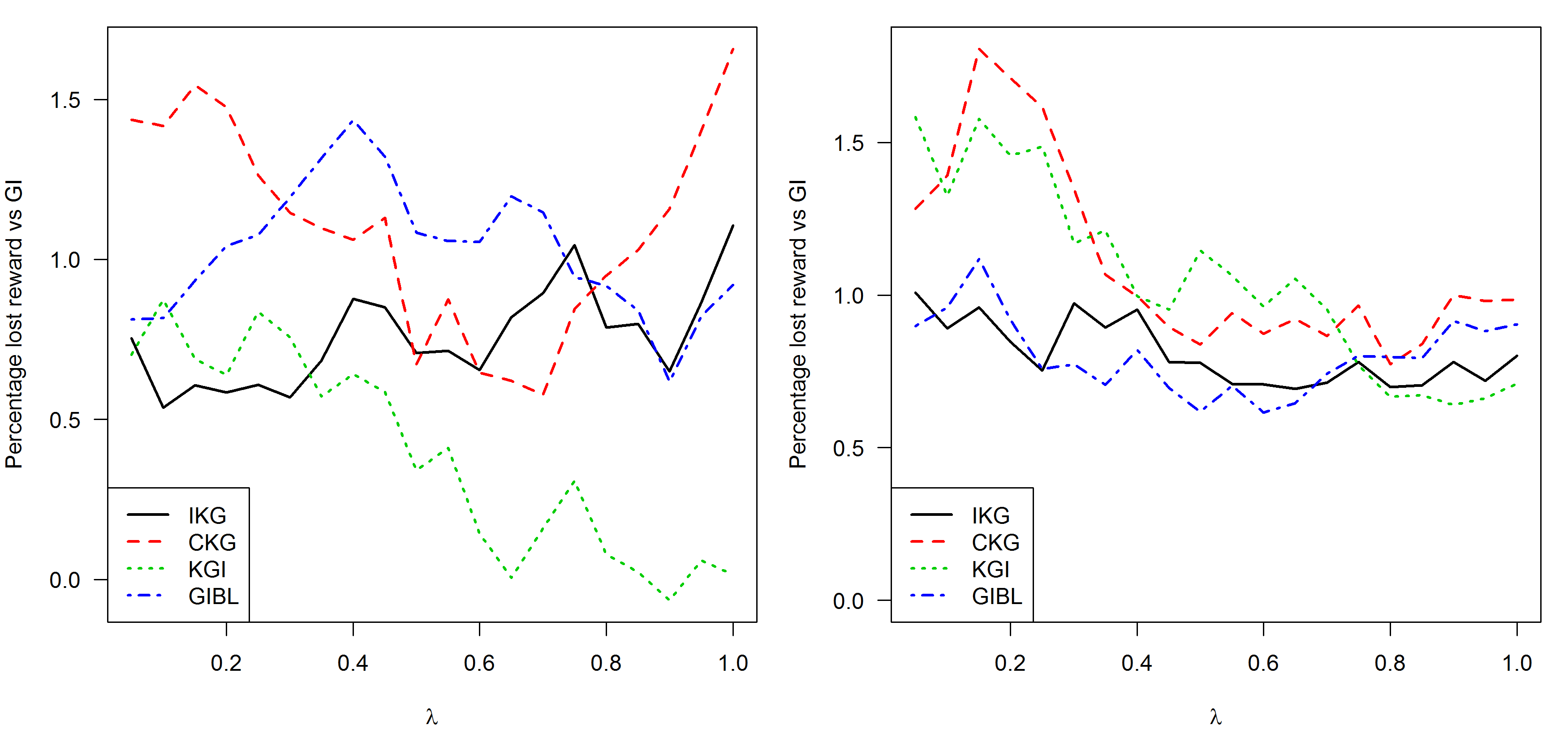}
\caption{Lost reward vs GI on the Correlated NMAB for $\lambda\in[0.05,1]$
with $\gamma=0.9$ (left) and $\gamma=0.99$ (right). Both use $k=10$ with
$\tau=1$.}
\label{fig:nmabc}
\end{figure}
The first observation is that, although GI is not optimal for this problem, it
still clearly outperforms all the other heuristics indicating that using a index
policy is not an obvious handicap. The GI approximation policies'
performance follows a similar pattern to the independent NMAB with KGI
stronger at $\gamma=0.9$ and GIBL stronger at $\gamma=0.99$. IKG compares well
to both these policies but again there is no evidence that non-index methods are
stronger than index methods. More surprising is that CKG is clearly inferior to
IKG. \citet{frazier2009knowledge} found CKG to be stronger in the offline
problem but this does not appear to translate to the online problem. Exactly why
this is so is not clear as this is a difficult problem to analyse. CKG
requires $\mathcal{O}(k^2\log(k))$ to compute compared to IKG
which requires $\mathcal{O}(k)$. CKG's performance would have to be much better
to justify its use and in many online problems with larger $k$ it would simply not
be practical while IKG and the three simple index policies all scale well.

These experiments only scratch the surface of the correlated MAB
problem and there are a number of possible issues. As rewards are observed the
correlations in beliefs reduce and the arms tend to independence. Therefore
correlations will be most important in problems with short horizons or steep discounting. Secondly, the number of arms used here is quite small. This is
partly because CKG becomes very slow as $k$ increases so its use on larger
problems would not be practical. A feature of correlated arm beliefs is that we
can learn about a large number of arms with a single pull and therefore
independence assumptions should be punished with greater numbers of arms.
However we still learn about multiple arms as long as belief updating is
handled accurately which is easy to do in this Gaussian setting. If this is not
done then learning will be much slower and we did find that it is
important that belief updating incorporates correlations.

One difficulty with analysing policies on this problem is that it still
matters that the policy is effective on the basic MAB problem. Suboptimality in
this regard can mask the effect of introducing correlation and changes may
improve or worsen performance quite separately from addressing the issue of
correlations. For example if a policy normally over-explores then any change
that makes it greedier might improve performance. Thompson Sampling is a policy
that can easily incorporate correlations (by sampling from the joint posterior)
but the high level of exploration that comes from randomised actions does not do
well on short horizon problems and any changes due to correlations will be too subtle to
change that.

\section{Conclusion}
\label{sec:conclusion}
We identify an important class of errors, \emph{dominated actions} which are
made by KG. This involves choosing arms that have \emph{both} inferior
exploitative and explorative value. Much of the existing work on KG has focused
on Gaussian rewards but these have features (symmetric and unbounded
distributions) that avoid the domination problem. For other reward
distributions the performance of KG can suffer greatly. Two new variants are
given which remove this problem. Of these, NKG is simpler while PKG gives better
experimental results by correcting errors besides those arising from dominated actions.

We also introduced an index variant of KG which avoids dominated actions, which
we called KGI. For problems where the optimal policy is an index policy,
simulation studies indicate that KGI is more robust than other proposed index
policies that use approximations to the optimal index. It has computational
advantages over KG and performed competitively in empirical studies on all
problems tested including those where index methods are known to be suboptimal.
One such problem is the MAB with correlated beliefs. Although KG can incorporate
these beliefs it was found that any performance gain was, at best, small and did
not justify the extra computation involved.

The new variants we introduce give a range of simple heuristic policies, of both
index and non-index type. On the problems tested here at least, there did not
appear to be any advantage to using non-index methods and, in addition, index
methods have computational advantages on some BSDPs. However, this may not
always be the case and further research will be needed to be more confident on
other variants of this problem.

\section*{Acknowledgements}
We gratefully acknowledge the support of the EPSRC funded EP/H023151/1 STOR-i
centre for doctoral training. We also thank the reviewers for their careful
reading of the paper and their helpful and thoughtful comments.
\bibliographystyle{myapalike}
\bibliography{KG_paper}{}
\end{document}